\DeclareMathAlphabet\mathbfcal{OMS}{cmsy}{b}{n}
\newcommand{\bbR}{\mathbb{R}}
\newcommand{\bbN}{\mathbb{N}}
\newcommand{\cS}{\mathcal{S}}
\newcommand{\overbar}[1]{\mkern 1.5mu\overline{\mkern-1.5mu#1\mkern-1.5mu}\mkern 1.5mu}
\newcommand{\bfu}{{\mathbf{u}}}
\newcommand{\bfv}{{\mathbf{v}}}
\newcommand{\bfx}{{\mathbf{x}}}
\newcommand{\z}{{\mathbf{z}}}
\newcommand{\X}{{\mathbf{X}}}
\newcommand{\I}{{\mathbf{I}}}
\newcommand{\cM}{{\mathcal{M}}}
\newcommand{\range}{\text{range}}
\newcommand{\bfV}{{\mathbf{V}}}
\newcommand{\Z}{{\mathbf{Z}}}
\newcommand{\iso}{\text{iso}}
\newcommand{\cL}{\mathcal{L}}
\newcommand{\argmin}{\mathop{\rm arg\min}}
\newcommand{\bbE}{\mathbb{E}}
\newcommand{\ie}[0]{\emph{i.e., }}
\newcommand{\eg}[0]{\emph{e.g., }}
\crefname{algocf}{alg.}{algs.}
\Crefname{algocf}{Algorithm}{Algorithms}
\title{Improved Prediction and Network Estimation Using the Monotone Single Index Multi-variate Autoregressive Model}
\author{\name Yue\ Gao \email ygao266@wisc.edu \\
       \addr Department of Statistics\\
       University of Wisconsin Madison\\
       Madison, WI 53703, USA
       \AND
       \name Garvesh \ Raskutti \email garvesh@gmail.com \\
       \addr Department of Statistics\\
       University of Wisconsin Madison\\
       Madison, WI 53703, USA
       }
\begin{document}
\maketitle

\begin{abstract}
Network estimation from multi-variate point process or time series data is a problem of fundamental importance. Prior work has focused on parametric approaches that require a known parametric model, which makes estimation procedures less robust to model mis-specification, non-linearities and heterogeneities. In this paper, we develop a semi-parametric approach based on the monotone single-index multi-variate autoregressive model (SIMAM) which addresses these challenges. We provide theoretical guarantees for dependent data, and an alternating projected gradient descent algorithm (based on~\cite{dai2021convergence}). Significantly we do not explicitly assume mixing conditions on the process (although we do require conditions analogous to restricted strong convexity) and we achieve rates of the form $O(T^{-\frac{1}{3}} \sqrt{s\log(TM)})$ (optimal in the independent design case) where $s$ is the threshold for the maximum in-degree of the network that indicates the sparsity level, $M$ is the number of actors and $T$ is the number of time points. 
In addition, we demonstrate the superior performance both on simulated data and two real data examples where our SIMAM approach out-performs state-of-the-art parametric methods both in terms of prediction and network estimation.
\end{abstract}

\section{Introduction}
   
Multi-variate time series or point process data arises in a number of settings such as social networks~\cite{zhou2013learning,richey2008autoregressive,mark2019estimating,Raginsky_2012}, crime networks~\cite{stomakhin2011reconstruction,Mark_2019,egesdal2010statistical}, electrical systems~\cite{ertekin2015reactive}, neuroscience~\cite{brown2004multiple,hall2015online,smith2003estimating,fujita2007modeling} and many others. One of the questions of interest in multi-variate time series/point process data is estimating an \emph{influence network} which captures the temporal influence amongst different nodes.
For instance, in a social network, different nodes represent different individuals or media sources, whose behaviours, such as posting articles or reporting hot events, can be observed through time. By investigating such time-stamped data, we seek to discover the flow of information or potential communities through the inference of the underlying influence network.

There is a large body of recent work on parametric models and estimators for learning influence networks (see e.g.~\cite{hall2016inference, Mark_2019}). In many scenarios, underlying non-linearities and heterogeneities make it difficult to posit a parametric model. Furthermore, parametric models often do not yield good prediction performance due to their lack of flexibility and inability to model non-linearities. In this work, we use a semi-parametric network estimation approach that addresses these challenges. In particular, rather than using standard parametric approaches, we use the monotone single index model (SIM) for network estimation. 

The monotone single index model (MSIM) has been widely used in many settings~\cite{foster2013variable,balabdaoui2019least, groeneboom2019estimation}. The semi-parametric construction allows the interpretation using the ``parametric'' part while the ``non-parametric'' part allows the flexibility to model non-linearities and misspecified link functions. Typically, the non-parametric function is assumed to be monotone, which covers a number of interesting examples, including all generalized linear models and many other examples and this function does not need to be pre-specified. From a statistical and algorithmic perspective, MSIM in high dimensions presents a number of technical challenges (see e.g.~\cite{chen2014generalised,foster2013variable}). Many of these have been addressed in settings where we have independent samples~\cite{foster2013variable,balabdaoui2019least}.

In the context of multi-variate time series and point process models, the MSIM provides a natural semi-parametric framework by modelling each time series as a separate MSIM, where the features/co-variates are the data from previous time points.
One of the major theoretical challenges with applying the MSIM to autoregressive point processes is providing theoretical guarantees while accounting for the complex nonlinear dependence. 

This paper addresses this challenge by providing an alternating PGD algorithm and its theoretical guarantees for the monotone single index multi-variate autoregressive model (SIMAM). Significantly, we do not explicitly assume any mixing condition on the multi-variate time series as is done in~\cite{zhou2018non}, although we do require conditions analogous to restricted strong convexity. We also support our theoretical findings by giving empirical evidence through simulations and real data examples, illustrating the superior performance for monotone SIMAM in terms of prediction and variable selection compared to existing state-of-the-art approaches.

\subsection{Related works}
Various parametric approaches have been widely explored in a large body of prior work to learn the influence network from multi-variate time series or point process data. One standard approach is the vector autoregressive (VAR) model
\cite{lutkepohl2013vector,canova1995vector,2008ssvar_lasso}. 
To avoid the limitations of VAR in non-Gaussian or non-linear autoregressive processes, vector generalized linear autoregressive (GLAR) model 
\cite{hall2016inference,dunsmuir2015generalized,hall2018learning,shephard1995generalized}
is proposed and widely used as an extension of VAR, in which non-linear structure is introduced by a known link function according to prior knowledge. 
By specifying the link function and the conditional distribution within the exponential family, GLAR can be adjusted to many specific models, such as the Bernoulli autoregressive model \cite{p2019highdimensional} and log-linear Poisson autoregressive (PAR) model
\cite{fokianos2009poisson,zhu2011estimation}.
Although these models are more flexible, they are still restricted to non-linear models with known and fixed parametrization, which may not be applicable to real-world settings.

In order to improve the robustness and flexibility of parametric autoregressive models in multi-variate time series, non-parametric approaches have been explored and developed
(see e.g. \cite{scaillet2004nonparametric,hardle1992kernel}).
For instance, in the recent work \cite{zhou2018non}, a non-parametric additive autoregressive network model is developed, involves replacing linear terms with additive functions belonging to a reproducing kernel Hilbert space (RKHS). The estimators are obtained through a penalized maximum likelihood procedure.

In this work, we don't directly impose any smoothness assumptions as we formulate the conditional expectations directly through an isotonic single index autoregressive model for a $M$-dimensional multi-variate time series, i.e.
\begin{equation}
    \bbE(X_{t,j}|X_{t-1}) = f_j^{*}(X_{t-1}^T u_j^{*}), ~j\in [M],
\end{equation}
where $f_j^*$ is an unknown link function and $u_j^*$ is the direction or the index to be estimated as one column of the influence network. We approximate the network parameters by minimizing the mean squared loss rather than maximizing the unknown likelihood function. 

In a non-parametric regression setting, the single index model has been well developed over the past decades. 
Classical approaches in estimating the single index model include profile likelihoods and smooth kernels
\cite{carroll1997generalized, xue2006empirical,hristache2001direct, wang2010estimation, naik2001single}.
For multi-variate time series data, \cite{TracyZWu2011,guo2017time} respectively constructed a single index coefficient model and a partial linear model to deal with the non-linearity. 
In \cite{li2009single}, the authors proposed a single index additive autoregressive model for a multi-variate time series.
All of the above literature estimates the single index model in the time series data by penalized splines, which involved the selection for smoothing parameters. Large sample results were derived based on mixing conditions, yet non-asymptotic results are not provided, making them not applicable for high dimensional (large $M$) settings.

A multitude of advances on isotonic regression analysis (\cite{durot2002sharp, zhang2002risk, chatterjee2014new,chatterjee2015risk,bellec2018sharp}) substitutes the smoothness assumption by montonicity of the link function, which leads to the isotonic single index model. To estimate this semi-parametric model,
the \textit{Isotron} algorithm and estimators were proposed and studied in \cite{kakade2011efficient, kalai2009isotron}. To further address the high-dimensional challenges, a variable selection procedure using LASSO was combined with the isotonic single index model in \cite{neykov2019, foster2013variable}, both of which considered independent Gaussian data. In \cite{balabdaoui2019least}, the authors showed that the rate of the least squared estimator of the bundled isotonic single index function in the $\ell_2$ norm with respect to the sample size $n$ is $n^{-1/3}$ under appropriate conditions. In \cite{dai2021convergence}, the ``Sparse Orthogonal Descent Single-Index Model'' (SOD-SIM) is developed with the isotonic regression and a projection-based iterative approach, where a $n^{-1/3}$ convergence guarantee in the high dimensional setting is given. Both of these papers (\cite{balabdaoui2019least,dai2021convergence}) are focusing on the regression setting (\ie $\bbE(Y|X) = f^*(X^T u^*)$) with independent data.

Perhaps the most closely related prior work to our setting is \cite{wang2016isotonic}, which proposed an Isotonic-Hawkes process whose intensity function was formulated in the form of an isotonic single index model. They used an alternating minimization procedure in the algorithm, which shares the same framework as ours and showed the efficiency of the estimated near optimal indices and link functions. On the other hand, there are significant differences between this work and ours. Our results can not only be applied to the counting process as the Isotonic Hawkes process dealt with, but can also be applied to more general multi-variate time series with continuous-value co-variates. Besides, we take the influence network's sparsity into account and introduce a hard thresholding operator to enforce the sparsity, which is particularly helpful in the high dimensional setting. 


\subsection{Contributions}
Our major contributions in this paper are as follows:
\begin{itemize}
    \item We formulate the monotone single index multi-variate autoregressive model (monotone SIMAM) in the high-dimensional settings to learn the influence network from a multi-variate time series or point process data, which is more flexible and robust compared to existing parametric models, while entails more interpretability than other non-parametric ones. Based on this model, a feasible algorithm in an alternating framework combining the iterative hard-thresholding (IHT) method and suitable initialization is provided to solve the non-convex problem.
    
    \item In terms of theoretical analysis, we provide the convergence rate for our network estimator from the proposed algorithm in a non-asymptotic manner that applies to the high-dimensional setting using martingale concentration inequalities. The result indicates that after applying sufficiently many iterations, given a multi-variate time series with $T$ observations, the Frobenius norm of the influence network estimation error converges in the order of $O(T^{-\frac{1}{3}})$ up to some poly-log terms. Specifically, our rate depends on the sparsity of the network, the noise level of the data, the Lipschitz continuity of the monotone function, and the dimension of the network. In addition, the empirical one-step prediction error also has the rate of $O(T^{-\frac{1}{3}})$. 
    We also prove that the angle between our initialization and the true parameters is acute with high probability which is sufficient to guarantee our $O(T^{-\frac{1}{3}})$ rate.
     
    \item Simulation results are given to support the $O(T^{-\frac{1}{3}})$ convergence rate of the estimator derived from our algorithm with the nonlinear link functions unknown, after sufficiently many iterations. It is also illustrated that our method has a better performance in terms of both in-sample and out-of-sample prediction errors, compared to VAR with $\ell_1$ penalty.
    
    \item Two real data examples, the Chicago crime data and Memetracker data, are analyzed, which indicate that there exist highly nonlinear and non-smooth structures in point process data. In terms of prediction and estimation, we observe a significant advantage of our proposed monotone SIMAM over other popular parametric network estimating models, such as vector autoregressive(VAR) model, VAR with LASSO type of penalty and Poisson autoregressive(PAR) with the $\ell_1$ penalty.
\end{itemize}

\section{Preliminaries}

\subsection{Notations}

Let $\{X_0, X_2,...,X_T\} \subset \mathbb{R}^M$ be a $M$-dimensional time series with $T+1$ observed time points. Combining them together as rows of a matrix gives $\bfx \in \mathbb{R}^{(T+1)\times M}$, whose entries are $X_{i,j}\in \mathbb{R}$, where $i \in [T+1]-1 = \{0,\dots, T\}, j \in [M] =\{ 1,\dots, M\}$.

We write $\X_{-i} \in \mathbb{R}^{T\times M}$ as the matrix with $i$-th row deleted from $\X$; $\X_{-i,j} \in \mathbb{R}^{T}$, in this manner, denotes the $j^{th}$ column of the matrix $\X_{-i}$. Specifically for example, as will appear repeatedly in the paper, $\X_{-T}$ denotes all the data collected in time points $t = 0,1,\dots, T-1$; $\X_{-0,j}$ denotes the $j$-th co-variate observations in time points $t = 1,\dots, T$, with the first observation deleted.

For an index set $\I = \{i_1,\cdots,i_k\} \subseteq \mathbb N$ and any vector $\bfv = (v_{1},\dots,v_n)^T \in \mathbb{R}^n$ with length $n \geq \max\{\I\}$, we denote $\bfv_{\I} = (v_{i_1},\dots,v_{i_k})^T$ as the extraction of all elements in $\bfv$ whose indices are included in $\I$; $|\I| = k$ represents the cardinality of set $\I$. For any $l \in \mathbb{N}$, $\I + l$ is short for the set with all elements being added $l$: $\I +l = \{i_1+l,\dots, i_k+l \}$.

Let $\Phi_s: \bbR^M \rightarrow \bbR^M$ be the hard thresholding operator whose image is always a subset of all $s$-sparse vectors in $\bbR^M$, i.e.
\begin{equation}
      \Phi_s(x) = \argmin_{y\in \bbR^M} \{ \|y - x\|_2 : \| y\|_0 = s \}, ~\forall x\in \bbR^M.
\end{equation}
    
Another important projection operator we would use is the orthogonal projection operator $\mathcal{P}_u^{\perp}(\cdot)$ for any $u\in \bbR^M$, which projects any vector in $\bbR^M$ onto the subspace of $\bbR^M$ orthogonal to $u$:
\begin{equation}
    \mathcal{P}_u^{\perp}(x) = \argmin_{y\in \bbR^M}\{\| y - x\|_2^2: \langle u, y \rangle = 0\}, ~\forall x\in \bbR^M.
\end{equation}

\subsection{Partial ordering and Isotonic Regression}

The isotonic regression problem is: 
\begin{equation}
    \begin{split}
        &\text{Minimize } \sum_{i=1}^T \left(
            v_i - x_i
        \right)^2\\
        & \text{Subject to } x_i \leq x_j~\text{when } 
        i \preceq j,
    \end{split}
\end{equation}
where $\preceq$ is a specified partial ordering on $\Omega = \{1,2, \dots, T\}$. The solution to this problem is referred to as \emph{isotonic regression}. The vector $\mathbf{x} =(x_1,\dots, x_T)$ is said to be \emph{isotonic} or \emph{order preserving} if $i \preceq j$ implies $x_j \leq x_j$. Note that the set of isotonic vectors $\mathbf{x}\in \bbR^T$ is a closed convex cone, which guarantees that there is a unique solution $\mathbf{x}\in \bbR^T$ that solves the above isotonic regression problem.

To specify the partial ordering $\preceq$, we could introduce a collection of real numbers $\z = (z_1,\dots, z_T)$:
\begin{equation}
    i \preceq j \text{ if } z_i \leq z_j, ~\forall i,j \in \Omega.
    \label{eq: p_ordering}
\end{equation}

Hence, for any collections $\z = (z_1,\dots, z_T) $ and $\bfv = (v_1,\dots,v_T)$ , define $\iso_{\z}(\bfv)\in \bbR^T$ as the isotonic vector of $\bfv$ with respect to $\z$, i.e. the solution for the following constrained minimization:
\begin{equation}
    \iso_{\z}(\bfv) = \argmin_{\bfx \in \bbR^T}\{ \parallel \bfv - \bfx \parallel_2^2: x_i \leq x_j \text{ whenever } z_i \leq z_j \text{ for }\forall i,j \in [T]\}.
    \label{eq: iso}
\end{equation}

Essentially, $\iso_{\z}(\bfv)$ preserves the ordering of $\z$ while fitting $\bfv$. Important properties of isotonic regression include its contractiveness with respect to some seminorms, as is given in \cref{lemma: contractive} and \cref{cor: contract}. Computationally, using \emph{pool-adjacent-violators algorithm (PAVA)} \cite{mair2009isotone} with $\z,\bfv \in \bbR^T$ being the algorithm inputs, $\iso_{\z}(\bfv)$ can be solved with a computational complexity $O(T)$.

Since the above partial ordering in \cref{eq: p_ordering} is only defined by collections of scalars, when it turns to high dimensional vectors, such ordering has to be induced by some projection. Consider a collection of $T$ points $X_1,\dots, X_T$ in $M$-dimensional Euclidean space and a reference vector $\bfu\in \bbR^M$, for a permutation $\pi$ of the set $\{1,\dots, T\}$, if 
\begin{equation}
    \langle X_{\pi(1)}, \bfu\rangle \leq \dots \leq \langle X_{\pi(T)}, \bfu\rangle,
\end{equation}
we say that $\bfu \in \bbR^M$ \emph{induces} the ordering $\pi$. 

\section{Model and Algorithm}
\subsection{Monotone Single Index Multivariate Autoregressive Model}

We assume that the time series $\{X_t\}_{t = 0}^T \subset \bbR^M$ follows the monotone single index multi-variate autoregressive model (SIMAM) and is conditionally independent across $j\in [M]$, \ie
\begin{equation}\label{eq: sim model}
    \bbE(X_{t,j}|X_{t-1}) = f_j^{*}(X_{t-1}^T u_j^{*})
\end{equation}
for all $t\in [T]$ almost surely with an unknown index $u_j^* \in \bbR^M\symbol{92} \{\mathbf{0}\}$ and a monotone function $f_j^*$, which is also unknown. According to the conditional independence across $j\in [M]$, conditioned on the previous data, the elements $X_{t,1},\dots, X_{t,M}$ of the $t$-th observation are independent of one another.
Specifically, the index $u_j^* = (u_{j1}^*, \dots, u_{jM}^*)^T$, also known as the direction vector, is assumed to lie on a unit sphere $\cS^{M-1} \subset \bbR^M$ with $s_j^*$ nonzero elements, where $s_j^*$ refers to the sparsity parameter. Let $$s^* := \max \{s_1^*, \dots, s_M^*\},$$ 
then $s^*$ is the maximum in-degree of the directed graph induced by the network, in which the $i$-th node represents the $i$-th co-variate, and the edge from node $i$ to node $j$ exists when $u^*_{ji}\neq 0$ .

The univariate function $f_j^*: \mathbb{R} \rightarrow \mathbb{R}$, which contributes to the non-parametric flexibility of the model, is assumed to be $L_j$-Lipschitz continuous and non-decreasing on its domain that contains the range of the linear predictors $\{X_{t-1}^T u_j^*\}_{t=1}^T$. 
Let $\cM$ denote the function class that contains all monotonically non-decreasing functions, then we have $f_j^* \in \cM$ for any $j\in [M]$. For technical reasons, we extend all functions outside their actual support by taking the extension to be constant to the left and right of the original support's endpoints.

The noise terms in the $j$-th co-variate, denoted as $\Z_j = (Z_{1,j}, \dots, Z_{t,j})$ where
\begin{equation}
   Z_{t,j}= X_{t,j} - f_j^*(X_{t-1}^T u_j^*),
\end{equation}
are generally assumed to be martingale differences with conditional sub-Gaussian tails. 

Due to the monotonicity of $f_j^*$ for each $j\in [M]$, the nonlinear function $f_j^*$ is \emph{order-preserving}, while the index $u_j^*$ essentially captures the direction that finds the best ordering for the variables to project on, \ie $u_j^*$ is the linear projector that \emph{induces} the variable ordering. 

Based on this model, we can make inference on the influence network by the direction vectors $\{u_j^*\}_{j=1}^M$.
Let $A^* = (u_1^*, \dots, u_M^*)$ denote the coefficient matrix with the direction vectors being its columns, then its element $A_{ij}$ with $i,j \in [M]$ represents the temporal influence of $X_{t-1,i}$ on $X_{t,j}$ for every time series observation $t\in [T]$. From the perspective of graphs/networks, $A^*$ is an adjacency matrix of the weighted \emph{directed} graph that indicates the influence network.

\subsection{Connection with Generalized Linear autoregressive Models}

In the generalized linear autoregressive models (GLAR)~\cite{hall2016inference}, for any $j\in[M]$, the density of $X_{t+1,j}$ given $X_{t} = x_t$ with respect to a given base measure is an exponential family of the form 
\begin{equation}
  p(x_{t+1,j}| X_t = x_t) =  h_j(x_{t+1,j},\phi_j) \exp\left\{
    \frac{x_{t+1,j}(x_t^Tu_j^* +\nu_j)  - Z_j(x_t^Tu_j^*+\nu_j)}{\phi_j}
    \right\},
    \label{eq: ef}
\end{equation}
where  $h_j$ is the base measure of the conditional distribution, and $\phi_j>0$ is the dispersion parameter. $u_j^*$ is the unknown vector containing network parameters of our interest, while $Z_j(\cdot)$ is referred to as the log partition function, whose second-order derivative satisfies $Z_j''(\cdot) >0$ for all elements in its domain. Further since $p(\cdot)$ belongs to an exponential family, we have
\begin{equation}
  \bbE(X_{t+1,j}| X_t)  = Z_j'(X_t^Tu_j^*+\nu_j).
\end{equation}

By the convexity of the log partition function $Z_j(\cdot)$, we know that its first-order derivative $Z_j'(\cdot)$ is monotonically non-decreasing. Thus there exists a monotone function $f_j^*$ for each $j\in[M]$, such that the monotone single index multi-variate autoregressive model \cref{eq: sim model} holds true.

Hence, the generalized linear autoregressive model is a special case of the monotone SIMAM. A key difference between these two models is that in GLAR, the inverse link function (or transfer function) $f_j^*(\cdot) = Z_j'(\cdot) + c$ , where $c$ is a constant thanks to the monotonicity of $Z_j'$, is assumed known. In the monotone SIMAM, however, the function $f_j^*$ is allowed to be unknown and only assumed to be isotonic. Further, the conditional distribution of $X_{t+1,j}$ given $X_t$ is no longer assumed to take the form \cref{eq: ef}, making the model more flexible.

\subsection{Non-convex optimization and algorithm}

Based on the monotone SIMAM, we want to find the solutions to the following non-convex optimization problem over the unknown functions $\{f_1,\dots, f_M\}$ and direction vectors $\{u_1,\dots, u_M\}$:
given a multi-variate time series or point process $\{X_0, \dots, X_T\}$ and sparsity levels $\{s_1, \dots, s_M\}$, 
    \begin{equation}
        \begin{split}
          &\text{Minimize } \frac{1}{T} \sum_{j=1}^M\sum_{t=0}^{T-1} \left[ 
        X_{t+1,j} - f_j (X_t^T u_j)
    \right]^2\\
    &\text{Subject to } \|u_j\|_2 = 1;~ \|u_j\|_0 = s_j \text{ and } f_j \text{ is non-decreasing, } ~\forall j\in [M].
        \end{split}
    \end{equation}

Since $\{X_{t+1,1},\dots, X_{t+1,M}\}$ is conditionally independent given the prior observation $X_t$, the loss function is separable with respect to the sum over $j\in [M]$. Therefore, we can estimate the pairs $\{(f_j^*, u_j^*)\}_{j=1}^M$ separately. For any $j\in[M]$, to simultaneously estimate $f_j^*$ and $u_j^*$, an alternating procedure is proposed. Note that once the direction ${u}_j \in \bbR^M$ is given, $f_j$ can be estimated by minimizing the profile loss function $\cL_{j} (f;  u_j)$ where
\begin{equation}
  \cL_{j}(f; u) = \frac{1}{T}\sum_{t = 0}^{T-1}
  \left[
    X_{t+1,j} - f(X_{t}^T u)
  \right]^2.
\end{equation}
By the definition of isotonic regression in \cref{eq: iso}, we know that 
\begin{equation}
  \iso_{\X_{-T} u_j} (\X_{-0,j}) = (\hat{f}_j(X_0^Tu_j), \dots, \hat{f}_j(X_{T-1}^Tu_j)) \text{ for any } \hat{f}_j \in  \arg\min_{f\in \cM} \cL_j(f;  u_j),
  \label{eq: est_f}
\end{equation}
 where $\X_{-T} u_j = (X_0^T u_j, \dots, X_{T-1}^T  u_j)^T $ and $\X_{-0,j} = (X_{1,j},\dots, X_{T,j})^T$.

On the other hand, even if the non-decreasing function $f_j$ is given, minimizing $\cL_j(f_j;u)$ over $\{u\in \bbR^M: \|u\|_2=1 \}$ with the non-convex constraint $\|u\|_0 = s_j$ is still challenging. In linear settings without the non-linear transformation $f_j(\cdot)$, \emph{projected gradient descent} (PGD) (also known as \emph{iterative hard-thresholding} (IHT)) algorithms are used to solve the $\ell_0$-norm constrained problem (see \eg \cite{blumensath2009iterative,jain2014iterative}). 
To implement PGD, we first need to find the gradient with respect to $u$ in $\cL_j(f_j;u)$: (for heuristic purpose, we assume $f_j$ has the first-order derivative $f_j'$ here)
\begin{equation}
  \nabla_{u} \cL_j(f_j; u) = \frac{1}{T} \sum_{t=0}^{T-1} \left[ X_{t+1,j} - f_j(X_t^Tu)\right]\cdot f_j'(X_t^T u)\cdot X_t.
\end{equation}
Although $f_j'(\cdot)$ is unknown, due to the monotonicity we know that $f_j'(X_t^T u)$ is a non-negative scalar for all $t\in[T]-1$, thus in the gradient descent step, we remove this term as an approximation to the gradient direction. Therefore for any $u_j$, if we have obtained an estimated $\hat{f}_j$ satisfying \cref{eq: est_f}, to update $u_j$ using PGD, the pseudo gradient we use is 
\begin{equation}
  \frac{1}{T} \sum_{t=0}^{T-1} \left[ X_{t+1,j} - \hat{f}_j(X_t^T u_j)\right]\cdot X_t = 
  \frac{1}{T}\X_{-T}^T\left[\X_{-0,j} - 
  \iso_{\X_{-T} u_j} 
  (\X_{-0,j})\right],
\end{equation}
the specific usage of which would be further indicated in \cref{eq: pgd}.

With the above alternating framework, we are now in position to introduce the procedure to estimate the direction vector $u_j^*$ and the corresponding monotone link function $f_j^*$. The pseudo-code of the overall procedure is given in \cref{alg: SIMAM}.

For any $j \in [M]$, fix a step-size $\eta_j = \frac{1}{L_j\beta}$ and a maximum iteration count $K_j$, where $\beta$ is the largest eigenvalue of $\X_{-T}$ defined in \cref{asspt: data_bound}. We use $s_j$ as an estimated sparsity level for the $j$-th direction vector. For theoretical convenience, we require $s_j$ to be larger than the true sparsity $s_j^*$ which is often standard.
The algorithm is:
\begin{enumerate}
  \item Initialization:
  \begin{equation}
    \tilde{u}_j^{(0)} =\frac{1}{T}\X_{-T}^T(\X_{-0,j} - \overline{\X_{-0,j}}\cdot \mathbf{1}_T) ;
    \label{alg: init1}
  \end{equation}
  where $\overline{\X_{-0,j}}$ denotes the mean of $\X_{-0,j}\in \bbR^T$, and $\mathbf{1}_T$ is the all-one vector in $\bbR^T$. Taking hard-thresholding and normalization to enforce the sparsity and unit norm, we have
  \begin{equation}
    u_j^{(0)} = \frac
    {\Phi_{s_j}\left(
    \tilde{u}_j^{(0)}
    \right)}
    {\parallel \Phi_{s_j}\left(
      \tilde{u}_j^{(0)}
    \right) \parallel_2}.
    \label{alg: init2}
  \end{equation}
  \item {
    In each iteration $k = 1,\cdots, K_j$, 
    \begin{enumerate}
      \item Compute $\iso_{\X_{-T}u_j^{(k-1)}} (\X_{-0,j})$;
      \item Take an orthogonal pseudo gradient step,
      \begin{equation}
        \tilde{u}_j^{(k)} = u_{j}^{(k-1)} + \eta_j \cdot \mathcal{P}_{u_j^{(k-1)}}^{\perp} \left(\frac{1}{T}\X_{-T}^T\left[\X_{-0,j} - 
        \iso_{\X_{-T} u_j^{(k-1)}} 
        (\X_{-0,j})\right]\right);
      \label{eq: pgd}
      \end{equation}
      \item {Enforce sparsity and unit norm,
    \begin{equation}
      u_j^{(k)} = \frac{\Phi_{s_j}(\tilde{u}_j^{(k)})}{\parallel\Phi_{s_j}(\tilde{u}_j^{(k)})\parallel_2};
    \end{equation}
  }
  \item Stop when $k = K_j$.
    \end{enumerate}
  }
\end{enumerate}

Note that for any given $u \in \bbR^M$, the minimum of $f\rightarrow \cL_{j}(f;u)$ over the monotone function class $\mathcal{M}$ can always be achieved, yet the minimizer is not unique over $\cM$. 
In fact, it is only uniquely defined at the points $\{X_{t}^T u\}_{t=0}^{T-1}$ (see \emph{theorem 2.1} in \citep{balabdaoui2019least}).
In other words, the best isotonic function that regresses $\X_{-0,j}$ on $\X_{-T}u_j^{(k-1)}$ is uniquely determined only at all the observed linear predictors $\X_{-T}u_j^{(k-1)}$, which take the value $\iso_{\X_{-T}u_j^{(k-1)}} (\X_{-0,j})$.

Such uniqueness makes the above estimation procedure work well, yet when it comes to prediction, new data outside of the support of previous observations may not give well-defined predictions. 
Thus we consider below the estimated monotone function $f_j^{(k)}$ to be left continuous and piece-wise constant, with jumps only possible at $T$ linear predictors $\langle X_0, u_j^{(k-1)}\rangle, \dots, \langle X_{T-1}, u_j^{(k-1)} \rangle$, for the sake of convenience. 
In practice, we would use the classical algorithm PAVA (Pool Adjacent Violators Algorithm) \citep{mair2009isotone} to compute this least squares estimator $\iso_{\X_{-T}u_j^{(k)}}(\X_{-0,j})$ in each iteration step.

\begin{algorithm}[ht!]
    \caption{Alternating Projected Gradient Descent for monotone SIMAM}
    \label{alg: SIMAM}
    \textbf{Input}: $M$-dimensional time series $\{X_0, X_1, \cdots, X_T\} \subset \mathbb{R}^M$ \;
    \textbf{Parameters}: sparsity level $s_j$, step size $\eta_j$, iteration number $K_j$, for any $j \in [M]$\;
    \ForEach{
      $j = 1,\cdots,M,$
    }{
      Initialize with $u_j^{(0)} = \frac{\Phi_{s_j}(\tilde{u}_j^{(0)})}{\| \Phi_{s_j}(\tilde{u}_j^{(0)})\|_2}$ where $\tilde{u}_j^{(0)} =\frac{1}{T}\X_{-T}^T(\X_{-0,j} - \overline{\X_{-0,j}}\cdot \mathbf{1}_T)$\;
      
      \ForEach{
        iteration $k = 1,\cdots,K_j$,
      }{
        Compute $\iso_{X_{-T} u_j^{(k-1)}}(\X_{-0,j})$ with PAVA\;
        $\tilde{u}_j^{(k)} = u_{j}^{(k-1)} + \eta_j \cdot \mathcal{P}_{u_j^{(k-1)}}^{\perp} (\frac{1}{T}\X_{-T}^T(\X_{-0,j} - 
        \iso_{\X_{-T} u_j^{(k-1)}} 
        (\X_{-0,j}))$
        \;
        ${u}_j^{(k)} = \frac{\Phi_{s_j}(\tilde{u}_j^{(k)})}{\parallel \Phi_{s_j}(\tilde{u}_j^{(k)}) \parallel_2}$ \;
      }
    }
    \textbf{Output}: ${u}_1^{(K_1)},\cdots, {u}_M^{(K_M)}.$
\end{algorithm}

\section{Main results}

\subsection{Assumptions}

\subsubsection{Assumptions for model identifiability}

To ensure identifiablity, we assume $u_j^{*}$ lies on the unit sphere, i.e. $\parallel u_j^{*}\parallel_2 = 1$; 
$f_j^{*}$ is a monotonically non-decreasing function with $L_j$-Lipschitz continuity: for any $x\in \bbR$ and $\Delta_x>0$,
\begin{equation}
    0 \leq f_j^{*}(x+\Delta_x) - f_j^{*}(x) \leq L_j\cdot \Delta_x.
    \label{asspt: mono_Lip}
\end{equation}

Such non-parametric function class, due to its large complexity, still suffers from identifiability issues, thus we use the following condition to ensure that the model is identifiable: 
for $j = 1,\dots, M$,
there exists a small relaxation term $\epsilon_j \geq 0$ and a positive value $\alpha_j> 0$, such that
\begin{equation}
    \frac{1}{N}\parallel f_j(\X_{-N}\cdot u_j) - f_j^{*}(\X_{-N}\cdot u_j^{*})  \parallel_2^2 \geq \alpha_j\parallel u_j - u_j^{*}\parallel_2^2 - \epsilon_j^2, 
    \label{eq: identifiability}
\end{equation}
for any monotonically non-decreasing $f_j$ and $s_j$-sparse unit vector $u_j$. Note that in this paper, for any univariate function $f$ and any vector $\bfv$, $f(\bfv)$ denotes the vector of the same length as $\bfv$ with $[f(\bfv)]_i = f(\bfv_i)$.

\begin{remark}
    Suppose we further assume the function class containing $f_j$ and $f_j^*$ of interest could be uniformly lower bounded by a linear function. In that case, we know that assumption \cref{eq: identifiability} is equivalent to the restricted eigenvalue condition (REC). Indeed, this assumption implicitly involves conditions analogous to restricted strong convexity to capture the dependence structure. In much of the literature, this condition can be verified in cases of independent design. While the dependence structure introduced in our autoregressive framework makes it a more complex condition to verify, such REC type of assumption is also included in \cite{Mark_2019} when dealing with dependent data.

    In fact, the identifiability of the class of non-decreasing functions $\mathcal{M}_j$ in which $f_j$ lies in, depends on the data structure of $\X_{-T}$ and properties of the true monotone function $f_j^*$. For example, in order to include all constant functions in the identifiable function class, we need 
    \begin{equation}
        \inf_{c\in \bbR} \frac{1}{T} \| f_j^*(\X_{-T} \cdot u_j^*) -c \mathbf{1}_T \|_2^2 = \| f_j^*(\X_{-T} \cdot u_j^*) -\overline{f_j^*(\X_{-T} \cdot u_j^*)} \mathbf{1}_T \|_2^2 \geq 4 \alpha_j - \epsilon_j^2,
    \end{equation}
    \ie the underlying signals $\left\{f_j^*(\langle\X_0, u_j^*\rangle ), f_j^*(\langle\X_1, u_j^*\rangle ),\dots, f_j^*(\langle\X_{T-1}, u_j^*\rangle )\right\}$ should have a variance no less than $4\alpha_j - \epsilon_j^2$. 
\end{remark}

\subsubsection{Noise distribution assumptions}

For any $j\in [M]$, the noise sequence
$\{Z_{t,j}\}_{t=1}^T$ are assumed to be a martingale difference sequence satisfying the $\sigma_j$-sub-Gaussian tail condition:
    \begin{equation}
        \bbE\left[ | Z_{t,j} | \right] < \infty;~\bbE\left[Z_{t,j} | \mathcal{F}_{t-1}\right] = 0;~ \bbE\left[e^{\lambda Z_{t,j}} | \mathcal{F}_{t-1}\right] \leq e^{\lambda^2 \sigma_j^2 /2}, ~ \forall t = 1,\dots,T.
        \label{asspt: noise}
    \end{equation}

\begin{remark}
    Being a martingale difference sequence, the noise sequence $\{Z_{t,j}\}$ are allowed to be signal-dependent. Combined with the sub-Gaussian tail condition, such an assumption is weak enough to include many popular distribution assumptions for a time series. For example, $\{Z_{t,j}\}$ could be independent mean-zero Gaussian noise with variance $\sigma_j^2$, which is a common assumption in analyzing continuous data. Another popular situation is to deal with the count data, when $Z_{t,j}, t = 1,\dots, T$ are typically signal-dependent noises from the Poisson Auto-Regressive (PAR) model:
    \begin{equation}
        X_{t,j}|\mathcal{F}_{t-1} \sim \text{Poisson}(f_j^*(X_t^T u_j^*)).
    \end{equation}
    In this case, $\{Z_{t,j}\}_{t=1}^T$ is still a martingale difference sequence. Although the Poisson tail is heavier than the sub-Gaussian tail, we can always conduct a truncation on the tails to make it sub-Gaussian. In practice, as long as the noise is bounded by a constant
    $\sigma_j$, \ie $|Z_{t,j}| \leq \sigma_j$, the $\sigma_j$  sub-Gaussian condition can be met.
\end{remark}

\subsubsection{Assumptions for data boundedness}
We assume the observed data $\X_{-T} = (X_0^T,\dots, X_{T-1}^T)^T \in \bbR^{T \times M}$ is entry-wise bounded:
\begin{equation}
    \max \{|X_{t,j}|: t \in \{[T]-1\}  \text{ and } j \in [M]\} \leq M_x <\infty.
    \label{asspt: entry_bound}
\end{equation}
The eigenvalues of $\X_{-T}$ are also upper bounded in the sparse setting:
\begin{equation}
    \frac{1}{T} \parallel \X_{-T}\cdot u\parallel_2^2 \leq \beta \parallel u \parallel_2^2, ~\forall u \in \mathcal{S}^{p-1} \text{with sparsity at most } \max_{j\in [M]}(2s_j+s_j^{*}). 
    \label{asspt: data_bound}
\end{equation}

\subsection{Convergence Guarantee}

The following main theorem (\cref{thm: coefficient conv}) gives the non-asymptotic result for our network estimator from \cref{alg: SIMAM}, which indicates that our algorithm \cref{alg: SIMAM} converges at a geometric rate, and after sufficiently many iterations, it converges to the statistical error with the rate $O(T^{-\frac{1}{3}}\sqrt{s\log(TM)})$ up to poly-log terms.

\begin{theorem}
    Suppose the $M$-dimensional multi-variate time series data follows the monotone SIMAM in \cref{eq: sim model}, and the assumptions for Lipschitz continuity (\cref{asspt: mono_Lip}), model identifiability (\cref{eq: identifiability}), sub-Gaussian martingale noise(\cref{asspt: noise}) and boundedness (\cref{asspt: entry_bound}, \cref{asspt: data_bound}) are satisfied. Denote $\Delta_j$ as the quantity listed in \cref{eq: def_Delta}. For any $j \in [M]$, after running \cref{alg: SIMAM} for $K$ times, with the step size being $\eta_j = \frac{1}{L_j\beta}$ and hard-thresholding sparsity level $s_j$ satisfying
    \begin{equation}
        s_j^* < s_j \cdot \min\left\{ 
            1 - \frac{\delta_j \alpha_j}{L_j^2\beta}
        ,~ (1-\delta_j)\frac{\alpha_j}{L_j^2\beta} - \frac{\epsilon_j^2}{2L_j^2\beta} - \frac{\Delta_j^2}{2\delta_j\alpha_j \beta} \right\}^2
        \label{asspt: sparsity}
    \end{equation}
    for some $0<\delta_j<1$, the following bound holds with probability at least $1- 4\gamma^{2s_j}$:
    \begin{equation}
        \begin{split}
            \parallel u_j^{(K)} - u_j^{*} \parallel_2^2 \leq 2\theta_j^K + R_j^2,\text{ where } &\theta_j = \frac{1 - \frac{\alpha_j}{L_j^2 \beta}}{1 - \frac{\delta_j\alpha_j}{L_j^2\beta} - \sqrt{\frac{s_j^{*}}{s_j}}} <1, \\
           \text{and}~ &R_j^2 = \frac{\frac{\epsilon_j^2} {L_j^2\beta} +\frac{\Delta_j^2}{\delta_j\alpha_j\beta}}{(1-\delta_j)\frac{\alpha_j}{L_j^2\beta} - \sqrt{\frac{s_j^*}{s_j}}} = O\left(T^{-\frac{2}{3}} s_j \log (\frac{TM}{\gamma}) \right),
        \end{split}
    \end{equation}
    as long as we have a warm initialization satisfying $\langle u_j^{(0)}, u_j^{*}\rangle \geq 0$.
    \label{thm: coefficient conv}
\end{theorem}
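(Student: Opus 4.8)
The plan is to establish a one-step recursion of the form $\|u_j^{(k)} - u_j^*\|_2^2 \le \theta_j \|u_j^{(k-1)} - u_j^*\|_2^2 + C_j$, with a fixed $C_j$ proportional to $\epsilon_j^2/(L_j^2\beta) + \Delta_j^2/(\delta_j\alpha_j\beta)$, and then to unroll it. Because both $u_j^{(k)}$ and $u_j^*$ lie on $\cS^{M-1}$, the base case is $\|u_j^{(0)} - u_j^*\|_2^2 \le 2$, and summing the geometric series of the per-step error yields precisely the stated bound $2\theta_j^K + R_j^2$ with $R_j^2 = C_j/(1-\theta_j)$. Thus the whole theorem reduces to proving the contraction with the advertised constants; the sparsity condition \cref{asspt: sparsity} is exactly what is needed to force $\theta_j<1$ and to keep $R_j^2$ finite.

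First I would analyze the tangent (orthogonal pseudo-gradient) step that produces $\tilde u_j^{(k)}$, before thresholding. Writing $\X_{-0,j} = f_j^*(\X_{-T}u_j^*) + \Z_j$ and $\hat y^{(k-1)} = \iso_{\X_{-T}u_j^{(k-1)}}(\X_{-0,j})$, I would split the residual $\X_{-0,j} - \hat y^{(k-1)}$ into a deterministic signal piece $f_j^*(\X_{-T}u_j^*) - \hat y^{(k-1)}$ and the martingale noise $\Z_j$. Expanding $\|\tilde u_j^{(k)} - u_j^*\|_2^2$ gives a cross term $2\eta_j\langle \mathcal{P}_{u_j^{(k-1)}}^{\perp}(g), u_j^{(k-1)} - u_j^*\rangle$, which (using $\mathcal{P}^{\perp}(u_j^{(k-1)})=0$) reduces to $-2\eta_j\langle g, \mathcal{P}^{\perp}(u_j^*)\rangle$, plus a quadratic term $\eta_j^2\|\mathcal{P}^{\perp}(g)\|_2^2$. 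The negative curvature in the cross term is supplied by the identifiability condition \cref{eq: identifiability}, which plays the role of restricted strong convexity and contributes the $\alpha_j$ factor, while the monotonicity of $f_j^*$ (so that the dropped derivative $f_j'$ is a nonnegative scalar) is what preserves the correct sign of the pseudo-gradient. The quadratic term is bounded by the sparse eigenvalue condition \cref{asspt: data_bound} through $\beta$, and with the calibrated step size $\eta_j = 1/(L_j\beta)$ this produces a contraction factor of roughly $1 - \alpha_j/(L_j^2\beta)$, matching the numerator of $\theta_j$. The contractiveness of isotonic regression is what controls the signal piece in terms of $\|u_j^{(k-1)} - u_j^*\|_2$.

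Next I would account for the hard-thresholding and normalization. Since $u_j^*$ is $s_j^*$-sparse and the iterate is projected onto $s_j$-sparse vectors with $s_j > s_j^*$, the standard IHT expansion bound contributes a multiplicative factor governed by $\sqrt{s_j^*/s_j}$, which is exactly the source of the $\sqrt{s_j^*/s_j}$ terms in the denominators of $\theta_j$ and $R_j^2$. The normalization is benign because both vectors are unit-norm: under the warm-start hypothesis $\langle u_j^{(0)}, u_j^*\rangle \ge 0$ the iterates remain in the hemisphere where renormalizing does not increase the distance, and I would verify inductively that this acute-angle property is preserved across iterations. Combining the gradient-step contraction with the thresholding expansion and invoking \cref{asspt: sparsity} then closes the recursion with $\theta_j<1$.

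The main obstacle is the statistical term $\Delta_j$, which must bound, uniformly over all $s_j$-sparse unit directions $u$, the noise contribution $\mathcal{P}_u^{\perp}\!\left(\frac{1}{T}\X_{-T}^T[\X_{-0,j} - \iso_{\X_{-T}u}(\X_{-0,j})]\right)$. Two features make this delicate. First, the design $\X_{-T}$ is autoregressive, so its rows depend on past noise and no i.i.d. concentration applies; instead I would invoke martingale concentration (self-normalized or Freedman-type inequalities) on $\frac{1}{T}\X_{-T}^T\Z_j$, exploiting the martingale-difference and conditional sub-Gaussian assumptions \cref{asspt: noise}, with a union bound over the $\binom{M}{s_j}$ supports producing the $\sqrt{s_j\log(TM)}$ factor and the probability $1-4\gamma^{2s_j}$. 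Second, the isotonic fit depends on the data-dependent ordering induced by $u$, so the signal piece carries the monotone-estimation error; controlling it uniformly requires the $T^{-1/3}$ isotonic rate together with a covering argument over sparse directions, and it is this isotonic component that supplies the $T^{-1/3}$ scaling, yielding $\Delta_j = O(T^{-1/3}\sqrt{s_j\log(TM/\gamma)})$ and hence $R_j^2 = O(T^{-2/3}s_j\log(TM/\gamma))$. Reconciling the random ordering with the martingale structure, so that the concentration bound and the isotonic bias can both be controlled on the same high-probability event, is the crux of the argument.
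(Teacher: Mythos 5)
Your proposal is correct and follows essentially the same route as the paper's proof: a one-step contraction lemma obtained by expanding the thresholded pseudo-gradient step (identifiability supplying the $\alpha_j$ contraction, the sparse eigenvalue bound supplying $\beta$, the IHT lemma supplying the $\sqrt{s_j^*/s_j}$ expansion, and normalization being benign since $\|\Phi_{s_j}(\tilde u_j^{(k)})\|_2\geq 1$), closed by induction that preserves the acute-angle condition, with the statistical term $\Delta_j$ split into a martingale-concentration piece and a uniform isotonic-error piece exactly as in \cref{lemma: infty_norm_z} and \cref{lemma: sup_2_norm}. The only place the paper is more specific than you are is in making your ``covering argument over sparse directions'' rigorous: since the isotonic fit depends on $u$ only through the induced ordering, the paper union-bounds over the at most $T^{2s_j-1}M^{s_j}$ orderings realizable by $s_j$-sparse directions (\cref{lemma: permutation}) rather than over a metric net, which is the right fix because the isotonic projection is discontinuous in $u$.
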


Therefore, for any tolerance $\tau_j > R_j$, running \cref{alg: SIMAM} for $K_j \geq \frac{\log(\frac{\tau_j^2 - R_j^2}{2})}{\log \theta_j}$ many iterations will guarantee that $\| u_j^{(K_j)} - u_j^{*}\|_2 \leq \tau_j$. 
With the conditions in \cref{thm: coefficient conv} satisfied, let $s = \max_{j=1,\dots, M} s_j$ be the thresholded maximum in-degree of the network, after running \cref{alg: SIMAM} for sufficiently many iterations in the $j$-th co-variate for $j\in [M]$, the influence network estimator $\widehat A = (u_1^{(K_1)}, \dots, u_M^{(K_M)})$ approaches the truth $A^*$ with the rate
\begin{equation}
    \| \widehat A - A^{*}\|_F \cong O_p\left( T^{-\frac{1}{3}} \sqrt{s \log(TM)}\right).
\end{equation}

\begin{remark}

    The remaining term $R_j$ gives the statistical error bound in \cref{thm: coefficient conv}. To assure the convergence, the sparsity level $s_j$ used in the hard-thresholding projection should be larger than its truth $s_j^*$, which is commonly needed in the projected gradient descent literature~(see \eg \cite{jain2014iterative}), due to the greedy nature of IHT algorithm. The trade-off is that a larger sparsity threshold $s_j$ speeds up the algorithm convergence, while sacrificing the statistical error rate in $R_j$, as is indicated in \cref{thm: coefficient conv}.
    
\end{remark}

In \cref{thm: coefficient conv}, a good starting point for the algorithm that has an acute angle with the truth is needed to ensure the convergence. The following lemma shows that our proposed initialization in \cref{alg: SIMAM} satisfies this condition with high probability.
    
\begin{lemma} (Initialization guarantee)
    The angle between the initialization $u_j^{(0)}$ in \cref{alg: SIMAM} and the truth $u_j^*$ is acute for any $j\in [M]$:
    \begin{equation}
        \langle u_j^{(0)} , u_j^* \rangle >0,~j\in [M],
        \label{eq: good_init}
    \end{equation}
    as long as $s_j > s_j^* \cdot \max\{1, \left(\frac{L_j}{2(4\alpha_j-\epsilon_j^2)}+O(\frac{\beta L_j^3}{T})\right)^2\}$, with high probability
    $ 1- 2(M+1) \exp(-\frac{T {U_j^{+}}^2}{8\sigma_j^2 M_x}),$
    where
    \begin{equation}
        U_j^{+} = \frac{1}{2}\left(
            \sqrt{\frac{1}{s_j} + 4\left[(\frac{1}{L_j\sqrt{s_j s_j^*}}-\frac{2\beta}{T}) 
            (4\alpha_j -\epsilon_j^2)- \frac{1}{2s_j}\right]}-\sqrt{\frac{1}{s_j}}
        \right).
    \end{equation}
    \label{lemma: good_init}
\end{lemma}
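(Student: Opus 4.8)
The plan is to recognize $\tilde u_j^{(0)}$ as a first-moment (average outer-product) estimator and to show that, \emph{before} thresholding, it carries a strictly positive and explicitly lower-bounded correlation with $u_j^*$; the bulk of the work is then to certify that the hard-thresholding step $\Phi_{s_j}$ cannot flip the sign of that correlation. Writing $w_t = X_t^T u_j^*$ and $y_t = X_{t+1,j} = f_j^*(w_t) + Z_{t+1,j}$, I first observe that $\langle \tilde u_j^{(0)}, u_j^*\rangle = \frac1T\sum_{t=0}^{T-1} w_t\,(y_t - \overline{\X_{-0,j}})$ is exactly the empirical covariance between the linear predictor $w_t$ and the response $y_t$. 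Splitting $y_t$ into signal and noise, this equals $\widehat{\mathrm{Cov}}(w, f_j^*(w)) + \widehat{\mathrm{Cov}}(w, Z)$.

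The first (signal) term is where monotonicity enters. For any finite sample, the elementary pointwise inequality $(f_j^*(w_s)-f_j^*(w_t))(w_s - w_t) \ge \frac{1}{L_j}(f_j^*(w_s)-f_j^*(w_t))^2$ — valid because $f_j^*$ is nondecreasing and $L_j$-Lipschitz — averaged over all pairs $(s,t)$ gives $\widehat{\mathrm{Cov}}(w, f_j^*(w)) \ge \frac{1}{L_j}\widehat{\mathrm{Var}}(f_j^*(w))$. The variance lower bound from the identifiability remark, $\frac1T\|f_j^*(\X_{-T} u_j^*) - \overline{f_j^*(\X_{-T} u_j^*)}\mathbf{1}_T\|_2^2 \ge 4\alpha_j - \epsilon_j^2$, then yields the deterministic bound $\langle \tilde u_j^{(0)}, u_j^*\rangle \ge \frac{4\alpha_j-\epsilon_j^2}{L_j} + \widehat{\mathrm{Cov}}(w, Z)$, which explains the appearance of $\frac{4\alpha_j-\epsilon_j^2}{L_j}$ as the driving signal strength inside $U_j^+$.

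Next I handle the stochastic pieces and the thresholding simultaneously through an $\ell_\infty$-deviation bound. The noise cross term and, coordinatewise, $\tilde u_j^{(0)} - \bbE\tilde u_j^{(0)}$, are averages of the form $\frac1T\sum_t X_{t,i}\,Z_{t+1,j}$ (plus the centering term involving $\overline{\X_{-0,j}}$), which are martingales since $X_{t,i}$ is $\mathcal{F}_t$-measurable while $Z_{t+1,j}$ is a conditionally $\sigma_j$-sub-Gaussian martingale difference and $|X_{t,i}|\le M_x$. An Azuma/sub-Gaussian martingale concentration inequality bounds each such average by $U_j^+$ with failure probability $2\exp(-T (U_j^+)^2/(8\sigma_j^2 M_x))$, and a union bound over the $M$ coordinates plus the single mean term produces the stated $2(M+1)$ prefactor. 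On the event $\{\|\tilde u_j^{(0)} - \bbE\tilde u_j^{(0)}\|_\infty \le U_j^+\}$ the full correlation is still positive, and it remains to account for what $\Phi_{s_j}$ discards. Decomposing $\langle \Phi_{s_j}(\tilde u_j^{(0)}), u_j^*\rangle = \langle \tilde u_j^{(0)}, u_j^*\rangle - \sum_{i\in S^*\setminus \hat S}[\tilde u_j^{(0)}]_i [u_j^*]_i$, I bound the dropped support coordinates (each of magnitude at most the $s_j$-th largest entry, hence at most the per-coordinate deviation plus an $O(\beta/T)$ centering bias) via Cauchy--Schwarz by $\sqrt{s_j^*}$ times the controlled deviation. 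Collecting terms, positivity of $\langle \Phi_{s_j}(\tilde u_j^{(0)}), u_j^*\rangle$ reduces to requiring the observed $\ell_\infty$ deviation to be at most the positive root of the quadratic $t^2 + t/\sqrt{s_j} + 1/(2 s_j) = (\frac{1}{L_j\sqrt{s_j s_j^*}} - \frac{2\beta}{T})(4\alpha_j - \epsilon_j^2)$, which is exactly the stated $U_j^+$, and whose feasibility ($U_j^+>0$) is equivalent to the sparsity condition $s_j > s_j^*\max\{1,(\frac{L_j}{2(4\alpha_j-\epsilon_j^2)} + O(\beta L_j^3/T))^2\}$. Since normalization by a positive scalar preserves signs, $\langle u_j^{(0)}, u_j^*\rangle > 0$ follows.

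I expect the main obstacle to be the thresholding accounting rather than the concentration: because temporal cross-correlations make $\bbE\tilde u_j^{(0)}$ dense, not supported on $S^*$, one must argue that $\Phi_{s_j}$ does not evict the support coordinates carrying the positive signal, which is precisely what forces the sparsity inflation $s_j \gtrsim s_j^*$ and dictates the delicate form of $U_j^+$. Converting the single inner-product lower bound into a per-coordinate statement that survives thresholding — the source of the $1/\sqrt{s_j s_j^*}$ and $1/s_j$ factors — and correctly isolating the $O(\beta/T)$ bias incurred by using the empirical mean $\overline{\X_{-0,j}}$ in place of its expectation are the steps that require the most care.
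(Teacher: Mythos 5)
Your overall skeleton matches the paper's proof: reduce positivity to $\langle \Phi_{s_j}(\tilde u_j^{(0)}), u_j^*\rangle > 0$, lower-bound the pre-threshold inner product by $(4\alpha_j-\epsilon_j^2)/L_j$ minus a noise term, control the noise by martingale sub-Gaussian concentration with a union bound over $M+1$ events, and read off $U_j^+$ as the positive root of a quadratic whose feasibility is the stated sparsity condition. Your treatment of the signal term is a correct and more elementary alternative to the paper's: instead of invoking \cref{lemma: lip_bound} with $u=-u_j^*$ (so that the isotonic projection under the reversed ordering degenerates to the constant mean vector), you use the pairwise identity $\widehat{\Cov}(w,f_j^*(w)) = \frac{1}{2T^2}\sum_{t,t'}(w_t-w_{t'})\bigl(f_j^*(w_t)-f_j^*(w_{t'})\bigr)$ together with the pointwise bound $(w_t-w_{t'})\bigl(f_j^*(w_t)-f_j^*(w_{t'})\bigr)\ge L_j^{-1}\bigl(f_j^*(w_t)-f_j^*(w_{t'})\bigr)^2$; both routes give the same constant.

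The genuine gap is in your thresholding accounting. You bound $\sum_{i\in S^*\setminus \hat S}[\tilde u_j^{(0)}]_i[u_j^*]_i$ by asserting that each dropped support coordinate has magnitude ``at most the per-coordinate deviation plus an $O(\beta/T)$ centering bias.'' That claim is unsupported, and it is precisely the point you flag in your own last paragraph as the difficulty: a dropped coordinate is controlled only by the $s_j$-th largest magnitude of $\tilde u_j^{(0)}$, and because $\bbE\,\tilde u_j^{(0)}$ is dense (temporal cross-correlations place signal leakage on coordinates outside $S^*$), that magnitude can carry an order-one signal component rather than just noise plus $O(\beta/T)$. As a result, your statement that the argument ``reduces to exactly the stated quadratic'' is asserted, not derived; in particular the terms $1/(2s_j)$ and $2\beta/T$ inside $U_j^+$ never actually materialize along your route. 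The paper avoids per-coordinate reasoning entirely: it applies the geometric hard-thresholding inequality (\cref{lemma: hard_thres}), $\langle v - \Phi_s(v), \omega - \Phi_s(v)\rangle \le \frac{\sqrt{s^*}}{2\sqrt{s}}\|\omega - \Phi_s(v)\|_2^2$, whose expansion of $\|u_j^*-\Phi_{s_j}(\tilde u_j^{(0)})\|_2^2$ (using $\|u_j^*\|_2=1$) produces the $1/(2s_j)$ term, and it then bounds $\|\Phi_{s_j}(\tilde u_j^{(0)})\|_2^2$ via the restricted eigenvalue assumption \cref{asspt: data_bound}, which is the source of the $2\beta/T$ term. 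To repair your proof you would need either that lemma or an explicit bound on the $s_j$-th largest entry $\tau$ of $\tilde u_j^{(0)}$ (for instance via $\tau^2 \le \|\Phi_{s_j}(\tilde u_j^{(0)})\|_2^2/s_j$ and the same eigenvalue bound), and then re-propagate the resulting constants through the quadratic defining $U_j^+$.
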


\begin{proposition}
    For each $j\in [M]$, after running \cref{alg: SIMAM} with a sufficient number of iterations such that the error $\| \hat{u}_j- u_j^*\|_2$ of the $j$-th variable estimate $\hat u_j$ is dominated by its statistical error term $R_j = O_p(T^{-1/3})$ up to some poly-log terms in \cref{thm: coefficient conv}, with high probability we have
    \begin{equation}
        \frac{1}{\sqrt{T}}\left\|
     \iso_{\X_{-T} \hat{u}_j} (X_{-0,j}) - f_j^*(\X_{-T} u_j^*) \right\|_2 \leq O_p\left(T^{-\frac{1}{3}}\log(T)\right).
    \end{equation}
    \label{prop: prediction}
\end{proposition}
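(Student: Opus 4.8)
The plan is to prove the bound by a triangle-inequality decomposition that separates the error coming from estimating the \emph{direction} $u_j^*$ from the error intrinsic to the \emph{isotonic fit}. Writing $g := f_j^*(\X_{-T}\hat u_j)$ for the true link applied along the estimated direction, I would split
\begin{equation*}
\frac{1}{\sqrt T}\left\| \iso_{\X_{-T}\hat u_j}(\X_{-0,j}) - f_j^*(\X_{-T}u_j^*)\right\|_2 \le \frac{1}{\sqrt T}\left\|\iso_{\X_{-T}\hat u_j}(\X_{-0,j}) - g\right\|_2 + \frac{1}{\sqrt T}\left\| g - f_j^*(\X_{-T}u_j^*)\right\|_2.
\end{equation*}
The key structural observation making this useful is that $g$ is itself isotonic with respect to the ordering that $\hat u_j$ induces: since $f_j^*$ is non-decreasing, $X_s^T\hat u_j \le X_t^T\hat u_j$ forces $g_s \le g_t$. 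Thus the first term is a genuine isotonic estimation error against a correctly-ordered signal, while the second is a pure direction-mismatch term.

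The second term is the easy one. Since $f_j^*$ is $L_j$-Lipschitz (\cref{asspt: mono_Lip}) and applied entrywise, $\|g - f_j^*(\X_{-T}u_j^*)\|_2 \le L_j\|\X_{-T}(\hat u_j - u_j^*)\|_2$. Because $\hat u_j$ is $s_j$-sparse and $u_j^*$ is $s_j^*$-sparse, the difference has sparsity at most $2s_j + s_j^*$, so the sparse eigenvalue bound (\cref{asspt: data_bound}) gives $\frac{1}{\sqrt T}\|g - f_j^*(\X_{-T}u_j^*)\|_2 \le L_j\sqrt{\beta}\,\|\hat u_j - u_j^*\|_2$. Invoking the rate $\|\hat u_j - u_j^*\|_2 \le R_j = O_p(T^{-1/3})$ (up to poly-log factors) from \cref{thm: coefficient conv}, once enough iterations have been run that the geometric term $2\theta_j^K$ is negligible, shows this term is $O_p(T^{-1/3})$ up to poly-log factors.

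For the first term I would use that $\hat\theta := \iso_{\X_{-T}\hat u_j}(\X_{-0,j})$ is the Euclidean projection of $\X_{-0,j}$ onto the closed convex monotone cone $\cC$ fixed by the ordering $\hat u_j$ induces, and that $g\in\cC$. Writing $\X_{-0,j} = g + \xi + \Z_j$ with $\xi := f_j^*(\X_{-T}u_j^*) - g$ and $\Z_j$ the martingale-difference noise, the obtuse-angle property of projections onto a convex cone yields the basic inequality
\begin{equation*}
\|\hat\theta - g\|_2^2 \le \langle \xi + \Z_j, \hat\theta - g\rangle \le \|\xi\|_2\,\|\hat\theta - g\|_2 + \sup_{v\in\cC,\ \|v - g\|_2\le \|\hat\theta - g\|_2}\langle \Z_j, v - g\rangle.
\end{equation*}
Setting $r = \|\hat\theta - g\|_2$ and $Z(r)$ for the localized supremum, this is a self-bounding inequality $r^2 \le \|\xi\|_2 r + Z(r)$, where $\|\xi\|_2$ is exactly $\sqrt T$ times the direction error already bounded. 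Because the data are entrywise bounded (\cref{asspt: entry_bound}) and $\hat u_j$ is a unit $s_j$-sparse vector, the linear predictors $X_t^T\hat u_j$ lie in an interval of length $O(M_x\sqrt{s_j})$, so the range $V$ of $g$ is bounded; the standard local-complexity estimate for the monotone cone then gives $Z(r) = O(\sigma_j (VT)^{1/3} r^{2/3})$ up to logarithmic factors. Solving the self-bounding inequality produces $r^2 = O_p(T^{1/3})$ up to poly-log factors, i.e. $\frac{1}{\sqrt T}\|\hat\theta - g\|_2 = O_p(T^{-1/3})$ up to poly-log factors, and combining with the direction term yields the stated rate.

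The hard part will be the control of $Z(r)$, and two features separate it from textbook isotonic regression. First, $\Z_j$ is only a martingale-difference sequence with conditional sub-Gaussian tails (\cref{asspt: noise}), not i.i.d.\ Gaussian, so the usual Dudley chaining bound for the local width of the monotone cone must be replaced by a peeling argument over dyadic radii combined with a martingale maximal inequality driven by the conditional sub-Gaussian tail; it is here that the $\log T$ factor in the statement is produced. Second, the cone $\cC$ is itself data-dependent, since the ordering is induced by $\hat u_j$, which is a function of the same noise $\Z_j$. I would handle this circularity by bounding $Z(r)$ uniformly over all orderings at once: the local complexity of the monotone cone depends only on $T$ and the range $V$, not on which permutation realizes the order, so a single estimate holds simultaneously for the realized ordering. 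The contractiveness properties of isotonic regression recorded in \cref{lemma: contractive} and \cref{cor: contract} can be used to make the reduction to this noise interaction rigorous.
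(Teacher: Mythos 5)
Your decomposition and your handling of the direction-mismatch term are correct and essentially match the paper's: the paper splits at $\iso_{\X_{-T}\hat u_j}(f_j^*(\X_{-T}u_j^*))$ and uses optimality of the isotonic projection (with $f_j^*(\X_{-T}\hat u_j)$ as a feasible element of the cone) to reach the same bound $L_j\sqrt{\beta}\,\|\hat u_j - u_j^*\|_2$ that you obtain by splitting at $g = f_j^*(\X_{-T}\hat u_j)$. The divergence is in the isotonic-error term, where the paper does no new work --- it invokes the bound already established for \cref{thm: coefficient conv} (\cref{lemma: 2norm bound}, or really its uniform version \cref{lemma: sup_2_norm}, proved by segmentation plus martingale concentration) --- while you re-derive such a bound from scratch via the convex-cone basic inequality and localization. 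That route is viable in principle, but your execution has two concrete gaps. The first is that your complexity estimate and your conclusion are arithmetically inconsistent: plugging $Z(r) = O\bigl(\sigma_j (VT)^{1/3} r^{2/3}\bigr)$ into $r^2 \le \|\xi\|_2\, r + Z(r)$ gives, when the noise term dominates, $r^{4/3} \lesssim \sigma_j (VT)^{1/3}$, i.e.\ $r \lesssim \sigma_j^{3/4}(VT)^{1/4}$, hence $\frac{1}{\sqrt{T}}\|\hat{\theta}-g\|_2 = O(T^{-1/4})$ --- not the claimed $r^2 = O_p(T^{1/3})$. The correct localized width for the monotone cone (or for differences of monotone vectors with range $V$) is, up to logarithms, $Z(r) \lesssim \sigma_j V^{1/2} T^{1/4} r^{1/2}$, coming from the entropy bound $\log N(\epsilon) \lesssim V\sqrt{T}/\epsilon$; with that estimate the fixed point is $r \lesssim \sigma_j^{2/3} V^{1/3} T^{1/6}$ and the $T^{-1/3}$ rate does follow. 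So your strategy is repairable, but not with the bound you stated.

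The second gap is more fundamental: your treatment of the data-dependent cone is an assertion, not an argument. The fact that every permutation's cone has the same local complexity (or even that the suprema would be identically distributed across cones --- which already fails here, since $\Z_j$ is only a martingale difference sequence, not exchangeable) does not imply the bound holds at the realized ordering: $\hat u_j$ is a function of $\Z_j$ and can select exactly the ordering on which $\sup_{v}\langle \Z_j, v - g\rangle$ is atypically large. What is needed is a union bound over all orderings induced by $s_j$-sparse unit vectors; \cref{lemma: permutation} bounds their number by $T^{2s_j-1}M^{s_j}$, and this counting is precisely what upgrades the fixed-reference bound of \cref{lemma: 2norm bound} to the uniform statement of \cref{lemma: sup_2_norm}, at the price of the $s_j\log(TM)$ factors appearing there. (The same subtlety touches the paper's own proof, which nominally cites the fixed-reference lemma even though $\hat u_j$ is random; the uniform lemma is what actually justifies that step.) Once you add this union bound --- it costs only logarithmic factors, so your target rate survives --- and run the peeling with a martingale maximal inequality for each fixed ordering, your localization argument closes. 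Without it, the step ``a single estimate holds simultaneously for the realized ordering'' is a hole.
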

Let $\hat{f}_j$ denote the monotone function estimated based on $\hat u_j$ (see \cref{eq: est_f}), then the result in \cref{prop: prediction} can also be expressed as:
\begin{equation}
  \sqrt{\frac{1}{T}\sum_{t=0}^{T-1} \left[\hat{f}_j(X^T_{t}\hat{u}_j) - f_j^*(X^T_{t} u_j^*) 
 \right]^2} \leq O_p\left(T^{-\frac{1}{3}}\log(T)\right),
\end{equation}
which demonstrates that \cref{prop: prediction} essentially gives the in-sample prediction error bound.

\section{Simulation Study}

In this section, we validate our theoretical results and explore the properties of monotone SIMAM on synthetic data. For a given number of time points $T$ and node size $M$, we construct a large sparse $M\times M$ coefficient matrix whose $j$-th column $u_j^*$ lies on a unit sphere and has $s_j^*$ non-zero elements for any $j\in [M]$. 
We define a sequence of monotone nonlinear functions $\{f_j^*\}_{j=1}^M$ in the form of
\begin{equation}
    f_j^*(x) = \frac{\exp(j\cdot x)}{\exp(j\cdot x)+1}, ~j\in[M].
    \label{eq: sim funcs}
\end{equation}
These functions are isotonic, Lipschitz continuous and well-bounded. 
When $j=1$, $f_j^*$ is the logistic link function.
As the value of $j$ increases, the nonlinearity of $f_j^*(x)$ increases accordingly.
First we consider the noise variables to be drawn from a Gaussian distribution. Using the equation $X_{t+1,j} = f_j^*(X_t^T \beta_j) + Z_{t+1,j}$, where $\{Z_{t,j}\}$ for ${j\in[M],t\in [T]}$ is the sequence of noise generated independently from N$(0, \sigma^2)$, we define a multi-variate time series $\{X_t\}_{t = 0}^T$ whose initial vector $X_0$ is randomly generated from a normal distribution N$(0,I_M)$.
Specifically, we take the dimension $M=9$, the sparsity $s_j^* = 3, \forall j \in [M]$ and noise level $\sigma = 0.05$. The sample size $T$, also referred to as the length of the time series, takes values from an integer sequence $\{50\times(i+1)\}_{i=1}^{20}$.

Given the above generated time series$\{X_0, X_1,\cdots, X_T\}$, we estimate the underlying network structure by using \cref{alg: SIMAM}. We take sparsity thresholds in \cref{alg: SIMAM} as $s_j = 4$ for $j= 1,\cdots,9$, which are slightly larger than the true sparsity levels ($s_j^*  = 3$). 
We set constant step-sizes $\eta_j$ for $j= 1,\cdots,9$ as $0.1$, and the maximum steps for iteration for all dimensions are set to be sufficiently large enough as $K_j = 2000, j = 1, \cdots,9$. Once the direction vectors $\{\hat{u}_1,\cdots, \hat{u}_9\}$ are estimated, we calculate the RMSE (square root of mean squared error) in the form of $\sqrt{\frac{1}{9}\sum_{j=1}^9\parallel \hat{u}_j -u_j^* \parallel_2^2}$, which is in fact the Frobenius norm of the influence network estimation error: $\| \widehat{A} - A^* \|_F$.

For each given sample size $T \in\{100, 150,\cdots, 1050\}$, we independently generate $100$ multi-variate time series of length $T$ with different random seeds and perform the above estimation procedure repeatedly for each of the $100$ realizations. At each sample size, we take the average over the $100$ network estimation errors. Plotting the network estimation against the sample size $T$ and in particular against $T^{-1/3}$, as is shown in \cref{fig: convergence}, we find that the estimation error converges in the rate $O(T^{-1/3})$ which supports the theoretical analysis.

\begin{figure}[ht!]
    \centering
    \includegraphics[width=0.98\linewidth]{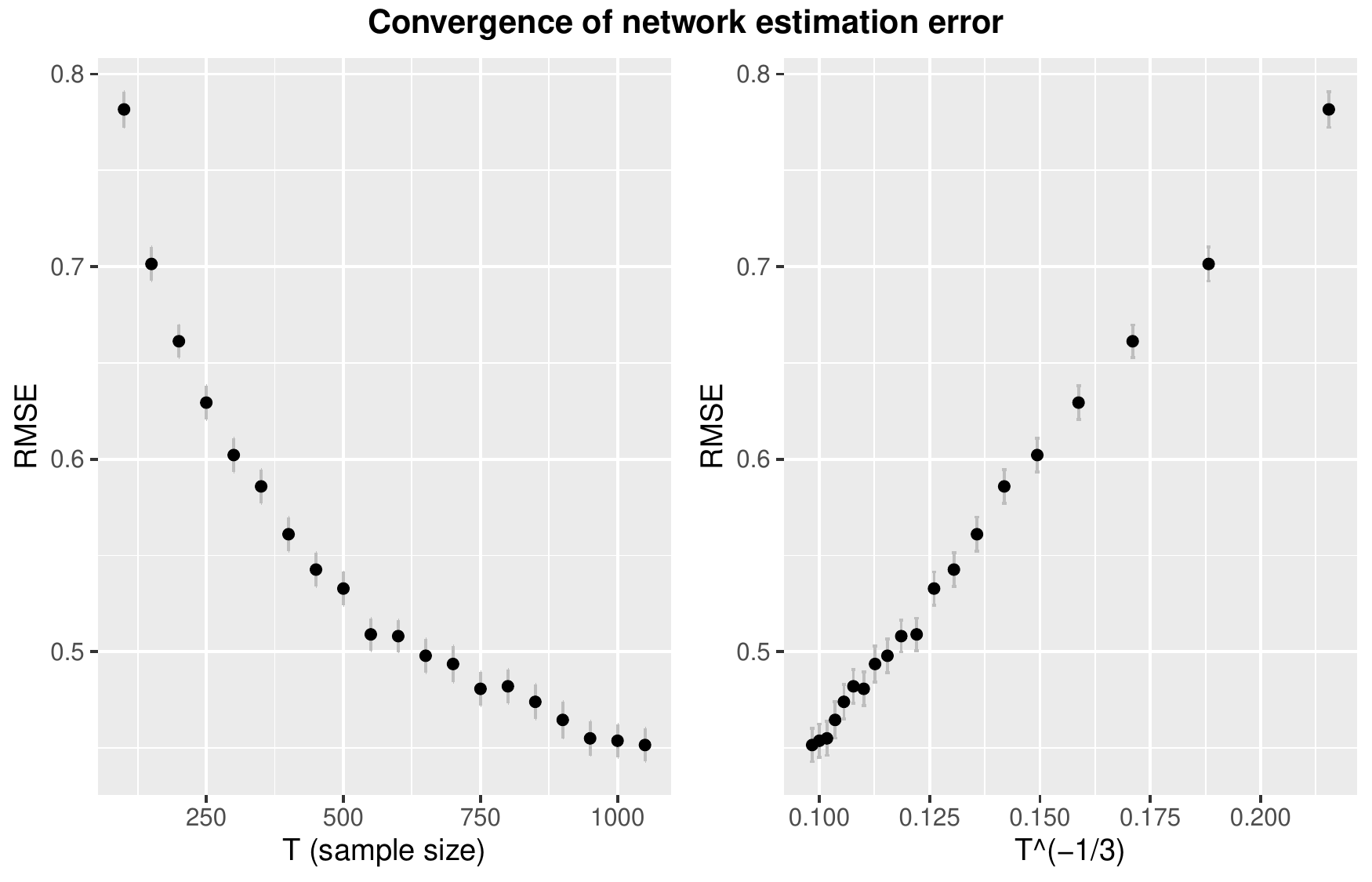}
    \caption{The convergence of the network (coefficient matrix) estimation in RMSE with respect to the sample size}
    \label{fig: convergence}
\end{figure}

Another thing we are interested in is whether the isotonic SIMAM framework works better when there is unknown non-linear structure in the data compared with other popular autoregressive methods in terms of prediction. 
We start with multivariate time series of node size $M = 9$
with monotone functions $f_2, \dots, f_{10}$ in \cref{eq: sim funcs}. (In the later example, we will increase the node size to $M = 36$ to see the prediction performance in a higher-dimension setting.)
The true coefficient matrix $A^*$ is randomly generated satisfying 
\begin{equation*}
    \|A^*_j\|_2 =1;~ \|A^*_j\|_0 = s_j^* = 3,~ \text{for any }j \in [9],
\end{equation*}
where $A_j^*$ is the $j$-th column of $A^*$.
Two types of noise distributions are considered: 
(a) Gaussian noise from N$(0,0.05^2)$; 
(b) Bounded Uniform noise from Uniform$(-0.1,0.1)$. 
The total number of observations is 1000, split into a training set (the first $9/10$ samples) and a testing set (the remaining $1/10$ samples).

As indicated in the main result (see \cref{thm: coefficient conv}), as long as the initialization satisfies $\langle u_j^{(0)}, u_j^* \rangle \geq 0$, the convergence of \cref{alg: SIMAM} is guaranteed. 
Therefore \cref{alg: SIMAM} can be adapted by replacing the initialization in \cref{alg: init1} with other initializations satisfying the above criteria. 
Here, we use the solutions from the LASSO method as a warm start for \cref{alg: SIMAM} to approximate SIMAM model. 
The step-size for \cref{alg: SIMAM} is taken as $0.01$, and the hard-threshold levels $s_j, j\in[M]$ are still taken as $4$, not accurate but slightly larger than the true parameters $s_j^* = 3$.
The prediction results in MSE are evaluated both on the training (in-sample) and test (out-of-sample) data. 
As is shown in \cref{fig: sim_9dim}, for both noise types of data, SIMAM performs better not only on the in-sample data, but also on the out-of-sample data with a better generalization performance. 
The convergence is achieved within $100$ steps, yet the generalization error increases slightly after the convergence in \cref{fig: uniform9dim}, suggesting that an early stopping after the convergence is recommended.

\begin{figure}[ht!]
    \begin{subfigure}[b]{.48\linewidth}
        \centering
        \includegraphics[scale = 0.7]{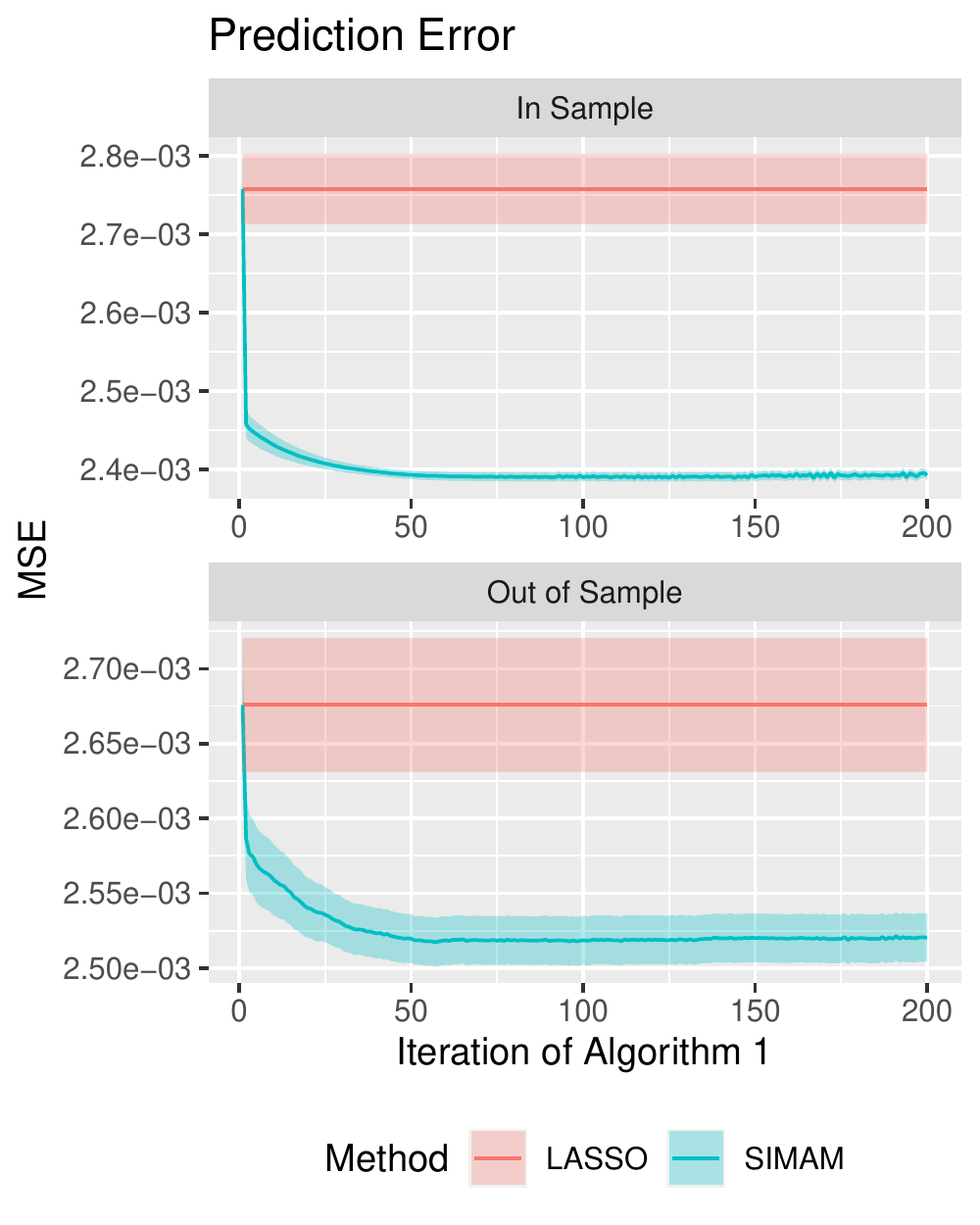}
        \caption{Time series with N$(0, 0.05^2)$ noise}\label{fig: gaussian9dim} 
    \end{subfigure}
    \begin{subfigure}[b]{.48\linewidth}
        \centering
        \includegraphics[scale = 0.7]{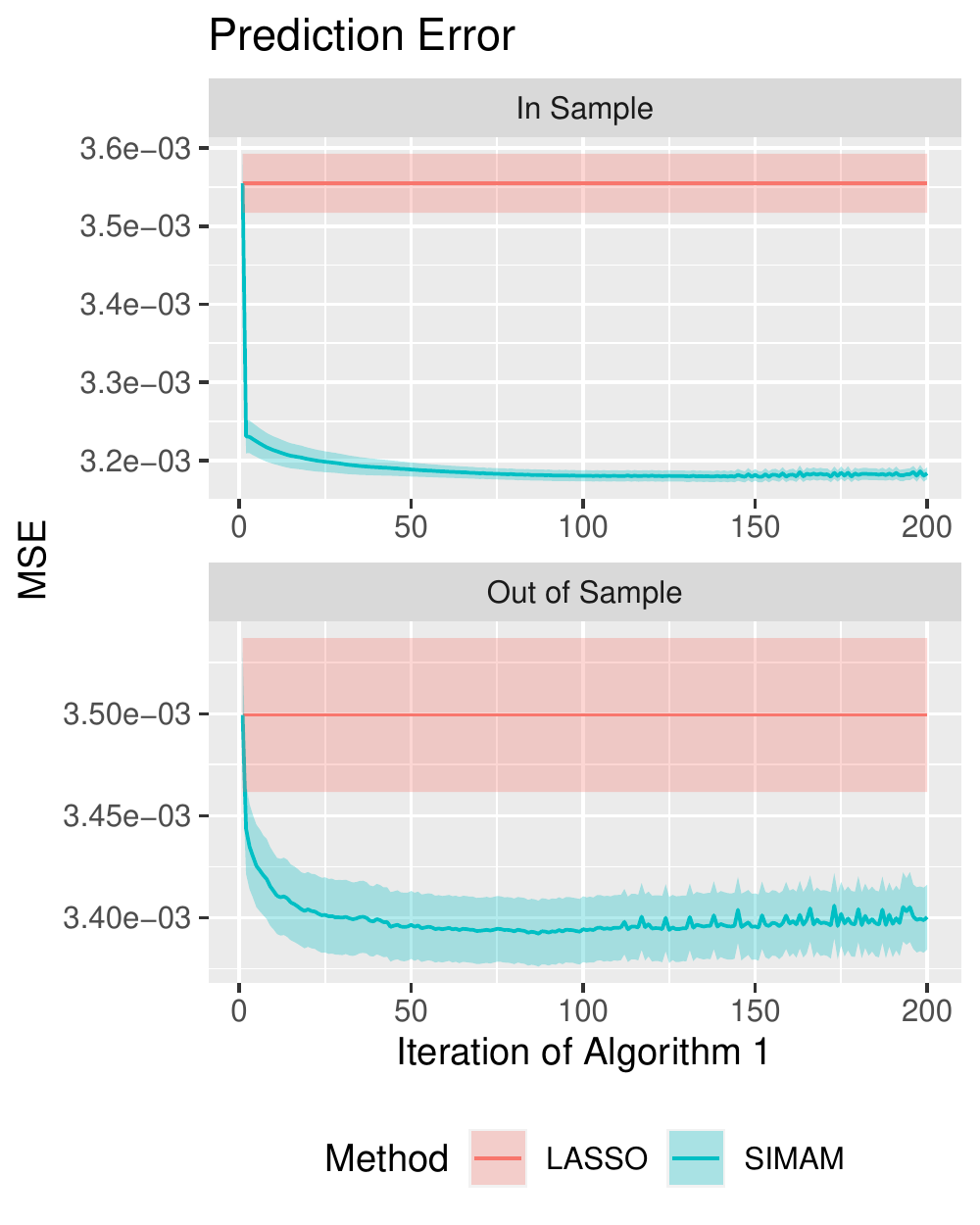}
        \caption{Time series with Uniform$(-0.1,0.1)$ noise}\label{fig: uniform9dim}
      \end{subfigure}
      \caption{Prediction Mean Squared Error (MSE) by LASSO and SIMAM for multivariate time series with $M = 9$ nodes, $900$ samples for training (in-sample) and $100$ samples for testing (out-of-sample). The left and right panels are from data generated by a SIMAM model respectively with Gaussian N$(0,0.05^2)$ noise and Uniform$(-0.1,0.1)$ noise. 
      For the SIMAM model, we use \cref{alg: SIMAM} with the LASSO solution as a warm start.
      Note that in all the panels, the $x$-axis denotes the iteration step for SIMAM in \cref{alg: SIMAM}, the prediction MSE by LASSO does not change along these iterations, making it a horizontal line. The experiment is repeated on $50$ independently generated time series data with different random seeds.}
      \label{fig: sim_9dim}
\end{figure}

In fact, as the dimension (node size) increases, such superior performance of SIMAM compared with LASSO still exists. 
In \cref{fig: sim_36dim}, we increase the dimension from $M = 9$ to $M = 36$, with the number of time points unchanged. We use the same set of monotone functions to introduce the nonlinear structure: for the $j$-th node, we generate the data with monotone function $f_l(x)$ in \cref{eq: sim funcs} where $l = [j \mod 9] +1, ~ \forall j\in [36]$. The true sparsity level in this case changes to $s_j^* = 6$. We still consider two types of noises:
(a) Gaussian noise from N$(0,0.05^2)$; 
(b) Bounded Uniform noise from Uniform$(-0.1,0.1)$.
To solve SIMAM in this case, the hard-threshold levels increases to $s_j =8$ accordingly. The step-size is still $0.01$, and we start \cref{alg: SIMAM} with the corresponding LASSO solutions. As shown in \cref{fig: sim_36dim}, when the node size has been enlarged, SIMAM still has a better prediction performance than LASSO in terms of both training and test prediction error. 

\begin{figure}[ht!]
    \begin{subfigure}[b]{.48\linewidth}
        \centering
        \includegraphics[scale = 0.7]{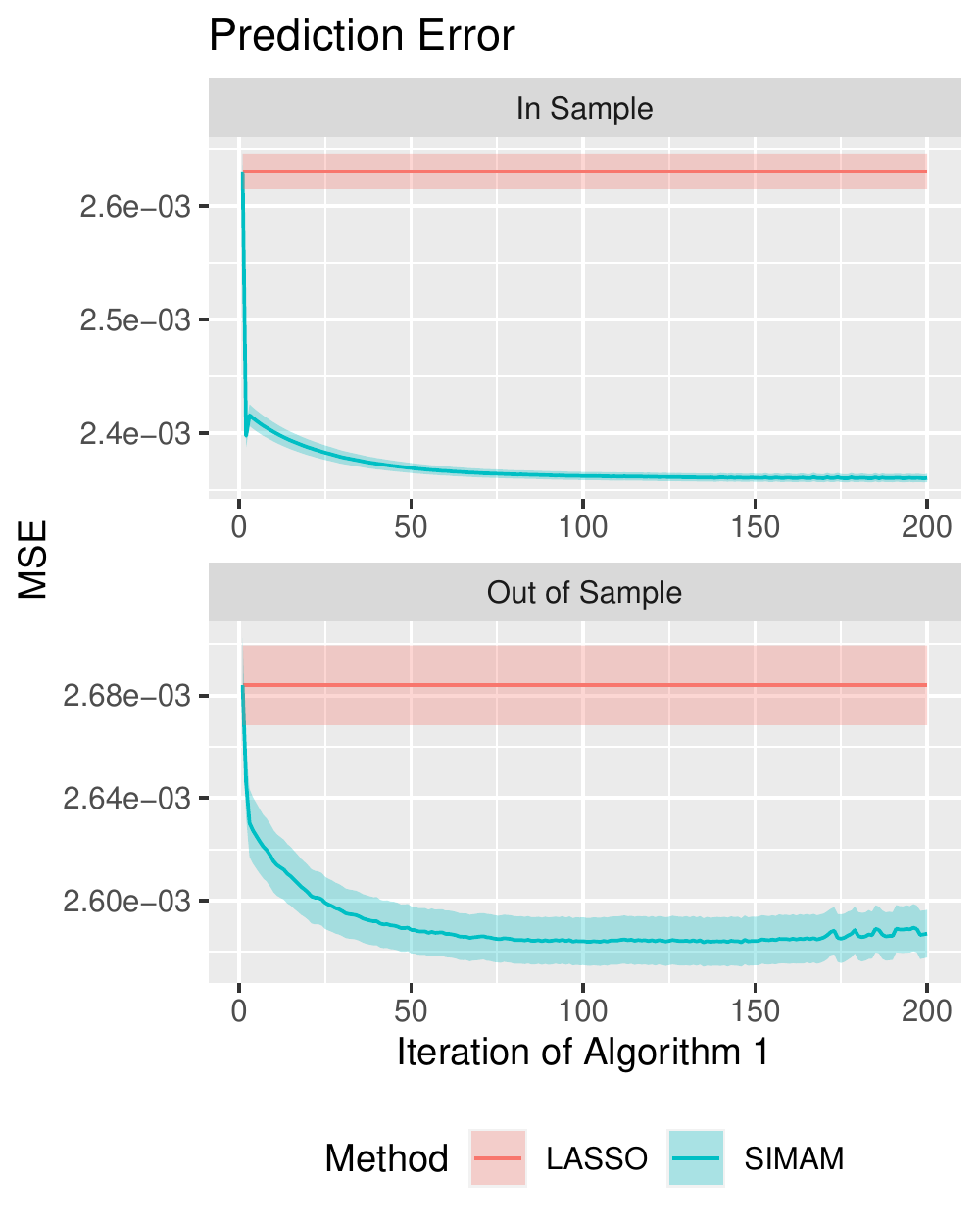}
        \caption{Time series with N$(0, 0.05^2)$ noise}\label{fig: gaussian36dim} 
    \end{subfigure}
    \begin{subfigure}[b]{.48\linewidth}
        \centering
        \includegraphics[scale = 0.7]{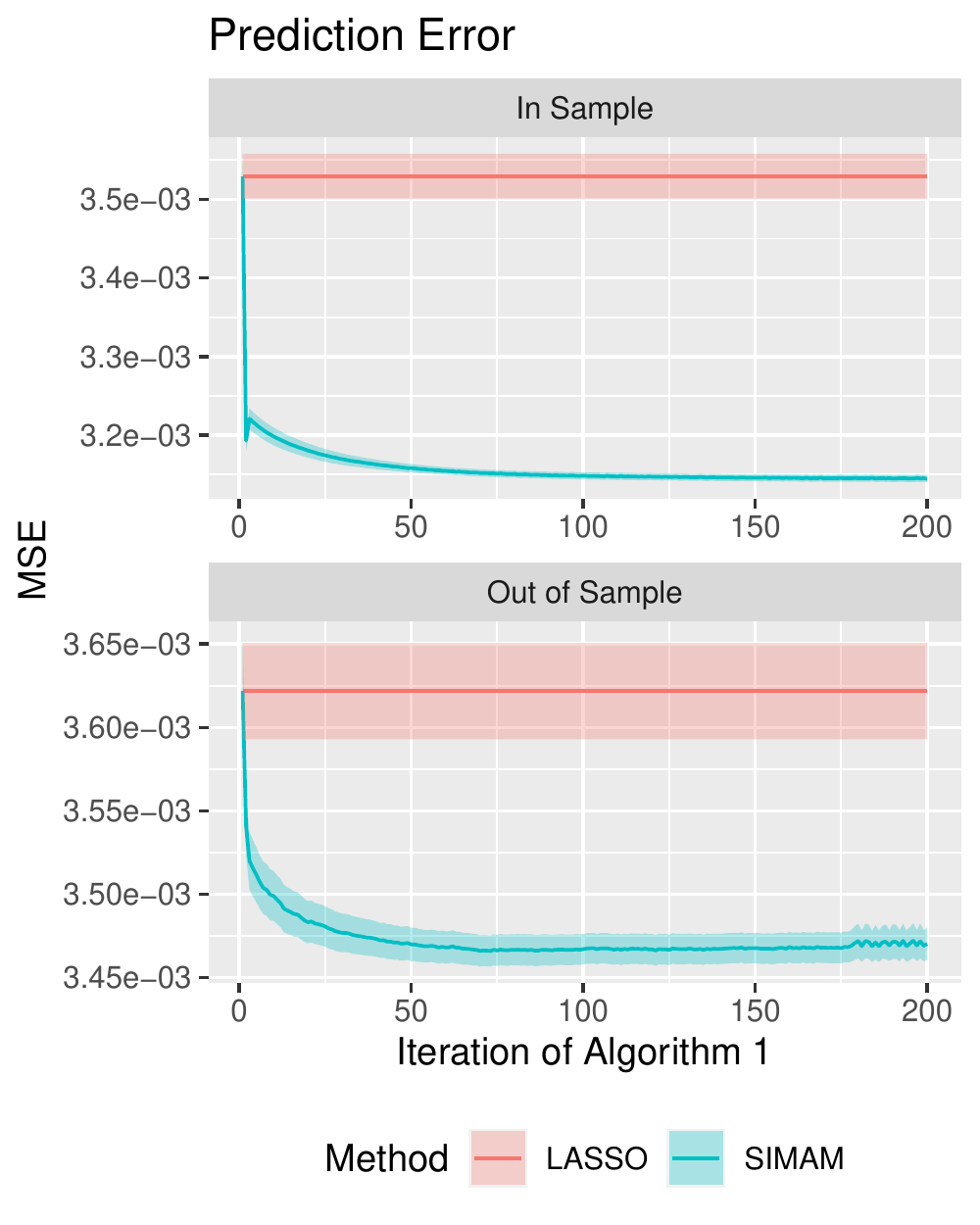}
        \caption{Time series with Uniform$(-0.1,0.1)$ noise}\label{fig: uniform36dim}
      \end{subfigure}
      \caption{Prediction MSE by LASSO and SIMAM for multivariate time series with a larger node size $M = 36$ compared to \cref{fig: sim_9dim}. Other settings are the same as \cref{fig: sim_9dim}.}
      \label{fig: sim_36dim}
\end{figure}

\section{Real Data Examples}

We validate our methodology and the main hypothesis on the Chicago crime data set and the MemeTracker data set. One challenge of real-data network estimation is the validation since there is no obvious ground truth. For both applications, we provide two types of validations: 
\begin{enumerate}
    \item Out-of-sample prediction performance showing that SIMAM fits the real data well without underfitting or overfitting compared to other popular learning methods;
    \item External knowledge of the influence network that are not included in the data set when training the model. For the Chicago crime example, we use the geographical information of communities to validate our learned community clusters. For the MemeTracker example, we use `time lag' and `pct of top quotes' indices for media influence to evaluate the top influential media sites we learned.
\end{enumerate}

\subsection{Chicago Crime Data}

We begin by seeing how our method performs on inferring a crime network based on records of Chicago crimes. Inferring the patterns of the crime locations over time and predicting the number of crime events in different areas could help the police forces work better and provide on-time security information for residents \cite{cmc.2019.06433}, which has been investigated by a number of studies, including \cite{Stomakhin_2011,Mark_2019,zhou2018non}. In the following, we use the monotone SIMAM method to conduct inference and prediction for the Chicago crime data, comparing with other popular methods.

Specifically, the crime data in 77 pre-defined community areas of Chicago from Jan 2004 to Dec 2018 are collected, focusing on severe types of crimes in each area (including homicide and battery) with a 2-day discretization.
The first $90\%$ of the data (before June 30, 2017) are regarded as the observed training set, while the remaining $10\%$ (final 18-month) data from July 2017 to Dec 2018 would serve as an out-of-sample test set to evaluate the prediction performance. To learn the SIMAM model, among the training set with $2465$ time points, the last $1/10$ part would be held out for tuning a good time $K_j$ ($K_j\leq 500$) for \cref{alg: SIMAM} to stop the iteration in the $j$-th area, where $j \in [M], ~p = 77$. Besides, we use $0.02$ as the step size, and cross validated sparsity from LASSO as the sparsity levels for hard-threshold in \cref{alg: SIMAM}.

We then obtain the estimated coefficient matrix $\widehat{A} = (\hat{u}_1,\dots, \hat u_M)$ that contains the information of the influence network, and the nonlinear monotone functions $\hat{f}_j, j=1,\dots,77$. In \cref{function_plot} some example monotone functions extracted from the Chicago crime data are given, indicating that there exist highly nonlinear and heterogeneous structure.

\begin{figure}[ht!]
    \centering
    \input{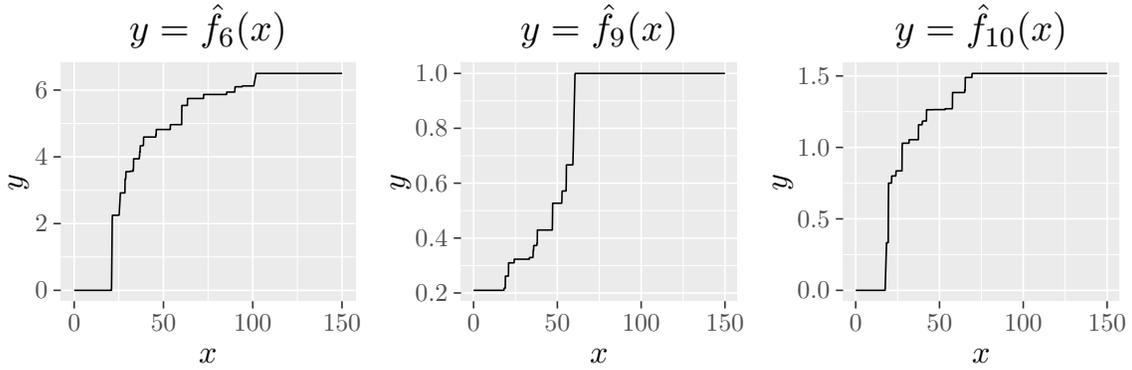}
    \caption{Example monotone functions extracted from Chicago crime data using \cref{alg: SIMAM}}
    \label{function_plot}
\end{figure}

In terms of predicting the number of the crime events in different areas, we observe a significant improvement by using monotone SIMAM, compared with the popular high dimensional multi-variate autoregressive methods including VAR (vector autoregressive model), LASSO (VAR with $\ell_1$ penalty), and PAR LASSO (generalized Poisson autoregressive model with $\ell_1$-regularization). Specifically, after constructing the above models based on the training set, we predict the crime counts in the test set, and then calculate the out-of-sample prediction RMSEs for each of the 77 pre-defined Chicago areas. 

To see whether SIMAM improves the prediction performance, since the crime counts from different Chicago areas are highly heteroscedastic, a paired t-test among the prediction RMSEs over the $77$ areas is recommended to compare the out-of-sample prediction performances. 
As shown in \cref{fig: crime_pred}, the differences of RMSEs in all $77$ areas between any baseline method and SIMAM tend to be larger than zero. 
More rigorously, $3$ sets of paired t-tests are conducted, each with the null hypothesis: the out-of-sample prediction RMSEs of SIMAM are not smaller than those of the baseline method in group 2 in \cref{table: chicago crime}:
\begin{equation}
    \text{H}_0:~ \bbE\left[\text{RMSE}_j({\text{Method 1}}) - \text{RMSE}_j({\text{Method 2}})\right] \geq 0, ~ j\in [77].
\end{equation}
The p-values in \cref{table: chicago crime} for the null hypotheses are respectively $0.0317$, $0.071$ and $8\times 10^{-9}$ , indicating that the out-of-sample prediction RMSEs from SIMAM are significantly smaller than those from PAR LASSO (Poisson Autoregressive model with $\ell_1$ penalty), LASSO and VAR.

\begin{figure}[ht!]
    \centering
    \includegraphics[width=0.98\linewidth]{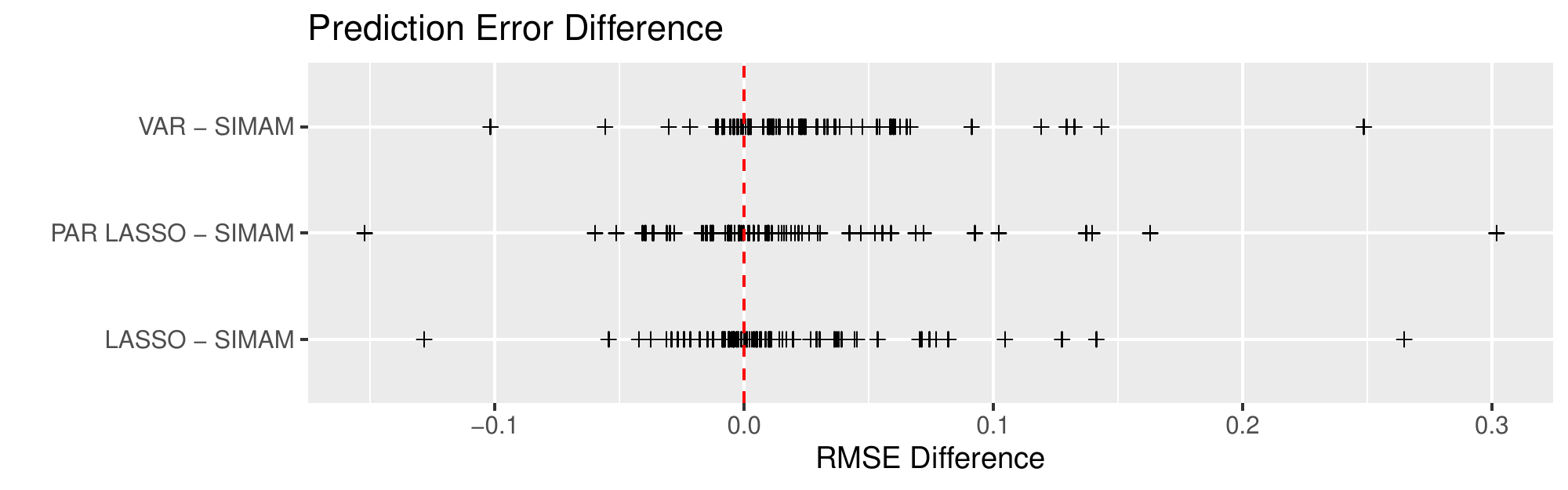}
    \caption{The prediction RMSE differences between baseline methods and SIMAM in out-of-sample Chicago crime data. For each of the $77$ pre-defined Chicago areas, by using VAR, PAR LASSO, LASSO and SIMAM method, the crime counts are predicted and the prediction errors in the test set are calculated in terms of RMSE. We take the difference between errors from the baseline methods and SIMAM for each area. In the figure, all these $77$ prediction RMSE differences for each baseline method are given.} 
    \label{fig: crime_pred}
\end{figure}

\begin{table}[ht!]
 \centering
 \begin{tabular}{lllllc}
 \hline
 Chicago Crime & Method 1 & Method 2 & p-value  & p.signif & alternative \\ 
 \hline
 RMSE & SIMAM & PAR LASSO & 0.0317  & * & $<$ \\ 
 RMSE & SIMAM & LASSO & 0.00709  & ** & $<$ \\
 RMSE & SIMAM & VAR & 8e-9 &  **** & $<$ \\ 
 \hline
 \end{tabular}
 \caption{Paired t-tests for prediction RMSEs in the out-of-sample Chicago crime data. As shown in the table, the p-values for the null hypotheses that SIMAM doesn't have a lower prediction RMSE than PAR LASSO, LASSO and VAR are all smaller than $0.05$, indicating that SIMAM has a significantly lower prediction error in the test set than the others.}
 \label{table: chicago crime}
\end{table}

To infer the patterns of crime locations over time, we conduct a spectral clustering based on the coefficient matrix $\widehat{A}$ derived from the above SIMAM procedure. Specifically, we first transform the coefficient matrix to an undirected adjacency matrix $\widetilde{A}$, by replacing all positive entries in the coefficient matrix with 1 and otherwise with 0, followed by a symmetrization with the \textit{or}-operator:
\begin{equation}
    \widetilde{A}_{ij} = \bigg\{\begin{array}{ll}
        1, & \text{if } \widehat{A}_{ij} >0 \text{ or } \widehat{A}_{ji} >0; \\
        0, & \text{else.}
    \end{array}
\end{equation}
Therefore $\widetilde{A}_{ij} = 1$ means that the crime events in the $i$-th area influence or is influenced by the $j$-th area.
Having acquired the adjacency matrix $\widetilde{A}$, we apply the standard spectral clustering algorithm with cluster number $K_{cluster} = 4$ (see \eg \cite{Rohe_2011, zhou2018non}). 
Hence, we obtain a block clustering for the patterns of crime locations in Chicago only according to our estimated coefficient matrix. 
The result is shown in \cref{fig: chicago_map}, with colors indicating cluster membership.
Note that without any knowledge of the geographical information such as latitudes and longitudes in the data, the clustering results based on the SIMAM estimated coefficients have clear patterns that conform with actual geographical locations, providing some validation to the estimated influences of crime events among these areas.

\begin{figure}[ht!]
 \centering
 \includegraphics[width = 0.98\linewidth]{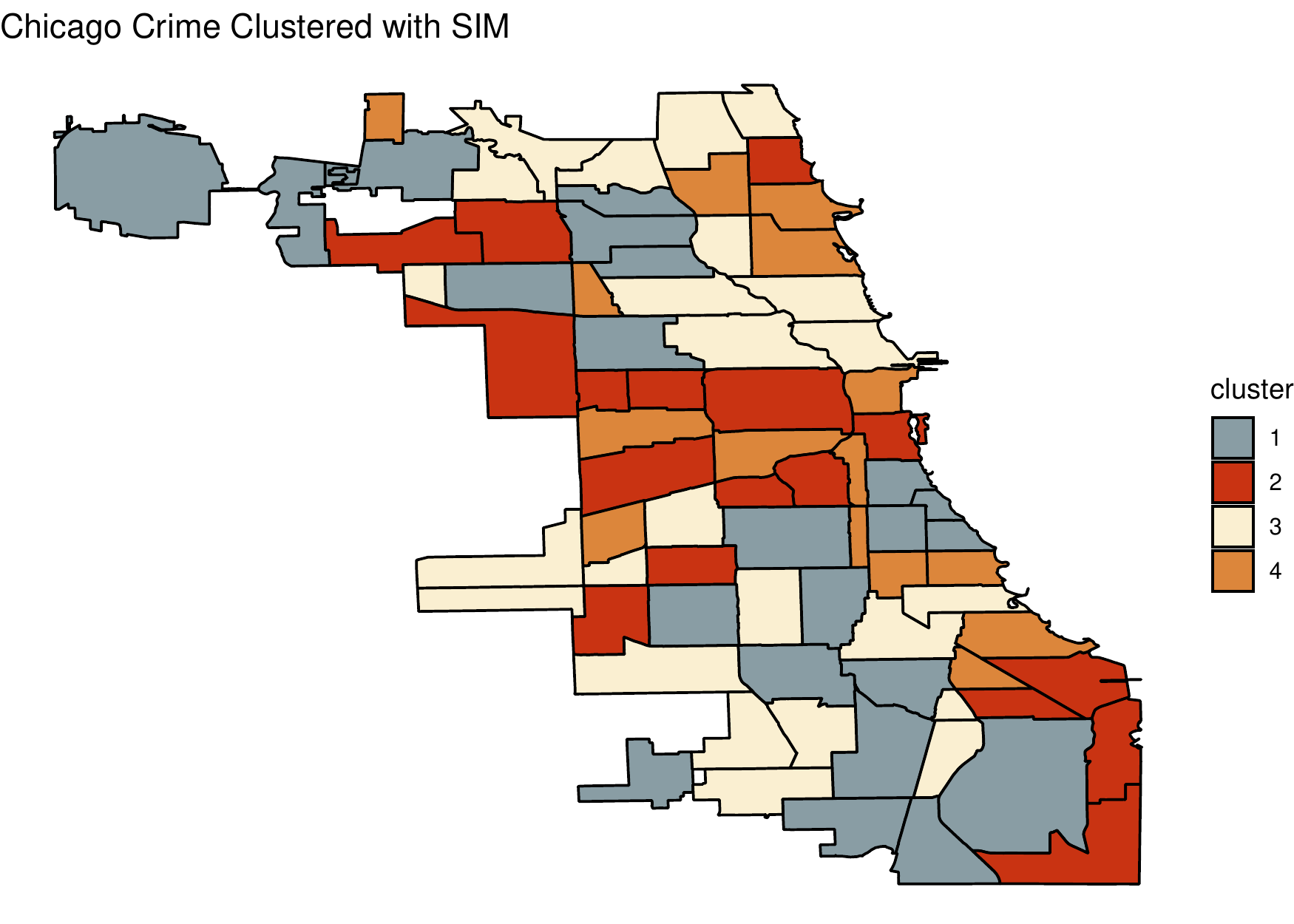}
 \caption{Clusters learned from crime data using SIMAM. The clusters are overlaid on a map of community areas in Chicago. No geospatial information is provided in the data, but clusters show geographical patterns.}
 \label{fig: chicago_map}
\end{figure}

\subsection{MemeTracker: social media data}

An interest in social networks is to infer the influence network among media sources, based on which one can have a better sight of the information flow. We collect news event data for 197 sources of media in a period from August 2008 to December 2009, with one-hour discretization. Thus the total sample size of such time series is 3602
\citep{Mark_2019}. 

We use popular methods including LASSO, PAR LASSO and VAR to analyze this MemeTracker data set as baseline methods to compare with our proposed monotone SIMAM. 

When we implement \cref{alg: SIMAM} for SIMAM, the maximum number of iterations is set to be 100, up to which we have observed a good performance in convergence and accuracy. Step sizes $\eta_j, ~j\in [197]$ are equally fixed as $0.05$, according to pre-experiment. To avoid underfitting or overfitting, we split the data into three parts with the proportions $8:1:1$ (training : validation: testing), through which the iteration stop time is tuned separately for each of the $197$ media source based on the prediction performance on the validation set. The sparsity levels for hard-thresholding are estimated by the cross validation procedure in LASSO, \ie for each $j\in [197]$, we use the $\ell_0$ norm of the LASSO solution under the sparsity parameter chosen by cross validation as the sparsity threshold $s_j$.

Once we have constructed the all the abovementioned models within the training set, we calculate their prediction RMSEs on the out-of-sample data for each of the $197$ media sources. Since the post numbers from different media sources might be highly heteroscedastic, a paired t-test is recommended to test the differences of prediction RMSEs between any baseline method and SIMAM.
The out-of-sample prediction RMSE differences are visualized in \cref{fig: meme_pred}. Following that, three paired t-tests are conducted with null hypotheses that SIMAM has no smaller prediction RMSEs than `Method 2' in \cref{table: meme_predict}. 
The p-values as shown in \cref{table: meme_predict} are all smaller that $10^{-4}$, indicating a very strong evidence that SIMAM has smaller prediction RMSEs than other methods.

\begin{figure}[ht!]
 \centering
 \includegraphics[width = 0.98\linewidth]{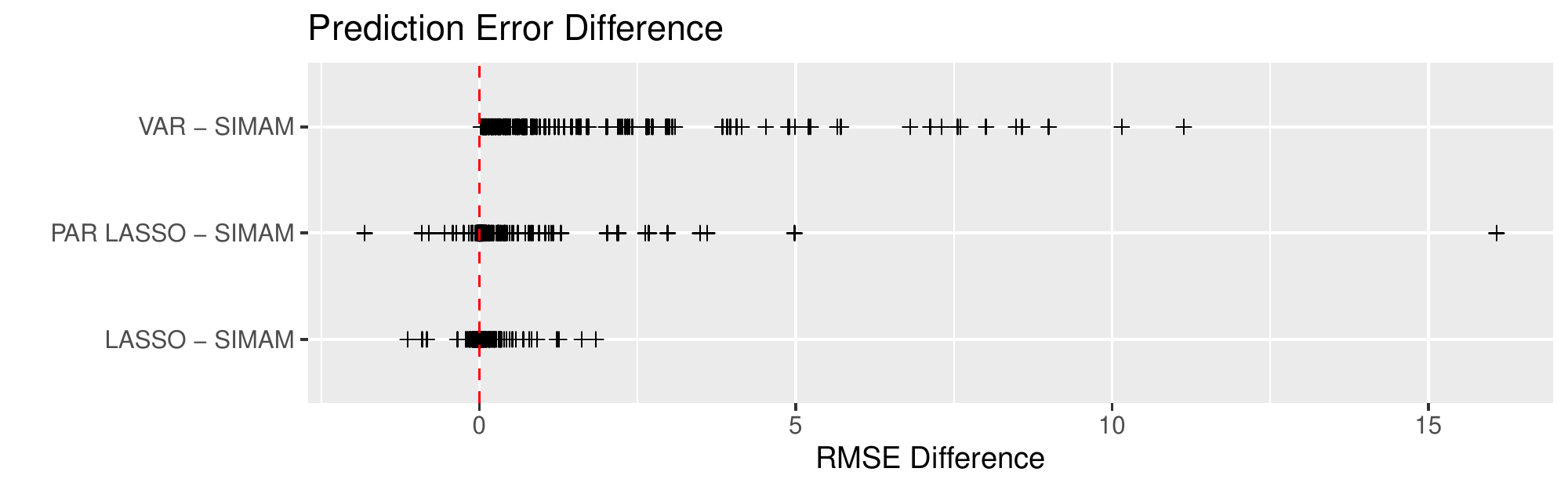}
 \caption{The prediction RMSE difference between baseline methods and SIMAM in out-of-sample MemeTracker data. For each of the $197$ media sources, by using VAR, PAR LASSO, LASSO and SIMAM method, the crime counts are predicted and the prediction errors in the test set are calculated in terms of RMSE. In the figure, all these $197$ prediction RMSE differences for each baseline method are given.}
 \label{fig: meme_pred}
\end{figure}

\begin{table}[ht!]
 \centering
 \begin{tabular}{lllllc}
 \hline
 MemeTracker & Method 1 & Method 2 & p-value  & p.signif & alternative \\ 
 \hline
 RMSE & SIMAM & LASSO & 5.06e-04  &*** & $<$ \\ 
 RMSE & SIMAM & PAR LASSO & 3.56e-06  & **** & $<$ \\ 
 RMSE & SIMAM & VAR & 7.72e-20 & **** & $<$ \\ 
 \hline
 \end{tabular}
 \caption{Paired t-tests for prediction RMSEs in the out-of-sample MemeTracker data. As shown in the table, the p-values for the null hypotheses that SIMAM doesn't have a lower prediction RMSE than PAR LASSO, LASSO and VAR are all smaller than $0.05$, indicating that SIMAM has a significantly lower prediction error in the test set than the others.}
 \label{table: meme_predict}
\end{table}

Since the element of the coefficient matrix ${A}_{ij}$ indicates the influence of the $i$-th media source on the $j$-th media source for any $i,j\in [197]$, 
the $i$-th row sum of the coefficient matrix ${A}$ can measure the overall influence of the $i$-th media source upon the entire investigated social media network. 
Therefore, to examine the structure of our learned coefficient matrix $\widehat{A}^{\text{SIMAM}}$ using SIMAM compared with $\widehat{A}^{\text{LASSO}}$ using LASSO, we add up each row of $\widehat{A}^{\text{SIMAM}}$ and $\widehat{A}^{\text{LASSO}}$.
We then rank the influences of the $i$-th media source respectively by $\sum_{j=1}^{197} \widehat{A}^{\text{SIMAM}}_{ij}$ and $\sum_{j=1}^{197} \widehat{A}^{\text{LASSO}}_{ij}$.

We therefore extract the top $20$ media sources with the first $20$ largest row sums among all $197$ sources, using both Lasso and SIMAM.
To see whether these learned top influential media sources are indeed influential, we use some external knowledge of these media sources provided in \url{http://snap.stanford.edu/memetracker/}. The measures of influence for media sources include:
\begin{enumerate}
 \item \textit{Time lag}: number of hours between the time a media site first reported a story and when the story (quote) reached its peak. Negative times mean that site reported the news before it reached its peak, and positive numbers mean that the site was lagging and only reported the news after it reached its peak.
 \item \textit{Pct of top quotes}: Fraction of top stories (quotes) the site covered. The higher the number, the more important news was covered by the site.
\end{enumerate}

Hence, to compare the actual influence of the top 20 media sources learned respectively from SIMAM and LASSO, we look up their `time lag' and `pct of top quotes' indices in the MemeTracker website.
As shown in \cref{table: meme_rank}, the top $20$ media sources extracted by SIMAM have significantly lower negative time lags, 
and higher fractions of top quotes on average than LASSO. Therefore the media sources learned by SIMAM are more `influential', in the sense that they tend to report hot stories earlier and have more top stories (quotes) covered than the ones learned from LASSO. 
This also gives some validation that SIMAM can help us to have a better sight of  the information flow in the social media network.

\begin{table}[ht!]
 \centering
 \begin{adjustbox}{width=1\textwidth}
 \begin{tabular}{clrr|lrr}
 \hline
 &\multicolumn{3}{c}{\textbf{SIMAM}} & \multicolumn{3}{c}{\textbf{LASSO}}\\
 \hline
 rank & \textbf{media source} &\textbf{time lag}&\textbf{pct of top}&\textbf{media source}&\textbf{time lag}&\textbf{pct of top}\\ 
 \hline
 1 & cbc.ca & -7.00 & 31 & thinkprogress.org & -6.50 & 30 \\ 
 2 & cnn.com & -16.50 & 54 & washingtonmonthly.com & -11.50 & 34 \\ 
 3 & blogs.wsj.com & -10.50 & 32 & hotair.com & -26.50 & 42 \\ 
 4 & ctv.ca & -6.50 & 62 & features.csmonitor.com & -0.50 & 32 \\ 
 5 & instablogs.com & -10.00 & 80 & blogs.wsj.com & -10.50 & 32 \\ 
 6 & clkurl.com & -11.00 & 81 & cnsnews.com & -4.00 & 53 \\ 
 7 & npr.org & -5.00 & 43 & kctv5.com & 2.00 & 30 \\ 
 8 & wnbc.com & -2.00 & 30 & cbs46.com & -2.00 & 31 \\ 
 9 & iht.com & -10.00 & 78 & wbbm780.com & -6.00 & 31 \\ 
 10 & bostonherald.com & -9.00 & 74 & wnbc.com & -2.00 & 30 \\ 
 11 & economy.finance.com & -7.00 & 31 & capitolhillblue.com & 3.00 & 46 \\ 
 12 & wral.com & -10.00 & 55 & primebuzz.kcstar.com & -6.00 & 41 \\ 
 13 & abcnews.go.com & -10.50 & 58 & cnnpoliticalticker.wordpress & -19.50 & 56 \\ 
 14 & huffingtonpost.com & -18.00 & 73 & cbc.ca & -7.00 & 31 \\ 
 15 & usnews.com & -5.50 & 44 & themonitor.com & -2.00 & 49 \\ 
 16 & afp.google.com & -8.00 & 48 & fresnobee.com & -6.50 & 42 \\ 
 17 & ap.google.com & -10.00 & 79 & unionleader.com & -9.00 & 31 \\ 
 18 & forums.somd.com & -4.00 & 35 & afp.google.com & -8.00 & 48 \\ 
 19 & hotair.com & -26.50 & 42 & wcpo.com & -5.50 & 46 \\ 
 20 & macleans.ca & -10.00 & 67 & voteoften.us & -8.00 & 47 \\ 
 \hline
 {}& \textbf{Mean} & \textbf{-9.85}&\textbf{54.85}&{}&\textbf{-6.8}&\textbf{39.1}\\
 \hline
 \end{tabular}
 \end{adjustbox}
 \caption{Top $20$ influential media sources learned by SIMAM and LASSO. In the table, the column `media source' contains the top $20$ media sources learned respectively by SIMAM and LASSO; we evaluate the quality and influence of these learned media sources by two external indices: `time lag' contains the corresponding time lag indices for each media source, the smaller the number, the earlier the media source report the hot news; `pct of top' represents the percentage of top stories (quotes) a media source covered, the larger the number, the more important news was covered. Both `time lag' and `pct of top' are not provided in the data.}
 \label{table: meme_rank}
\end{table}

\section{Proof Overview}

One of the major challenges to prove the theoretical results comes from the unknown non-linear structures in the monotone SIMAM. 
Compared to prior methods such as the AR model and GLAR model, whose proofs rely heavily on the parametric assumptions and the constraints of the parameters, SIMAM introduces a set of highly non-linear functions that are unknown, making it hard to find analytical solutions and optimize simultaneously over the direction vectors and the functions. The theoretical guarantees for the estimator from \cref{alg: SIMAM} are more challenging to derive than the previous MLE type of estimators, since the optimization of the unknown monotone function changes each time after the direction vector is updated, and vice versa for estimating the direction vector. To tackle this challenge, we utilize the contractiveness property of isotonic functions (\cref{lemma: contractive}) and the bounded complexity of the reference space (\cref{lemma: permutation}), so as to give the \emph{uniform} bound over all possible isotonic reference vectors to control the deviation regardless of the changes in the alternating procedure (see \cref{lemma: sup_2_norm}). 

Another technical challenge comes from the dependence structure of the data, which makes the proofs for single index models in the i.i.d case no longer suitable. Based on the conditions analogous to the restricted strong convexity implicitly included in the identifiability assumption (\cref{eq: identifiability}), and the mild assumption for the noise (\cref{asspt: noise}), we use the martingale concentration inequalities to prove the non-asymptotic results for the dependent data.

In the following, we provide the major steps to prove \cref{thm: coefficient conv}, while the mathematical details and complete proofs for other results would be deferred to the appendix (\cref{section: appendix}).
\subsection{Iterative Guarantee}

To prove \cref{thm: coefficient conv}, we first show that after each iteration step, the updated estimator $u_j^{(k)}$ get closer to the true parameters $u_j^*$ compared to $u_j^{(k-1)}$ in a linear convergence manner with a remainder term, for any $j\in [M],~ k\geq 1$.
\begin{lemma}
    For any $j \in [M]$ and iteration step $k \in \bbN^+$, suppose the assumptions \cref{asspt: mono_Lip}, \cref{eq: identifiability} and \cref{asspt: data_bound} are satisfied, as long as $\langle u_j^{(k-1)},u_j^* \rangle \geq 0$, we have
    \begin{equation}
        (1-\sqrt{\frac{s_j^{*}}{s_j}} - \frac{\alpha_j \delta_j}{L_j^2\beta})\parallel u_j^{(k)}- u_j^{*}\parallel_2^2 \leq (1-\frac{\alpha_j}{L_j^2\beta}) \parallel u_j^{(k-1)} - u_j^{*} \parallel_2^2 +\frac{\epsilon_j^2}{L_j^2\beta} +\frac{C^2(Z,u_j^{(k-1)})}{\delta_j\alpha_j\beta}
    \end{equation}
    for any positive scalar $\delta_j>0$,  with the numerator of the last term formulated as
    \begin{equation}
        \begin{split}
            C(Z_j, u_j^{(k-1)}) =& \frac{\sqrt{2s_j+s_j^*}}{N}\parallel \X_{-N}^T \left(\X_{-0,j} - f_j^{*}(\X_{-N}\cdot u_j^{*})\right) \parallel_{\infty} \\
            +& \sqrt{\beta/N} \parallel \iso_{\bfv_{k-1}} (f_j^{*}(\X_{-N}\cdot u_j^{*})) - \iso_{\bfv_{k-1}}(\X_{-0,j})
            \parallel_2,\\
        \end{split}
    \end{equation}
    where $\bfv_{k-1} = \X_{-N}\cdot u_j^{(k-1)}$.
    \label{lemma: iterative bound}
\end{lemma}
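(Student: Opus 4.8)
The plan is to fix the index $j$ (suppressing it from the notation where convenient, and writing $N=T$ for the sample size) and analyze one iteration in two stages: the orthogonal pseudo-gradient step producing $\tilde u^{(k)}$, and the hard-thresholding/normalization step producing $u^{(k)}$. Accordingly I would prove two bounds and chain them. First, a purely geometric thresholding lemma: since $u^{(k)} = \Phi_{s_j}(\tilde u^{(k)})/\|\Phi_{s_j}(\tilde u^{(k)})\|_2$ is the normalized best $s_j$-sparse approximation of $\tilde u^{(k)}$ while $u_j^*$ is an $s_j^*$-sparse unit vector with $s_j^* < s_j$, comparing the active supports and using that any scaling $\lambda u_j^*$ is a feasible $s_j$-sparse competitor yields a contraction of the form $(1 - \sqrt{s_j^*/s_j})\,\|u^{(k)} - u_j^*\|_2^2 \le \|\tilde u^{(k)} - u_j^*\|_2^2$ up to the sign hypothesis; this is where the $\sqrt{s_j^*/s_j}$ factor on the left-hand side originates. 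Second, and more substantially, I would establish a contraction bound for $\|\tilde u^{(k)} - u_j^*\|_2^2$.

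For the gradient step, write $v = \X_{-0,j}$, $r^* = f_j^*(\X_{-T} u_j^*)$ so that $v = r^* + Z_j$, and $\hat r = \iso_{\X_{-T} u^{(k-1)}}(v)$; set $h = \tfrac1T \X_{-T}^T(\hat r - v)$ so that the update reads $\tilde u^{(k)} = u^{(k-1)} - \eta_j\,\mathcal{P}_{u^{(k-1)}}^{\perp}(h)$. Expanding the square and using that $\mathcal{P}_{u^{(k-1)}}^{\perp}$ is a self-adjoint projection annihilating $u^{(k-1)}$, I obtain
\[
\|\tilde u^{(k)} - u_j^*\|_2^2 = \|u^{(k-1)} - u_j^*\|_2^2 + 2\eta_j\langle \mathcal{P}_{u^{(k-1)}}^{\perp}(u_j^*),\, h\rangle + \eta_j^2\|\mathcal{P}_{u^{(k-1)}}^{\perp}(h)\|_2^2 .
\]
The heart of the argument is to show that the linear term is sufficiently negative. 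I would rewrite $\langle \mathcal{P}_{u^{(k-1)}}^{\perp}(u_j^*), h\rangle = \tfrac1T\langle \hat r - v,\ \X_{-T}\mathcal{P}_{u^{(k-1)}}^{\perp}(u_j^*)\rangle$, split $\hat r - v = (\hat r - r^*) - Z_j$, and further decompose $\hat r - r^* = [\iso_{v_{k-1}}(v) - \iso_{v_{k-1}}(r^*)] + [\iso_{v_{k-1}}(r^*) - r^*]$ with $v_{k-1} = \X_{-T} u^{(k-1)}$. The variational inequality for the isotonic projection onto the order cone of $v_{k-1}$, together with the monotonicity and $L_j$-Lipschitz property of $f_j^*$ (so that $r^*$ is order-preserving), lets me extract a term bounded below by a multiple of $\tfrac1T\|\hat r - r^*\|_2^2$; the identifiability assumption \cref{eq: identifiability}, applied with $u_j=u^{(k-1)}$ and the isotonic fit as $f_j$, then lower-bounds this by $\alpha_j\|u^{(k-1)} - u_j^*\|_2^2 - \epsilon_j^2$. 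The residual pieces — the noise correlation $\tfrac1N\X_{-N}^T Z_j$ restricted to the $(2s_j+s_j^*)$-sparse active support (bounded in $\ell_\infty$ after Cauchy--Schwarz) and the isotonic discrepancy $\|\iso_{v_{k-1}}(r^*) - \iso_{v_{k-1}}(v)\|_2$ (scaled by $\sqrt{\beta/N}$ via the eigenvalue bound \cref{asspt: data_bound}) — are precisely the two summands assembled into $C(Z_j, u^{(k-1)})$.

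For the quadratic term I would bound $\|\mathcal{P}_{u^{(k-1)}}^{\perp}(h)\|_2 \le \|h\|_2$ and, since only the $(2s_j+s_j^*)$-sparse active coordinates survive the subsequent thresholding, invoke \cref{asspt: data_bound} to control $\tfrac1T\X_{-T}^T(\cdot)$ restricted to that support; combined with $\eta_j = 1/(L_j\beta)$ and the Lipschitz bound \cref{asspt: mono_Lip}, this is what produces the $L_j^2\beta$ denominators throughout. The cross term carrying the $C$-contribution is then split by Young's inequality with the free parameter $\delta_j$: one half is absorbed as a quadratic proportional to $\|u^{(k)} - u_j^*\|_2^2$ — giving, after it is combined with the thresholding lemma, the $-\tfrac{\alpha_j\delta_j}{L_j^2\beta}\|u^{(k)} - u_j^*\|_2^2$ contribution moved to the left-hand side — while the other half becomes $\tfrac{C^2(Z,u^{(k-1)})}{\delta_j\alpha_j\beta}$. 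Collecting the first-order $\alpha_j\|u^{(k-1)}-u_j^*\|_2^2$ progress against the second-order over-shoot controlled by the step size yields the net factor $1 - \tfrac{\alpha_j}{L_j^2\beta}$, and the $\tfrac{\epsilon_j^2}{L_j^2\beta}$ term is inherited directly from identifiability.

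The sign hypothesis $\langle u^{(k-1)}, u_j^*\rangle \ge 0$ enters to guarantee that the tangential step points toward $u_j^*$, i.e. that the linear term retains the correct negative sign after projection, reflecting the geometric fact that on the unit sphere a tangential gradient reduces the angle to the target only when the current iterate is not anti-aligned with it. I expect the main obstacle to be the extraction of the $\tfrac1T\|\hat r - r^*\|_2^2$ lower bound from the linear term: because $\hat r$ is isotonic with respect to the \emph{wrong} ordering $v_{k-1}$ rather than the true ordering $\X_{-T}u_j^*$, the clean projection identities do not apply verbatim, and reconciling the two orderings while keeping the discrepancy confined to the controllable quantity $C(Z_j, u^{(k-1)})$ — rather than letting it contaminate the $\alpha_j$-progress term — is the delicate step that relies on the contractiveness of isotonic regression (\cref{lemma: contractive}, \cref{cor: contract}) and on the monotonicity of $f_j^*$.
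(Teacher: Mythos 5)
Your proposal assembles most of the right local ingredients --- the three-way splitting of the residual, the use of \cref{lemma: lip_bound} together with the convex-cone inequality $\langle \omega - \iso_{\bfv}(\omega), \bfv\rangle \le 0$ and the acute-angle hypothesis to extract the progress term, identifiability to convert it into $\alpha_j\|u_j^{(k-1)} - u_j^*\|_2^2 - \epsilon_j^2$, H\"older against a $(2s_j+s_j^*)$-sparse vector for the noise, and Young's inequality with the free parameter $\delta_j$. But the overall architecture has a genuine gap: by decoupling the thresholding step from the gradient step and passing first to $\|\tilde u_j^{(k)} - u_j^*\|_2^2$, you create a quadratic term $\eta_j^2\|\mathcal{P}_{u_j^{(k-1)}}^{\perp}(h)\|_2^2$ that nothing in your argument can absorb. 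This term is not small: $h$ contains the piece $\frac{1}{T}\X_{-T}^T\left(f_j^*(\X_{-T}u_j^*) - \iso_{\bfv_{k-1}}(f_j^*(\X_{-T}u_j^*))\right)$, and since $f_j^*(\X_{-T}u_j^{(k-1)})$ is itself a feasible isotonic competitor with respect to the ordering of $\bfv_{k-1}$, one has $\|f_j^*(\X_{-T}u_j^*) - \iso_{\bfv_{k-1}}(f_j^*(\X_{-T}u_j^*))\|_2 \le L_j\sqrt{\beta T}\,\|u_j^{(k-1)} - u_j^*\|_2$; consequently, even granting a sparse restriction, the quadratic term is of order $\eta_j^2 L_j^2\beta^2\|u_j^{(k-1)} - u_j^*\|_2^2 = \|u_j^{(k-1)} - u_j^*\|_2^2$ with constant one. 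Your linear term supplies progress $-\frac{2\alpha_j}{L_j^2\beta}\|u_j^{(k-1)} - u_j^*\|_2^2$, so the net coefficient is $2 - \frac{2\alpha_j}{L_j^2\beta}$, which exceeds $1$ whenever $\alpha_j < L_j^2\beta/2$ --- essentially the entire regime in which the theorem is meant to apply --- and no contraction results. A second problem: $h$ is a dense $M$-vector, so bounding $\|\mathcal{P}_{u_j^{(k-1)}}^{\perp}(h)\|_2$ requires controlling $\frac{1}{T}\X_{-T}^T w$ for dense residuals $w$ across all $M$ coordinates; assumption \cref{asspt: data_bound} controls only sparse directions, and the noise component of such a bound would carry a factor $\sqrt{M}$, destroying the high-dimensional rate. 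Your aside that ``only the sparse coordinates survive the subsequent thresholding'' cannot repair this, because in your decomposition the thresholding has already been peeled off before the quadratic term appears.

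The paper's proof avoids both problems by never expanding around $\tilde u_j^{(k)}$. It applies the three-point identity around the thresholded point $\Phi_{s_j}(\tilde u_j^{(k)})$, so the gradient correlation is measured against $\Phi_{s_j}(\tilde u_j^{(k)}) - u_j^*$ --- a vector with at most $2s_j+s_j^*$ nonzeros after applying $\mathcal{P}_{u_j^{(k-1)}}^{\perp}$, which is exactly what legitimizes the $\ell_\infty$ and sparse-eigenvalue bounds --- and the identity supplies the negative term $-\|u_j^{(k-1)} - \Phi_{s_j}(\tilde u_j^{(k)})\|_2^2$. The Young-inequality overshoot $\frac{L_j\beta T}{2}\|\Phi_{s_j}(\tilde u_j^{(k)}) - u_j^{(k-1)}\|_2^2$ generated when extracting the progress term is then annihilated \emph{exactly} by that negative term once $\eta_j = 1/(L_j\beta)$. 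This cancellation is the crux of the lemma, and it is structurally unavailable in your two-stage decomposition. To repair your proof you would have to re-couple the steps: keep the hard-thresholded iterate inside the three-point identity as the paper does, and finish with the paper's separate observation that normalization only decreases the distance to $u_j^*$ (via $\|\Phi_{s_j}(\tilde u_j^{(k)})\|_2 \ge 1$ and projection onto the unit ball).
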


\subsection{Bounding the remainder term}

To further control $C(Z_j,u_j^{(k-1)})$ in the remaining term for every iteration step $k = 1,2,\dots$ which determines the statistical error bound, the following two lemmas are provided that can reveal some rationales behind \cref{alg: SIMAM}. 

\begin{lemma}
   With the martingale difference assumption \cref{asspt: noise} and the data boundedness assumption \cref{asspt: entry_bound} satisfied, for any $j \in [M]$, we have
   \begin{equation}
    \frac{1}{M}\parallel \X_{-M}^T \left(\X_{-0,j} - f_j^{*}(\X_{-M}\cdot u_j^{*})\right) \parallel_{\infty} \leq \frac{\sigma_j M_x}{\sqrt M}\sqrt{2 \log(\frac{2M}{\gamma})} 
   \end{equation}
   with probability at least $1- \gamma$.
   \label{lemma: infty_norm_z}
\end{lemma}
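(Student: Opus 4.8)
The plan is to recognize that the vector inside the $\infty$-norm is exactly the empirical cross-correlation between the bounded lagged design and the noise, and then to control each of its coordinates by a scalar martingale concentration bound followed by a union bound over coordinates. No mixing condition is needed; everything rests on the martingale-difference structure of \cref{asspt: noise} together with the boundedness of \cref{asspt: entry_bound}.

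First I would observe that the residual $\X_{-0,j} - f_j^{*}(\X_{-M}\cdot u_j^{*})$ is, coordinatewise, precisely the noise vector: its $t$-th entry is $X_{t,j} - f_j^{*}(X_{t-1}^T u_j^{*}) = Z_{t,j}$, so this vector equals $\Z_j$. Consequently the $i$-th coordinate of $\X_{-M}^T \Z_j$ is the scalar sum $S_i := \sum_{t=1}^{M} X_{t-1,i}\, Z_{t,j}$, and $\| \X_{-M}^T(\X_{-0,j} - f_j^{*}(\X_{-M}\cdot u_j^{*})) \|_{\infty} = \max_{i} |S_i|$. The task thus reduces to a uniform bound on these $M$ martingale sums.

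The key structural point is that, for each fixed $i$, the sequence $D_t := X_{t-1,i}\,Z_{t,j}$ is a martingale difference sequence with respect to $\{\cF_{t-1}\}$. Indeed $X_{t-1,i}$ is $\cF_{t-1}$-measurable as a past observation, while $\bbE[Z_{t,j}\mid\cF_{t-1}] = 0$ by \cref{asspt: noise}, so $\bbE[D_t\mid\cF_{t-1}] = X_{t-1,i}\,\bbE[Z_{t,j}\mid\cF_{t-1}] = 0$. Moreover, since $|X_{t-1,i}| \le M_x$ by \cref{asspt: entry_bound} and $Z_{t,j}$ is conditionally $\sigma_j$-sub-Gaussian, the increment is conditionally sub-Gaussian with parameter at most $M_x\sigma_j$, because $\bbE[e^{\lambda D_t}\mid\cF_{t-1}] = \bbE[e^{(\lambda X_{t-1,i})Z_{t,j}}\mid\cF_{t-1}] \le e^{\lambda^2 M_x^2 \sigma_j^2/2}$. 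Iterating the tower property then gives the unconditional bound $\bbE[e^{\lambda S_i}] \le e^{\lambda^2 M M_x^2 \sigma_j^2/2}$, so $S_i$ is sub-Gaussian with variance proxy $M M_x^2\sigma_j^2$, and a Chernoff argument yields $\bbP(|S_i| \ge \tau) \le 2\exp(-\tau^2/(2 M M_x^2\sigma_j^2))$. A union bound over the $M$ coordinates gives $\bbP(\max_i |S_i| \ge \tau) \le 2M\exp(-\tau^2/(2 M M_x^2\sigma_j^2))$; setting the right-hand side equal to $\gamma$ produces $\tau = M_x\sigma_j\sqrt{2M\log(2M/\gamma)}$, and dividing by $M$ gives the claimed bound $\frac{\sigma_j M_x}{\sqrt M}\sqrt{2\log(2M/\gamma)}$ with probability at least $1-\gamma$.

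The only real ``obstacle'' here is conceptual rather than computational: the entire dependent-data difficulty dissolves once one notices that $X_{t-1,i}Z_{t,j}$ is itself a martingale difference. This is exactly the place where the martingale (as opposed to mixing) hypothesis and the $\cF_{t-1}$-measurability of the lagged design are used, and boundedness of the design is what converts the conditional $\sigma_j$-sub-Gaussianity of the noise into a uniform per-increment parameter $M_x\sigma_j$. The remaining steps are the standard martingale Hoeffding/Azuma MGF iteration and a union bound, which require no further ideas.
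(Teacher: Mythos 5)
Your proof is correct and follows essentially the same route as the paper's: reduce the $\infty$-norm to coordinatewise sums $\sum_t X_{t-1,i}Z_{t,j}$, observe these are martingale differences that are conditionally sub-Gaussian with parameter $M_x\sigma_j$ thanks to the boundedness assumption, iterate the tower property to bound the MGF of the sum, and finish with a sub-Gaussian tail bound plus a union bound over the $M$ coordinates. If anything, your bookkeeping is slightly tighter, since you carry the two-sided factor of $2$ through the union bound so that the final constant $\sqrt{2\log(2M/\gamma)}$ in the lemma statement comes out exactly.
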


\cref{lemma: infty_norm_z} essentially controls the magnitude of the martingale difference $\{X_{t,j} - f_j^*(X_{t-1} u_j^*)\}_{t=1}^T$ after projecting them on the space spanned by $\{X_{t-1}\}$.
The following lemma, which is more complicated than the former, controls the distance between the node observations and their corresponding conditional expectations after both are isotonically projected with respect to a reference vector $\bfv$. Since we desire to bound this term no matter how the reference vector $\bfv$ varies in the iteration procedure, we need to derive a uniform bound for all possible reference vectors.

\begin{lemma}
     For any $j\in [M]$, if the assumptions
     \cref{asspt: mono_Lip}, \cref{eq: identifiability}, \cref{asspt: noise} and \cref{asspt: entry_bound} are satisfied, 
    the following 2-norm distance between isotonically projected observations $\X_{-0,j}$ and their corresponding conditional means $f_j^*(\X_{-T} u_j^*)$ can be uniformly bounded over the set $\bfV= \{ \bfv = \X_{-T} \bfu \in \bbR^{T}:  \bfu\in \mathcal{S}^{M-1} \text{with sparsity } s_j\}$ with probability at least $1-2\gamma$,
    \begin{equation}
        \bfv
    \end{equation}
    \begin{equation}
        \begin{split}
            &\sup_{\bfv\in \bfv} \frac{1}{\sqrt{T}}\|\iso_{v} (f_j^{*}(\X_{-T}u_j^{*})) - \iso_{v}(\X_{-0,j})\|_2
            \\
            \leq & 4 T^{-\frac{1}{3}} \left[
                \left(2\sqrt{2\sigma_j^2 \log \frac{2T^{2s_j}M^{s_j}}{\gamma} } + 2 L_j M_x \sqrt{s_j^{*}} \right)
                \sigma_j^2 \log( \frac{T^{2s_j}(T+1)M^{s_j}}{\gamma})
            \right]^{\frac{1}{3}} \\
            &+ \frac{2\sigma_j}{\sqrt{T}} \sqrt{\log\left(
                \frac{T^{2s_j}(T+1) M^{s_j}}{\gamma}
            \right)}.
        \end{split}
    \end{equation}
    \label{lemma: sup_2_norm}
\end{lemma}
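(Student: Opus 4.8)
The plan is to bound the supremum in three movements: reduce the continuum of reference vectors to a finite family of orderings, bound the isotonic error for each fixed ordering through the standard projection "basic inequality," and then control the resulting noise term by a \emph{uniform} martingale deviation inequality, optimizing the quantization resolution to produce the $T^{-1/3}$ rate.

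First I would use that $\iso_v(\cdot)$ depends on $v=\X_{-T}u$ only through the ordering it induces on $\{0,\dots,T-1\}$ (the partial order of \cref{eq: p_ordering}), so
\begin{equation}
\sup_{v\in\bfV}\bigl\|\iso_{v}(f_j^{*}(\X_{-T}u_j^{*}))-\iso_{v}(\X_{-0,j})\bigr\|_2=\max_{\pi\in\Pi}\bigl\|\iso_{\pi}(f_j^{*}(\X_{-T}u_j^{*}))-\iso_{\pi}(\X_{-0,j})\bigr\|_2 ,
\end{equation}
where $\Pi$ is the set of orderings realizable by $s_j$-sparse unit directions. The complexity bound \cref{lemma: permutation} controls $|\Pi|$ \emph{deterministically} (independently of the data) by a quantity of order $T^{2s_j}M^{s_j}$: a $\binom{M}{s_j}\le M^{s_j}$ choice of support, times a polynomial-in-$T$ count of distinct orderings generated within a fixed $s_j$-dimensional coordinate subspace (a hyperplane-arrangement estimate). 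This bounded cardinality is exactly what later lets me take a union bound at level $\gamma/|\Pi|$, and it is the source of the $\log\!\big(T^{2s_j}(T+1)M^{s_j}/\gamma\big)$ factors.

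Next, fix $\pi$, write $g^{\ast}=f_j^{*}(\X_{-T}u_j^{*})$ so that $\X_{-0,j}=g^{\ast}+\Z_j$, and set $\theta^{\ast}=\iso_\pi(g^{\ast})$, $\widehat\theta=\iso_\pi(\X_{-0,j})$. Because $\iso_\pi$ is the Euclidean projection onto the closed convex cone of $\pi$-isotonic vectors and $\theta^{\ast}$ lies in that cone, the two projection inequalities combine into the basic inequality
\begin{equation}
\|\widehat\theta-\theta^{\ast}\|_2^{2}\le\langle\Z_j,\ \widehat\theta-\theta^{\ast}\rangle .
\end{equation}
I would then bound the range of $\widehat\theta-\theta^{\ast}$: contractiveness of isotonic regression (\cref{lemma: contractive}, \cref{cor: contract}) keeps fitted values within the range of their inputs, so $\mathrm{range}(\theta^{\ast})\le\mathrm{range}(g^{\ast})\le 2L_jM_x\sqrt{s_j^{*}}$ (using $|X_{t}^{T}u_j^{*}|\le M_x\|u_j^{*}\|_1\le M_x\sqrt{s_j^{*}}$ with \cref{asspt: mono_Lip}), while $\mathrm{range}(\widehat\theta)\le\mathrm{range}(g^{\ast})+2\max_t|Z_{t,j}|$. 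A union bound over $t$ using the conditional sub-Gaussian tails of \cref{asspt: noise} gives $\max_t|Z_{t,j}|\le\sqrt{2\sigma_j^{2}\log(\cdot)}$, which produces precisely the additive $\sqrt{2\sigma_j^{2}\log(\cdot)}$ sitting next to $2L_jM_x\sqrt{s_j^{*}}$ inside the bracket; call the total $R$. The core estimate is then $\langle\Z_j,\widehat\theta-\theta^{\ast}\rangle$, where $w:=\widehat\theta-\theta^{\ast}$ is a difference of two $\pi$-monotone vectors of range at most $R$ and norm $r:=\|w\|_2$. I would quantize the values of $w$ at resolution $\delta$ into $\lceil R/\delta\rceil$ levels, so that $\langle\Z_j,w\rangle$ is controlled by a maximum over $\pi$-order intervals $B$ of the block sums $\sum_{t\in B}Z_{t,j}$. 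The structural point that makes the martingale machinery apply is that the order position of the response $X_{t,j}$ is determined by the \emph{predictor} $X_{t-1}$, which is $\cF_{t-1}$-measurable; hence for a fixed direction the mask $\mathbf 1[t\in B]$ is predictable and $\sum_{t\in B}Z_{t,j}$ is a martingale transform of $\Z_j$ with variance proxy $\sigma_j^{2}|B|$. A Freedman/Bernstein-type martingale maximal inequality, made uniform over the bounded class of threshold masks via the cardinality from \cref{lemma: permutation}, then bounds every block sum simultaneously by a term of order $\sigma_j\sqrt{|B|\log(\cdot)}$. Feeding this into the basic inequality yields a self-bounding relation of the form $r^{2}\le C\,r^{2/3}\big(R\,\sigma_j^{2}\log(\cdot)\big)^{1/3}T^{1/3}$, i.e.\ $\tfrac1T r^{2}\le C'\big(R\sigma_j^{2}\log(\cdot)/T\big)^{2/3}$; solving for $r$, taking square roots and dividing by $\sqrt T$ reproduces the claimed main term $4T^{-1/3}[\cdots]^{1/3}$, while the residual $\tfrac{2\sigma_j}{\sqrt T}\sqrt{\log(\cdot)}$ arises from the base case of the peeling where $r$ falls below the noise floor.

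\textbf{The main obstacle} is the coupling between the data-dependent ordering $\pi$ and the dependent noise $\Z_j$: the classical isotonic rate uses Gaussian concentration and Dudley chaining for i.i.d.\ errors, none of which is available here, and one cannot reindex responses by $\pi$ to form a martingale because the sort order depends on all predictors $X_0,\dots,X_{T-1}$ at once. The resolution I would stress is the combination of two facts: (i) block membership of response $t$ is set by $X_{t-1}\in\cF_{t-1}$, so threshold masks are \emph{predictable} and masked sums remain martingale transforms despite the global sort; and (ii) the family of realizable threshold masks has deterministically bounded cardinality by \cref{lemma: permutation}, so a single union bound delivers a genuinely uniform deviation estimate over all reference vectors. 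The delicate checks are that permuting time does not spoil the conditional sub-Gaussianity used per block, and that the union over both $\Pi$ and the $O(T^{2})$ intervals keeps the logarithmic factor at the stated $\log\!\big(T^{2s_j}(T+1)M^{s_j}/\gamma\big)$ scale without inflating the exponent of $T$.
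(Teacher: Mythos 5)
Your architecture agrees with the paper's in its first two movements — the reduction of the supremum to finitely many orderings via \cref{lemma: permutation}, and the range bounds $2L_jM_x\sqrt{s_j^*}$ plus the $\max_t|Z_{t,j}|$ term are exactly the paper's steps — and your basic inequality $\|\widehat\theta-\theta^*\|_2^2\le\langle \Z_j,\widehat\theta-\theta^*\rangle$ is correct (it follows by adding the two variational inequalities of the projections onto the isotonic cone). The genuine gap is in the core quantitative step. Single-scale quantization of $w=\widehat\theta-\theta^*$ at resolution $\delta$, with per-block martingale bounds $\sigma_j\sqrt{|B|\log(\cdot)}$, gives (after Cauchy--Schwarz over the $\lceil R/\delta\rceil$ blocks, and after paying roughly $\sigma_j T\delta$ for the quantization remainder via $\langle \Z_j,w-w_\delta\rangle\le\|\Z_j\|_2\|w-w_\delta\|_2$) essentially the relation you wrote, $r^2\le C\,r^{2/3}\bigl(R\sigma_j^2\log(\cdot)\bigr)^{1/3}T^{1/3}$. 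But your ``i.e.''\ does not follow from it: solving gives $r^{4/3}\le C\bigl(R\sigma_j^2\log(\cdot)\bigr)^{1/3}T^{1/3}$, hence $r^2/T\le C^{3/2}\bigl(R\sigma_j^2\log(\cdot)/T\bigr)^{1/2}$ — the exponent is $1/2$, not $2/3$, so the argument as written only delivers a $T^{-1/4}$ rate for $r/\sqrt T$, strictly weaker than the $T^{-1/3}$ rate the lemma asserts. This is not a removable constant: at a single quantization scale the trade-off between the block term $\sigma_j\sqrt{(R/\delta)\log(\cdot)}\,r$ and the remainder $\sigma_j T\delta$ cannot do better. To rescue your route you would need genuine multi-scale chaining (iterate the quantization dyadically so that each remainder is itself decomposed into block sums, reproducing the Dudley-type integral $\int_0^\rho\sqrt{R/\varepsilon}\,d\varepsilon\asymp\sqrt{R\rho}$ for monotone classes), which is substantially more than the sketch provides.

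It is instructive to contrast this with how the paper avoids the issue: the proof of \cref{lemma: 2norm bound} never bounds $\langle\Z_j,w\rangle$ at all. It cuts the common range $\tilde H_j$ of the two isotonic vectors into segments of range at most $\tilde H_j/n$, so that \emph{within} each segment both fits deviate from their segment means by at most $\tilde H_j/n$ deterministically (a pure range argument, no noise), and only the $Q\le 2n-1$ segment \emph{centers} need noise concentration — each contributing $\sigma_j\sqrt{\log(\cdot)}$ through the contractive property (\cref{cor: contract}) applied to the mean seminorm, with a union bound over the $\binom{T+1}{2}$ intervals. Balancing $T\tilde H_j^2/n^2$ against $n\sigma_j^2\log(\cdot)$ yields the $T^{1/3}$ bound in one shot, with single-scale concentration only; this direct comparison of the two projections is precisely the ingredient your basic-inequality route lacks. (Your observation that threshold masks $\mathbf{1}[\langle X_{t-1},u\rangle\le c]$ are $\cF_{t-1}$-measurable, so block sums remain martingale transforms despite the global sort, is a good one — it is in fact what justifies applying the deterministic-index concentration to the paper's data-dependent rank intervals as well — but it does not repair the rate.)
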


With \cref{lemma: infty_norm_z} and \cref{lemma: sup_2_norm}, we could then get the non-asymptotic uniform bound for the remaining term $C(Z_j,u_j)$ over all possible $u_j$, the rate of which would determine the statistical error bound in the main result.

\begin{corollary}\label{lemma: def_Delta}
    Denote $\mathbb{B}_0(s_j) = \{u\in \bbR^M: \|u\|_2 = 1, \|u\|_0 = s_j \}$ as the set of all direction vectors with sparsity level $s_j$. Suppose assumptions listed in \cref{lemma: infty_norm_z} and \cref{lemma: sup_2_norm} are all satisfied, then with probability at least $1- 3\gamma$, we have
    \begin{equation}\label{eq: def_Delta}
        \begin{split}
            &\sup_{u\in \mathbb{B}_0(s_j)} C(Z_j,u) \leq  \Delta_j\\
             := & T^{-\frac{1}{2}} \sigma_j \left[
                 M_x \sqrt{2(2s_j+s_j^*) \log(\frac{2M}{\gamma})}
                 +2 \sqrt{\beta \log \left(\frac{T^{2s_j}(T+1)M^{s_j}}{\gamma}\right)}
             \right] \\
             &+
             4 T^{-\frac{1}{3}} \sqrt{\beta} \left[
                \left(2\sqrt{2\sigma_j^2 \log \frac{2T^{2s_j}M^{s_j}}{\gamma} } + 2 L_j M_x \sqrt{s_j^{*}} \right)
                \sigma_j^2 \log( \frac{T^{2s_j}(T+1)M^{s_j}}{\gamma})
            \right]^{\frac{1}{3}}
             \\
            =& O\left( 
            T^{- \frac{1}{3}}
            \left[
                \log\left(
                \frac{T^{2s_j+1}M^{s_j}}{\gamma}
            \right)
            \right]^{\frac{1}{2}}
            \right).
        \end{split}
    \end{equation}
\end{corollary}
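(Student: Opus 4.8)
The plan is to treat this corollary as a direct assembly of the two preceding lemmas, since $\Delta_j$ has been constructed precisely to absorb their bounds term by term. First I would recall from \cref{lemma: iterative bound} that $C(Z_j,u)$ splits into two pieces: a term proportional to $\frac{\sqrt{2s_j+s_j^*}}{T}\|\X_{-T}^T(\X_{-0,j}-f_j^*(\X_{-T}u_j^*))\|_\infty$ and a term $\sqrt{\beta/T}\,\|\iso_{\bfv}(f_j^*(\X_{-T}u_j^*))-\iso_{\bfv}(\X_{-0,j})\|_2$ with $\bfv=\X_{-T}u$. The crucial structural observation is that the first piece does not depend on the argument $u$ at all, so that the supremum over $u\in\mathbb{B}_0(s_j)$ acts nontrivially only on the second (isotonic) piece through its reference vector $\bfv$.

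Then I would bound each piece separately and union the two resulting events. For the first piece I apply \cref{lemma: infty_norm_z} (read with sample size $T$), which with probability at least $1-\gamma$ controls $\frac1T\|\X_{-T}^T(\cdot)\|_\infty$ by $\frac{\sigma_j M_x}{\sqrt T}\sqrt{2\log(2M/\gamma)}$; multiplying through by $\sqrt{2s_j+s_j^*}$ and folding it into the logarithm reproduces exactly the summand $T^{-1/2}\sigma_j M_x\sqrt{2(2s_j+s_j^*)\log(2M/\gamma)}$ of $\Delta_j$. For the second piece I note that as $u$ ranges over $\mathbb{B}_0(s_j)=\{u:\|u\|_2=1,\|u\|_0=s_j\}$, the reference vector $\bfv=\X_{-T}u$ ranges over exactly the set $\bfV$ of \cref{lemma: sup_2_norm}; hence $\sup_{u}\sqrt{\beta/T}\,\|\iso_{\bfv}(\cdot)-\iso_{\bfv}(\cdot)\|_2=\sqrt{\beta}\,\sup_{\bfv\in\bfV}\frac{1}{\sqrt T}\|\cdot\|_2$, and \cref{lemma: sup_2_norm} bounds this latter supremum with probability at least $1-2\gamma$. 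Scaling its two summands by $\sqrt{\beta}$ produces the remaining terms $T^{-1/2}\cdot 2\sigma_j\sqrt{\beta\log(\cdot)}$ and $4T^{-1/3}\sqrt{\beta}\,[\cdot]^{1/3}$ of $\Delta_j$. A union bound over the two failure events, of probabilities $\gamma$ and $2\gamma$, yields the stated probability $1-3\gamma$.

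Finally I would verify the asymptotic rate. The two $T^{-1/2}$ summands contribute only $O\!\big(T^{-1/2}\sqrt{\log(\cdot)}\big)$, which is of strictly smaller order than $T^{-1/3}$ and is therefore absorbed. The dominant contribution is the cube-root term: inside the bracket the factor $2\sqrt{2\sigma_j^2\log(\cdot)}$ scales like $(\log)^{1/2}$ and is multiplied by $\sigma_j^2\log(\cdot)$, so the bracket grows like $(\log)^{3/2}$ and its cube root like $(\log)^{1/2}$, leaving $O\!\big(T^{-1/3}(\log)^{1/2}\big)$; since $T+1$ is comparable to $T$, the logarithmic arguments merge into $\log(T^{2s_j+1}M^{s_j}/\gamma)$, giving the stated order. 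There is no serious obstacle here, as all the analytic work — the martingale concentration and the uniform control over reference vectors via permutation complexity and isotonic contraction — is already discharged in \cref{lemma: infty_norm_z} and \cref{lemma: sup_2_norm}. The only points requiring genuine care are the reduction of $\sup_{u\in\mathbb{B}_0(s_j)}$ to $\sup_{\bfv\in\bfV}$, the term-by-term matching of the scaled sum against $\Delta_j$, and confirming that the $T^{-1/2}$ logarithmic terms are truly dominated so that the $O(\cdot)$ claim is valid.
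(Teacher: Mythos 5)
Your proposal is correct and matches the paper's (implicit) argument exactly: the paper offers no separate proof of this corollary precisely because it is the direct assembly you describe — the $u$-independent infinity-norm piece of $C(Z_j,u)$ is controlled by \cref{lemma: infty_norm_z} on an event of probability $1-\gamma$, the isotonic piece is controlled uniformly via the identification of $\{\X_{-T}u : u\in\mathbb{B}_0(s_j)\}$ with the set $\bfV$ of \cref{lemma: sup_2_norm} on an event of probability $1-2\gamma$, and a union bound plus scaling by $\sqrt{2s_j+s_j^*}$ and $\sqrt{\beta}$ reproduces $\Delta_j$ term by term. Your rate verification (the $T^{-1/2}\sqrt{\log}$ terms being dominated by the $T^{-1/3}(\log)^{1/2}$ term, and merging $T+1$ into the logarithm) is also sound, so nothing is missing.
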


\subsection{Mathematical Induction}

Based on the above lemmas, we use the mathematical induction to complete the proof of \cref{thm: coefficient conv}.
\begin{proof}
    Given a good initialization $u_j^{(0)}$ such that $\langle u_j^{(0)}, u_j^*\rangle\geq 0
    $, we know that 
    \begin{equation}
        \| u_j^{(0)} - u_j^*\|_2^2 = \| u_j^{(0)}\|_2^2 + \|u_j^*\|_2^2 - 2 \langle u_j^{(0)}, u_j^*\rangle \leq 2.
        \label{eq: induct1}
    \end{equation}

    Recall the definition 
    \begin{equation}
        \theta_j = \frac{1 - \frac{\alpha_j}{L_j^2 \beta}}{1 - \frac{\delta_j\alpha_j}{L_j^2\beta} - \sqrt{\frac{s_j^{*}}{s_j}}} ~\text{for some } \delta_j>0,~
           \text{and}~ R_j = \frac{\frac{\epsilon_j^2}{L_j^2\beta} +\frac{\Delta_j^2}{\delta_j\alpha_j\beta}} { (1-\delta_j) \frac{\alpha_j}{L_j^2\beta} -\sqrt{\frac{s_j^*}{s_j}}}.
    \end{equation}

    Now we use mathematical induction to prove the result for any $j\in [M]$. By \cref{eq: induct1}, when $k = 0$, $\| u_j^{(k)} - u_j^*\|_2^2 \leq 2 \theta_j^{k} + (1-\theta_j^k)R_j$ is true. For $K \geq 1$,
    assume 
    \begin{equation}
        \| u_j^{(k)} - u_j^*\|_2^2 \leq 2 \theta_j^{k} + (1-\theta_j^k)R_j
        \label{eq: induct_asspt}
    \end{equation}
    holds for all $k\leq K-1$.
    First, we want to show that for these $k\leq K-1$, once the induction assumption (\cref{eq: induct_asspt}) is true, the angle between these estimators and the truth is acute.  With the sparsity condition (\cref{asspt: sparsity}) satisfied, we have that $R_j \leq 2$. 
    Therefore,
    \begin{equation}
        \begin{split}
            \langle u_j^{(k)} , u_j^* \rangle 
            &= \frac{1}{2} \left(\|u_j^{(k)}\|_2^2 + \| u_j^*\|_2^2 - \| u_j^{(k)} - u_j^*\|_2^2 \right) \\
            &\geq 1 - 
            \frac{1}{2}\left[2\theta_j^k +(1-\theta_j^k)R_j \right]\\
            &\geq 1 - 
            \frac{1}{2}\left[2\theta_j^k +
            2(1-\theta_j^k) \right] =0.
        \end{split}
    \end{equation}

    With this condition satisfied for all $k\leq K-1$, by \cref{lemma: iterative bound} and \cref{lemma: def_Delta} we have
    \begin{equation}
        \begin{split}
            \|u_j^{(K)} - u_j^*\|_2^2 
            &\leq \theta_j \|u_j^{(K-1)} - u_j^*\|_2^2 + \frac{\frac{\epsilon_j^2}{L_j^2\beta} +\frac{\Delta_j^2}{\delta_j\alpha_j\beta}}{1 - \frac{\delta_j\alpha_j}{L_j^2\beta} - \sqrt{\frac{s_j^{*}}{s_j}}} \\
            &= \theta_j \|u_j^{(K-1)} - u_j^*\|_2^2 +(1-\theta_j) R_j\\
            &\leq \theta_j \left[
                2 \cdot \theta_j^{K-1} + (1-\theta_j^{K-1})\cdot R_j
            \right] + (1-\theta_j)R_j\\
            &= 2\theta_j^{K} + (1-\theta_j^{K}) R_j.
        \end{split}
        \label{eq: recurs}
    \end{equation}
    Hence, the induction reveals that 
    \begin{equation}
        \| u_j^{(K)} - u_j^*\|_2^2 \leq 2 \theta_j^{K} + (1-\theta_j^K)R_j \leq 
         2 \theta_j^{K} + R_j 
    \end{equation}
    is true for any 
    $K\in \bbN^{+}$.

\end{proof}

\section{Conclusion}

In this paper, we construct the monotone SIMAM to improve the prediction and influence network estimation for high-dimensional multi-variate time series or point process data. Such semi-parametric formulation gains more flexibility and robustness against model mis-specification due to its enlarged model space while retaining parametric models' desirable interpretation. Based on this model, we developed an alternating PGD algorithm for SIMAM (\cref{alg: SIMAM}) to estimate the underlying network structure, which takes the non-convex sparsity constraint in a high dimensional setting into account. Theoretically, using martingale concentration inequalities, we show that our algorithm converges in a geometric rate, and after sufficiently many iterations, the statistical error for the influence network estimation achieves the rate $O(T^{-\frac{1}{3}}\sqrt{s\log (TM)})$.
We also demonstrated that our algorithm for monotone SIMAM has a superior performance both on simulated data and two popular real data examples (Chicago crime data and MemeTracker social media data) compared to state-of-the-art parametric methods.

\newpage

\bibliography{references}

\section{Appendix}
\label{section: appendix}

\subsection{Proof of \cref{lemma: iterative bound}}
\begin{proof}
    Using three point identity, we have
    \begin{equation}
        \begin{split}
           & \parallel \Phi_{s_j}(\tilde u_j^{(k)})- u_j^{*}\parallel_2^2 \\
           = & \parallel u_j^{(k-1)} - u_j^{*} \parallel_2^2 - \parallel u_j^{(k-1)} -\Phi_{s_j}(\tilde u_j^{(k)}) \parallel_2^2 +
            2 \langle u_j^{(k-1)} - \Phi_{s_j}(\tilde u_j^{(k)}), u_j^{*} - \Phi_{s_j}(\tilde u_j^{(k)})\rangle;
        \end{split}
        \label{eq: 3p_id}
    \end{equation}
    First we bound the third term in \cref{eq: 3p_id}, through
    \begin{equation}
        \begin{split}
            &\left\langle u_j^{(k-1)} - \Phi_{s_j}(\tilde u_j^{(k)}), u_j^{*} - \Phi_{s_j}(\tilde u_j^{(k)})\right\rangle \\
            = &\left\langle u_j^{(k-1)} - \tilde u_j^{(k)}, u_j^{*} - \Phi_{s_j}(\tilde u_j^{(k)})\right\rangle + \left\langle \tilde u_j^{(k)} - \Phi_{s_j}(\tilde u_j^{(k)}),u_j^{*} - \Phi_{s_j}(\tilde u_j^{(k)})\right\rangle\\
            \leq &  \frac{\sqrt{s_j^*}}{2\sqrt{s_j}} \parallel u_j^* -\Phi_{s_j}(\tilde u_j^{(k)}) \parallel_2^2,\\
            &+\left\langle \eta_j \mathcal{P}_{u_j^{(k-1)}}^{\perp}
            \left[\frac{1}{T} \X_{-T}^T(\X_{-0,j} - \iso_{\bfv_{k-1}}(\X_{-0,j}))\right] , \Phi_{s_j}(\tilde u_j^{(k)}) - u_j^* \right\rangle\\
        \end{split}
    \end{equation}
    where the last step comes from the \cref{lemma: hard_thres}.
    
    Hence we have 
    \begin{equation}
        \begin{split}
            & (1-\frac{\sqrt{s_j^*}}{\sqrt s_j}) \parallel u_j^* - \Phi_{s_j}(\tilde u_j^{(k)}) \parallel_2^2 \\
            \leq &\parallel u_j^{(k-1)} - u_j^{*} \parallel_2^2 - \parallel u_j^{(k-1)} -\Phi_{s_j}(\tilde u_j^{(k)}) \parallel_2^2 \\
            &+2 \frac{\eta_j}{T} \langle \mathcal{P}_{u_j^{(k-1)}}^{\perp}
            [\X_{-T}^T(\X_{-0,j} - \iso_{\bfv_{k-1}}(\X_{-0,j}))] , \Phi_{s_j}(\tilde u_j^{(k)}) - u_j^* \rangle.
        \end{split}
        \label{eq: prf1}
    \end{equation}
    Splitting the inner product term in the way that 
    \begin{equation}
        \begin{split}
            &\X_{-0,j} - \iso_{\bfv_{k-1}}(\X_{-0,j}) \\
            = & \X_{-0,j} - f_j^*(\X_Tu_j^*)
            +f_j^*(\X_Tu_j^*) - \iso_{\bfv_{k-1}}(f_j^*(\X_{-T}u_j^*) )\\
             &+ \iso_{\bfv_{k-1}}(f_j^*(\X_{-T}u_j^*) )- \iso_{\bfv_{k-1}}(\X_{-0,j});\\
        \end{split}
        \label{eq: splitting}
    \end{equation}

    So as to ease notation, since we are dealing with the $j$-th variate, which shares exactly the same method and theoretical technique across all $j\in [M]$, we drop the subscript $j$ in the following of this proof. Specifically,
    we denote the projection operator $\mathcal{P}_{u_j^{(k-1)}}^{\perp}$ as $\mathcal{P}_{k-1}^{\perp}$; $u_j^*$ as $u^*$ and $u_j^{(k)}$ as $u^{(k)}$; $\X_{-0,j}$ as $X_{-0}$; $s_j^*$ and $s_j$ as $s^*$ and $s$ respectively, $f_j^*$ as $f^*$.

    By the property of inner product, we have
    \begin{equation}
        \begin{split}
            &\left\langle \mathcal{P}_{k-1}^{\perp}
            \left(
                \X_{-T}^T(
                X_{-0} - f^*(\X_{-T}u^*) 
                ) 
            \right), 
            \Phi_s(\tilde u^{(k)}) - u^* 
            \right\rangle\\
            \leq& \parallel 
            \X_{-T}^T(X_{-0} - f^*(\X_{-T}u^*))
            \parallel_{\infty} 
            \cdot 
            \parallel 
            \mathcal{P}_{k-1}^{\perp} 
            (\Phi_s(\tilde u^{(k)}) - u^*)
            \parallel_1\\
            \leq & \sqrt{2s +s^*}\parallel 
            \X_{-T}^T(X_{-0} - f^*(\X_{-T}u^*))
            \parallel_{\infty} 
            \cdot
            \parallel \Phi_s(\tilde u^{(k)}) - u^*\parallel_2,
        \end{split}
        \label{eq: I}
    \end{equation}
    where the last step comes from the inequality of norms that $\parallel x\parallel_1 \leq \sqrt{\parallel x \parallel_0}\cdot \parallel x \parallel_2$ for any vector $x$, and the fact that $\mathcal{P}_{k-1}^{\perp} (\Phi_s(\tilde u^{(k)}) - u^*)$ has at most $2s+s^*$ non-zero elements since $\parallel u^{(k-1)}\parallel_0 = s, \parallel u^*\parallel_0 = s^*$ and $\parallel \Phi_s(\tilde u^{(k)})\parallel_0 = s$.

    By the assumption of data boundedness in \cref{asspt: data_bound} and Cauchy inequality, we have
    \begin{equation}
        \begin{split}
            &\left\langle \mathcal{P}_{k-1}^{\perp}
            \left(
                \X_{-T}^T(
                \iso_{\bfv_{k-1}}(f^*(\X_{-T}u^*)) - \iso_{\bfv_{k-1}}(X_{-0}) ) 
            \right), 
            \Phi_s(\tilde u^{(k)}) - u^* 
            \right\rangle\\
            \leq& \parallel 
            \iso_{\bfv_{k-1}}(f^*(\X_{-T}u^*)) - \iso_{\bfv_{k-1}}(X_{-0})
            \parallel_{2} 
            \cdot 
            \parallel 
            \X_{-T} \mathcal{P}_{k-1}^{\perp} 
            (\Phi_s(\tilde u^{(k)}) - u^*)
            \parallel_2\\
            \leq & 
            \parallel 
            \iso_{\bfv_{k-1}}(f^*(\X_{-T}u^*)) - \iso_{\bfv_{k-1}}(X_{-0})
            \parallel_{2} 
            \cdot \sqrt{\beta T}
            \parallel \Phi_s(\tilde u^{(k)}) - u^*\parallel_2.
        \end{split}\label{eq: III}
    \end{equation}

    To deal with the remaining part of the inner product generated from the last splitting term, first we use the relationship between inner product and dual norms for 2-norm again to have 
    \begin{equation}
        \begin{split}
            &\left\langle
            f^*(\X_{-T}u^*)  - \iso_{\bfv_{k-1}}f^*(\X_{-T}u^*),
            \X_{-T} \mathcal{P}_{k-1}^{\perp} \Phi_s(\tilde u^{(k)})
            \right\rangle\\
            \leq & \parallel f^*(\X_{-T}u^*)  - \iso_{\bfv_{k-1}}f^*(\X_{-T}u^*) \parallel_2 
            \parallel \X_{-T} \mathcal{P}_{k-1}^{\perp} \Phi_s(\tilde u^{(k)}) \parallel_2\\
            \leq & \parallel f^*(\X_{-T}u^*)  - \iso_{\bfv_{k-1}}f^*(\X_{-T}u^*) \parallel_2  \cdot \sqrt{\beta T} \parallel \mathcal{P}_{k-1}^{\perp} \Phi_s(\tilde u^{(k)}) \parallel_2\\
            \leq& \parallel f^*(\X_{-T}u^*) - \iso_{\bfv_{k-1}}f^*(\X_{-T}u^*) \parallel_2 \cdot \sqrt{\beta T} \parallel \Phi_s(\tilde u^{(k)})- 
            u^{(k-1)} \parallel_2\\
            \leq & \frac{1}{2L_j} \parallel f^*(\X_{-T}u^*) - \iso_{\bfv_{k-1}}f^*(\X_{-T}u^*) \parallel_2^2 +
             \frac{L_j\beta T}{2} \parallel \Phi_s(\tilde u^{(k)}) - u^{(k-1)} \parallel^2_2,\\
        \end{split}
        \label{eq: IV1}
    \end{equation}
    where the third line is true again due to \cref{asspt: data_bound}; the last line comes from the fact that for any $a,b \in \bbR$ and any $L>0$, we have $ab \leq \frac{a^2}{2L}+ \frac{b^2L}{2}$.

    On the other hand, 
    \begin{equation}
        \begin{split}
            &\left\langle
            f^*(\X_{-T}u^*)  - \iso_{\bfv_{k-1}}f^*(\X_{-T}u^*),
            \X_{-T} \mathcal{P}_{k-1}^{\perp} (u^*)
            \right\rangle\\
            =& \left\langle
            f^*(\X_{-T}u^*)  - \iso_{\bfv_{k-1}}f^*(\X_{-T}u^*),
            \X_{-T}\left(
                u^* - 
                \langle u^* ,u^{(k-1)} \rangle u^{(k-1)}
            \right)
            \right\rangle\\
            = & \left\langle
            f^*(\X_{-T}u^*)  - \iso_{\bfv_{k-1}}f^*(\X_{-T}u^*), \X_{-T}u^*
            \right\rangle\\
            &-
            \langle u^*, u^{(k-1)}\rangle
            \left\langle
            f^*(\X_{-T}u^*)  - \iso_{\bfv_{k-1}}f^*(\X_{-T}u^*) ,
            \X_{-T} u^{(k-1)}
            \right\rangle\\
            \geq& \frac{1}{L_j}\parallel f^*(\X_{-T}u^*)  - \iso_{\bfv_{k-1}}f^*(\X_{-T}u^*) \parallel_2^2\\
            &- \langle u^*, u^{(k-1)}\rangle
            \left\langle
            f^*(\X_{-T}u^*)  - \iso_{\bfv_{k-1}}f^*(\X_{-T}u^*) ,
            \bfv_{k-1}
            \right\rangle\\
            \geq & \frac{1}{L_j}\parallel f^*(\X_{-T}u^*)  - \iso_{\bfv_{k-1}}f^*(\X_{-T}u^*) \parallel_2^2,\\
        \end{split}
        \label{eq: IV2}
    \end{equation}
    where the forth line is derived from \cref{lemma: lip_bound};
    the last step comes from the acuteness of the angle between $u^*$ and $u^{(k-1)}$ as assumed, and the fact that 
    \begin{equation}
        \langle \omega - \iso_v (\omega) , v \rangle \leq 0
    \end{equation}
    for any vectors $v,\omega \in \bbR^T$, since $\iso_v(\omega)$ is the projection of $\omega$ onto the convex cone satisfying the isotonic constraints, while $v$ itself is a vector contained in this convex cone.

    Combining \cref{eq: IV1} and \cref{eq: IV2}, we have 
    \begin{equation}
        \begin{split}
            &\left\langle
              \mathcal{P}_{k-1}^{\perp}
            \left[
                \X_{-T}
            \left(
                f^*(\X_{-T}u^*)  - \iso_{\bfv_{k-1}}f^*(\X_{-T}u^*)
            \right) 
            \right], 
            \Phi_s(\tilde u^{(k)}) - u^* 
            \right\rangle\\
            = & \left\langle
            f^*(\X_{-T}u^*)  - \iso_{\bfv_{k-1}}f^*(\X_{-T}u^*),
            \X_{-T} \mathcal{P}_{k-1}^{\perp} (\Phi_s(\tilde u^{(k)}))
            \right\rangle \\
            &-
            \left\langle
            f^*(\X_{-T}u^*)  - \iso_{\bfv_{k-1}}f^*(\X_{-T}u^*),
            \X_{-T} \mathcal{P}_{k-1}^{\perp} (u^*)
            \right\rangle\\
            \leq & \frac{L_j\beta T}{2} \parallel \Phi_s(\tilde u^{(k)}) - u^{(k-1)} \parallel^2_2 -
            \frac{1}{2L_j}\parallel f^*(\X_{-T}u^*)  - \iso_{\bfv_{k-1}}f^*(\X_{-T}u^*) \parallel_2^2.
        \end{split}
        \label{eq: IV}
    \end{equation}

    Plugging \cref{eq: I}, \cref{eq: III} and \cref{eq: IV} into \cref{eq: prf1}, we have
    \begin{equation}
        \begin{split}
            &(1-\sqrt{\frac{s^*}{s}})\parallel u^* - 
            \Phi_s(\tilde u^{(k)}) \parallel_2^2\\
            \leq & 
            (\eta_j L_j\beta -1) \parallel 
             \Phi_s(\tilde u^{(k)}) - u^{(k-1)}\parallel_2^2 +
            2\eta_j C(Z,u_j^{(k-1)})
            \parallel u^* -\Phi_s(\tilde u^{(k)})\parallel_2\\
            &+ \parallel u^* - u^{(k-1)}\parallel_2^2
            - \frac{\eta_j}{TL_j} \parallel 
            f^*(\X_{-T}u^*)  - \iso_{\bfv_{k-1}}f^*(\X_{-T}u^*)
            \parallel_2^2,\\
        \end{split}
    \end{equation}
    
    Let the step size be chosen as $\eta_j = \frac{1}{L_j\beta}$, by Cauchy inequality, for any $\delta_j >0$,
    \begin{equation}
        \begin{split}
            &\frac{2}{L_j\beta} C(Z,u_j^{(k-1)})
        \parallel u^* -\Phi_s(\tilde u^{(k)})\parallel_2 \\
        \leq& \frac{\delta_j \alpha_j}{L_j^2 \beta}
        \parallel u^* -\Phi_s(\tilde u^{(k)})\parallel_2^2 +
        \frac{1}{\beta \alpha_j\delta_j}C^2(Z,u_j^{(k-1)}).
        \end{split}
    \end{equation}

    Hence, putting back the subscript $j$ we have that 
    \begin{equation}
        \begin{split}
        &(1 - \sqrt{\frac{s_j^*}{s_j}} - \frac{\delta_j \alpha_j}{L_j^2 \beta}) \parallel u_j^* - 
        \Phi_{s_j}(\tilde u_j^{(k)}) \parallel_2^2\\
        \leq&\parallel u_j^* - u_j^{(k-1)}\parallel_2^2 + \frac{1}{\beta \alpha_j\delta_j}C^2(Z,u_j^{(k-1)})
        - \frac{\eta_j}{L_j} \left(
            \alpha_j\parallel u_j^* - u_j^{(k-1)} \parallel_2^2 -\epsilon_j^2
        \right)\\
        = &
        (1-\frac{\alpha_j}{L_j^2\beta})\parallel u_j^* - u_j^{(k-1)}\parallel_2^2
        + \frac{C^2(Z,u_j^{(k-1)})}{\beta \alpha_j\delta_j}+
        \frac{\epsilon_j^2}{L_j^2\beta}.
        \end{split}
        \label{eq: almost}
    \end{equation}

    Next we only need to show that $\parallel u_j^* - 
    u_j^{(k)} \parallel_2^2 
    \leq \parallel u_j^* - 
    \Phi_{s_j}(\tilde u_j^{(k)}) \parallel_2^2 $ to complete the proof. 
    Let $S_j^{(k-1)}$ be the support of $u_j^{(k-1)}$, then we know $|S_j^{(k-1)}| = s_j$.
    Since $\Phi_{s_j}(\cdot)$ is the hard-thresholding operator that finds the largest $s_j$ elements in absolute values, we have
    \begin{equation}
        \begin{split}
            \left\|\Phi_{s_j}(\tilde u_j^{(k)})\right\|_2^2 &\geq 
            \left\| \left(\tilde u_j^{(k)} \right)_{S_j^{(k-1)}} \right\|_2^2\\
            & = \left\| \left(
                u_j^{(k-1)} + \eta_j \mathcal{P}_{u_j^{(k-1)}}^{\perp}
                \left[\frac{1}{T} \X_{-T}^T(\X_{-0,j} - \iso_{\bfv_{k-1}}(\X_{-0,j}))\right]
             \right)_{S_j^{(k-1)}} \right\|_2^2\\
             &= \left\| 
                u_j^{(k-1)} + \eta_j \left(\mathcal{P}_{u_j^{(k-1)}}^{\perp}
                \left[\frac{1}{T} \X_{-T}^T(\X_{-0,j} - \iso_{\bfv_{k-1}}(\X_{-0,j}))\right]
             \right)_{S_j^{(k-1)}} \right\|_2^2\\
             & \geq \| u_j^{(k-1)}\|_2^2=1.
        \end{split}
    \end{equation}

    Therefore, by the fact that $u_j^{(k)} = \frac{\Phi_{s_j}(\tilde u_j^{(k)})}{\| \Phi_{s_j}(\tilde u_j^{(k)}) \|_2}$, we know that $u_j^{(k)}$ is the projection of $\Phi_{s_j}(\tilde u_j^{(k)}) \geq 1$ not only onto the unit-sphere, by also onto the unit-ball. 
    Since a unit ball $\{u\in \bbR^M: \|u\|_2^2 \leq 1\}$ is a convex set containing $u_j^*$, we have 
    \begin{equation}
        \langle \Phi_{s_j}(\tilde u_j^{(k)}) - u_j^{(k)}, u_j^* - u_j^{k}\rangle \leq 0;
    \end{equation}
    
    Equipped with the three-point identity, we have
    \begin{equation}
        \| u_j^* - \Phi_{s_j}(\tilde u_j^{(k)}) \|_2^2
        = 
        \| u_j^* - u_j^{(k)}\|_2^2 + \|\Phi_{s_j}(\tilde u_j^{(k)}) - u_j^{(k)}\|_2^2 + 2 \rangle u_j^{(k)} - \Phi_{s_j}(\tilde u_j^{(k)}), u_j^* - u_j^{(k)}\rangle \geq \| u_j^* - u_j^{(k)}\|_2^2.
        \label{eq: proj}
    \end{equation}

    Combining \cref{eq: almost} and \cref{eq: proj}, we finish the proof of \cref{lemma: iterative bound}.
\end{proof}

\subsection{Proof of \cref{lemma: infty_norm_z}}
\begin{proof}
    For any $j\in [M]$ of interest, the following inequalities hold by union bound for any $y>0$: 
    \begin{equation}
        \begin{split}
            &P\left(
                \frac{1}{T}\parallel \X_{-T}^T \left(\X_{-0,j} - f_j^{*}(\X_{-T}\cdot u_j^{*})\right) \parallel_{\infty} \geq y
            \right)\\
            \leq & \sum_{i=1}^M P\left(
                \frac{1}{T}\left|
                \sum_{t=0}^{T-1}
                X_{t,i}\left(X_{t+1,j} - f_j^{*}(X_{t}^T\cdot u_j^{*})\right)
                \right|\geq y
            \right)\\
            = &\sum_{i=1}^M P\left(
                \frac{1}{T}\left|\sum_{t=0}^{T-1}  X_{t,i}
                Z_{t+1,j}
                \right|\geq y
            \right).
            \label{eq: ub_init}
        \end{split}
    \end{equation}

    By the assumption that for any $\lambda \in \bbR$, the noises $Z_{t,j}, t = 1,\dots, T$ are conditionally sub-Gaussian in the way that $\bbE\left[\exp\left( \lambda Z_{t,j}\right) | \mathcal{F}_{t-1}\right] \leq e^{\sigma_j^2\lambda^2/2}$,
    and the fact that $\X$ is entrywise bounded by $M_x$, we have
    \begin{equation}
        \begin{split}
            &\bbE\left(
                    e^{\lambda X_{t-1,i}Z_{t,j}}
                    |\mathcal{F}_{t-1}
                \right)\\
            \leq& \bbE \left[
                \exp \left(
                \frac{\lambda^2  \sigma_j^2X_{t-1,i}^2}{2}  
            \right) |\mathcal{F}_{t-1}
            \right]\\
            \leq & \exp \left(
                \frac{\lambda^2  \sigma_j^2 M_x^2}{2}  
            \right), ~ t = 1, \dots, T.
        \end{split}
    \end{equation}
    We could then bound the moment generating function of the sum $\sum_{t=1}^{T} X_{t-1,i}
    Z_{t,j}$ in the following recursive manner using conditional expectations. Define $S_{n,i,j} = \sum_{t=1}^{n} X_{t-1,i}
    Z_{t,j}, ~ n\in [T]$. For any $\lambda \in \bbR$ and any $i =1,\dots, M$,
    \begin{equation}
        \begin{split}
            \bbE\left[\exp (\lambda S_{n,i,j})\right]= &\bbE\left[
                \exp\left(
                    \lambda \sum_{t= 1}^{n} X_{t-1,i}Z_{t,j}
                \right)
            \right]\\
            = & \bbE \left[
                \bbE\left(
                    e^{\lambda \sum_{t= 1}^{n} X_{t-1,i}Z_{t,j}}
                    |\mathcal{F}_{n-1}
                \right)
            \right]\\
            = & \bbE\left[
                \exp \left(\lambda \sum_{t= 1}^{n-1} X_{t-1,i}Z_{t,j}\right)\cdot 
                \bbE\left(
                    e^{\lambda X_{n-1,i}Z_{n,j}}
                    |\mathcal{F}_{n-1}
                \right)
            \right]\\
            \leq& \exp \left(
                \frac{\lambda^2  \sigma_j^2 M_x^2}{2}  
            \right) \cdot \bbE\left[
                \exp \left(\lambda \sum_{t= 1}^{n-1} X_{t-1,i}Z_{t,j}\right)\right]\\
            = & \exp \left(
                \frac{\lambda^2  \sigma_j^2 M_x^2}{2}  
            \right) \bbE\left[\exp (\lambda S_{n-1,i,j})\right].
        \end{split}
    \end{equation}
    Therefore we have 
    \begin{equation}
        \begin{split}
            \bbE\left[\exp (\lambda S_{T,i,j})\right] &\leq
            \exp \left(
                \frac{ (T-1)\lambda^2  \sigma_j^2 M_x^2}{2}  
            \right)\cdot \bbE\left[\exp (\lambda S_{1,i,j})\right]\\
            &= \exp \left(
                \frac{ (T-1)\lambda^2  \sigma_j^2 M_x^2}{2}  
            \right)\cdot 
            \bbE\left[\left(
                \exp (\lambda X_{0,i} Z_{1,j})
            \right) | \mathcal{F}_{0}
            \right]\\
            &\leq \exp \left(
                \frac{ T\lambda^2  \sigma_j^2 M_x^2}{2}  
            \right).
        \end{split}
    \end{equation}
    By the fact that $\bbE\left(
        Z_{t,j}|\mathcal{F}_{t-1}
    \right) = 0$, we have
    \begin{equation}
        \bbE\left[
            S_{T,i,j}
        \right] = \sum_{t=1}^T \bbE \left[
            X_{t-1,i} Z_{t,j}
        \right] = \sum_{t=1}^T \bbE \left[
            \bbE\left(
                X_{t-1,i} Z_{t,j} | \mathcal{F}_{t-1}
            \right)
        \right] = \sum_{t=1}^T \bbE \left[X_{t-1,i}
            \bbE\left(
                 Z_{t,j} | \mathcal{F}_{t-1}
            \right)
        \right] = 0.
    \end{equation}

    Hence, by definition, $S_{T,i,j}$ is sub-Gaussian with parameter $\sqrt{T} \sigma_j M_x$ and mean zero. Using the concentration inequality and \cref{eq: ub_init}, we have
    \begin{equation}
        \begin{split}
            &P\left(
                \frac{1}{T}\parallel \X_{-T}^T \left(\X_{-0,j} - f_j^{*}(\X_{-T}\cdot u_j^{*})\right) \parallel_{\infty} \geq y
            \right)\\
             \leq& \sum_{i=1}^M P\left(
                \frac{1}{T} |S_{T,i,j}| \geq y
            \right)\\
            \leq& M\exp\left(-
                \frac{T y^2}{2 \sigma_j^2M_x^2}
            \right).
        \end{split}
    \end{equation}
\end{proof}

\subsection{Proof of \cref{lemma: sup_2_norm}}
Before showing the uniform $\ell_2$ norm bound in \cref{lemma: sup_2_norm} over $\bfV= \{ \bfv = \X_{-T} \bfu:  \bfu\in \mathcal{S}^{M-1} \text{with sparsity } s_j\}$ , we first need the following lemma for the $\ell_2$ norm bound for arbitrary vector $\bfv \in \bfV$.

\begin{lemma}\label{lemma: 2norm bound}
    For any $\bfv \in \bfV= \{ \bfv = \X_{-T} \bfu:  \bfu\in \mathcal{S}^{M-1} \text{with sparsity } s_j\}$, with probability at least $1-2\gamma$, 
    \begin{equation}
        \begin{split}
            & \parallel \iso_{\bfv} (f_j^{*}(\X_{-T}u_j^{*})) - \iso_{\bfv}(\X_{-0,j}) \parallel_2^2\\
            \leq &16 T^{\frac{1}{3}} \left[\left(2\sqrt{2\sigma_j^2 \log \frac{2T}{\gamma} } + 2 L_j M_x \sqrt{s_j^{*}} \right)
                \sigma_j^2 \log( \frac{T(T+1)}{\gamma})
            \right]^{\frac{2}{3}} + 4 \sigma_j^2 \log(\frac{T(T+1)}{\gamma}).
        \end{split}
    \end{equation}
\end{lemma}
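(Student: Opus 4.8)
The plan is to recognize this as the classical $T^{1/3}$ estimation-rate phenomenon for isotonic regression, transported to our dependent, martingale setting. Throughout write $\mu := f_j^*(\X_{-T}u_j^*)$, $y := \X_{-0,j}$, and $\Z_j := y - \mu$ for the (conditionally sub-Gaussian, martingale-difference) noise, and abbreviate the isotonic projection by $P := \iso_{\bfv}(\cdot)$, the Euclidean projection onto the closed convex monotone cone determined by the ordering that $\bfv$ induces. The object to control is $g := P(y) - P(\mu)$, whose norm $\|g\|_2 = \|\iso_{\bfv}(\mu) - \iso_{\bfv}(y)\|_2$ is exactly the target. First I would invoke the contractiveness/firm-nonexpansiveness of $P$ (\cref{lemma: contractive}, \cref{cor: contract}): since $P$ is a projection onto a convex cone, $\|P(y)-P(\mu)\|_2^2 \le \langle P(y)-P(\mu),\, y-\mu\rangle = \langle g, \Z_j\rangle$. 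This self-bounding inequality is the workhorse: it replaces the useless worst-case bound $\|g\|_2 \le \|\Z_j\|_2 \approx \sigma_j\sqrt T$ by a localized one in which $\|g\|_2$ appears on both sides.

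Next I would pin down the two deterministic quantities governing the right-hand side. Because isotonic projection never increases the range of its input, $P(y)$ and $P(\mu)$ are each monotone (w.r.t.\ the $\bfv$-ordering) with range bounded by the ranges of $y$ and $\mu$. Using $\|u_j^*\|_0 = s_j^*$, $\|u_j^*\|_2 = 1$ and entrywise boundedness (\cref{asspt: entry_bound}), each linear predictor obeys $|X_t^T u_j^*| \le M_x\sqrt{s_j^*}$, so the $L_j$-Lipschitz monotone $f_j^*$ (\cref{asspt: mono_Lip}) forces $\mathrm{range}(\mu) \le 2L_j M_x\sqrt{s_j^*}$; and a sub-Gaussian maximal inequality over the $T$ martingale-difference coordinates (the recursive conditional-MGF bound already used in \cref{lemma: infty_norm_z}) gives $\max_t |Z_{t,j}| \le \sqrt{2\sigma_j^2\log(2T/\gamma)}$ on an event of probability $\ge 1-\gamma$, whence $\mathrm{range}(y) \le 2L_j M_x\sqrt{s_j^*} + 2\sqrt{2\sigma_j^2\log(2T/\gamma)} =: V$. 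Consequently $g$ is a difference of two monotone vectors of range $\le V$, hence has total variation at most $2V$; this is the only route by which the factor $2\sqrt{2\sigma_j^2\log(2T/\gamma)} + 2L_j M_x\sqrt{s_j^*}$ enters the final bound.

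The heart of the argument is to bound the stochastic term $\langle g, \Z_j\rangle$ by exploiting that $g$ lives in a low-complexity set: it is piecewise constant on the solution blocks of $P(y)$ and of bounded total variation. The plan is a union bound over interval/block sums of the noise. For each of the $\binom{T+1}{2} \le T(T+1)/2$ order-intervals $B$, the fixed linear functional $\sum_{t\in B} Z_{t,j}$ is sub-Gaussian with parameter $\sigma_j\sqrt{|B|}$ (again by the recursive conditional-MGF computation, which makes \emph{every} fixed linear functional of $\Z_j$ sub-Gaussian regardless of dependence), so simultaneously over all intervals $|\mathrm{avg}_B\,\Z_j| \le \sigma_j\sqrt{2\log(T(T+1)/\gamma)}/\sqrt{|B|}$ with probability $\ge 1-\gamma$; this produces the $\log(T(T+1)/\gamma)$ factor and the second high-probability event (the two $\gamma$'s combining to the stated $1-2\gamma$). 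Decomposing $\langle g, \Z_j\rangle$ over the constant blocks of $P(y)$ and applying Cauchy--Schwarz across blocks expresses the bound through the number $k$ of blocks, the range $V$, and $\|g\|_2$; together with $\|g\|_2^2 \le \langle g,\Z_j\rangle$ this yields a fixed-point inequality in $t=\|g\|_2$ whose dominant branch solves to $\|g\|_2^2$ of order $T^{1/3}\big(V\,\sigma_j^2\log(T(T+1)/\gamma)\big)^{2/3}$, with the residual single-block branch contributing the additive $\sigma_j^2\log(T(T+1)/\gamma)$; these are precisely the two terms in the stated bound (with explicit constants $16$ and $4$).

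I expect the main obstacle to be this final balancing carried out \emph{uniformly in the noise}: because $g$ (through the blocks of $P(y)$) depends on $\Z_j$, one cannot treat $g$ as fixed, and the cleanest remedy is a peeling/slicing argument over dyadic levels of $\|g\|_2$ combined with the interval union bound above, after which the self-bounding inequality can be inverted; optimizing the block count $k$ is exactly the bias--complexity trade-off responsible for the $T^{1/3}$ exponent. A secondary difficulty relative to the i.i.d.\ literature (\cite{zhang2002risk,chatterjee2014new,bellec2018sharp}) is that the coordinates of $\Z_j$ form only a martingale difference sequence (\cref{asspt: noise}), but this is handled exactly as in \cref{lemma: infty_norm_z}: recursive conditioning shows any fixed $\sum_t h_t Z_{t,j}$ is sub-Gaussian with parameter $\sigma_j\|h\|_2$, so all interval-sum tail and maximal inequalities transfer verbatim and no mixing condition is needed. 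As an alternative to the block union bound, a Dudley chaining over the monotone cone, whose $\delta$-scale metric entropy is of order $V\sqrt T/\delta$, delivers the same $T^{1/3}$ rate and would even shave a logarithmic factor, but the interval union bound is what reproduces the stated constants.
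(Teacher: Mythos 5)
Your proposal is correct in its essential ingredients and achieves the stated rate, but it takes a genuinely different route from the paper at the core step. Both arguments share the same two probabilistic events and therefore the same $1-2\gamma$ bookkeeping: the range bound (your $V$ is exactly the paper's $\tilde{H}_j = 2\sqrt{2\sigma_j^2\log(2T/\gamma)} + 2L_jM_x\sqrt{s_j^*}$, obtained from \cref{asspt: mono_Lip}, \cref{asspt: entry_bound} and a coordinate-wise maximal bound on the martingale noise) and the union bound over the $\binom{T+1}{2}$ order-intervals with the recursive conditional-MGF argument of \cref{lemma: infty_norm_z}, which produces the $\log(T(T+1)/\gamma)$ factor. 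Where you diverge is in how these are combined. The paper never invokes firm nonexpansiveness: it cuts the common range $\tilde{H}_j$ into $n$ value levels, so that both $\iso_{\bfv}(\X_{-0,j})$ and $\iso_{\bfv}(f_j^*(\X_{-T}u_j^*))$ oscillate by at most $\tilde{H}_j/n$ on each of at most $2n-1$ resulting segments (a deterministic bias term $\sqrt{|\I_l|}\,\tilde H_j/n$ per segment), controls the difference of segment means by the interval union bound via the mean-seminorm contraction (\cref{cor: contract}), applies the triangle inequality segment-wise, sums squares, and optimizes $n$ — no fixed-point inversion, no peeling, and the explicit constants $16$ and $4$ fall out of this choice of $n$. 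Your route — the self-bounding inequality $\|g\|_2^2 \le \langle g,\Z_j\rangle$ plus block decomposition, total-variation control, and a fixed-point/peeling inversion — is the classical isotonic-LSE localization, and it is more robust and generalizable (your chaining remark could even shave a logarithm). However, one step in your sketch is loose and would fail if taken literally: you cannot ``optimize the block count $k$,'' because the solution blocks of $P(y)$ are data-dependent and may number up to $T$, in which case the Cauchy--Schwarz bound $\langle g,\Z_j\rangle \le \sigma_j\sqrt{2k\log(T(T+1)/\gamma)}\,\|g\|_2$ is vacuous. The repair is to group blocks into at most $O(n)$ value-level groups using the TV bound $2V$ — but that grouping is precisely the paper's segmentation device, and once it is in place the self-bounding inequality is no longer needed. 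Relatedly, your claim that this route ``reproduces the stated constants'' should be softened: it reproduces the rate $T^{1/3}\bigl[V\sigma_j^2\log(T(T+1)/\gamma)\bigr]^{2/3} + \sigma_j^2\log(T(T+1)/\gamma)$, but recovering exactly the factors $16$ and $4$ requires the paper's specific decomposition and choice of $n$.
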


\begin{proof}
    (\cref{lemma: 2norm bound})

    For any $\bfv \in \bfV$, we first bound the range of $\iso_{\bfv} (f_j^{*}(\X_{-T}u_j^{*}))$ by the nature of isotonic regression as well as the the monotonicity and the $L_j$-Lipschitz continuity of the function $f_j^{*}$:
    \begin{equation}
        \begin{split}
            &\left|
            (\iso_{\bfv} (f_j^{*}(\X_{-T}u_j^{*})))_{(n)} - 
            (\iso_{\bfv} (f_j^{*}(\X_{-T}u_j^{*})))_{(1)}
            \right|\\
            \leq & \left|
            (f_j^{*}(\X_{-T}u_j^{*}))_{max} - (f_j^{*}(\X_{-T}u_j^{*}))_{min}
            \right|\\
            \leq &\left|
            f_j^{*}(M_x \sqrt{s_j^{*}}) - f_j^{*}( - M_x \sqrt{s_j^{*}})
            \right|\\
            \leq &2 L_j M_x \sqrt{s_j^{*}},
        \end{split}
    \end{equation}
    where we made use of the boundedness of $\X_{-T}$ and sparsity of $u_j^{*}$ in the way that 
    \begin{equation}
       \begin{split}
            \parallel\X_{-T}u_j^{*}\parallel_{\infty} &= \max_{t\in [T]-1}(|\langle X_t, u_j^{*}\rangle|) \\
            &\leq 
        \max_{t\in [T]-1} \parallel X_t\parallel_{\infty} \parallel u_j^{*}\parallel_1 \leq M_x \sqrt{\parallel u_j^{*}\parallel_{0}} \cdot \parallel u_j^{*} \parallel_2 \leq M_x \sqrt{s_j^{*}}.
       \end{split}
    \end{equation}

    Following that, we further bound the range of $\iso_{\bfv}(\X_{-0,j})$:
    \begin{equation}\label{ineq: range bound2}
        \begin{split}
            &\left| (\iso_{\bfv}(\X_{-0,j}))_{(n)} - (\iso_{\bfv}(\X_{-0,j}))_{(1)} \right|\\
            \leq & \left| (\iso_{\bfv}(\X_{-0,j}))_{(n)} - (\iso_{\bfv}(f_j^{*}(\X_{-T}u_j^{*})))_{(n)} \right|+
            \left| (\iso_{\bfv}(\X_{-0,j}))_{(1)} - (\iso_{\bfv}(f_j^{*}(\X_{-T}u_j^{*})))_{(1)} \right|\\
            &+
            \left| (\iso_{\bfv}(f_j^{*}(\X_{-T}u_j^{*})))_{(n)} - (\iso_{\bfv}(f_j^{*}(\X_{-T}u_j^{*})))_{(1)} \right|\\
            \leq & 2 \parallel \iso_{\bfv}(\X_{-0,j}) - \iso_{\bfv}(f_j^{*}(\X_{-T}u_j^{*})) \parallel_{\infty}\\ 
            &+  \left| (\iso_{\bfv}(f_j^{*}(\X_{-T}u_j^{*})))_{(n)} - (\iso_{\bfv}(f_j^{*}(\X_{-T}u_j^{*})))_{(1)} \right|\\
            \leq & 2 \parallel \X_{-0,j} - f_j^{*}(\X_{-T}u_j^{*}) \parallel_{\infty} +\left| (\iso_{\bfv}(f_j^{*}(\X_{-T}u_j^{*})))_{(n)} - (\iso_{\bfv}(f_j^{*}(\X_{-T}u_j^{*})))_{(1)} \right|\\
            \leq& 2\parallel \Z_j \parallel_{\infty} + 
            2 L_j M_x \sqrt{s_j^{*}}.
        \end{split}
    \end{equation} 
    The first inequality comes from triangle inequality, the second holds directly from the definition of infinity norm, while the third inequality holds because of the contractive property of isotonic regression in \cref{cor: contract}. 

    Let $\mathbfcal{A} = \{\parallel \Z_j \parallel_{\infty} \leq \sqrt{2\sigma_j^2 \log \frac{2T}{\gamma} }\}$. 
    By the sub-Gaussianity of $\Z_j = (Z_{t,j})_{t = 1,\cdots, T}$ with parameter $\sigma_j$ and the union bound inequality, we know that $P(\mathbfcal{A}) \geq 1 - \gamma$.
    Hence, based on \cref{ineq: range bound2}, with probability at least $1-\gamma$ on event $\mathbfcal{A}$,
    \begin{equation}\label{eq: Hj}
        \left| (\iso_{\bfv}(\X_{-0,j}))_{(n)} - (\iso_{\bfv}(\X_{-0,j}))_{(1)} \right| \leq
        \tilde{H_j} :=
        2\sqrt{2\sigma_j^2 \log \frac{2T}{\gamma} } + 2 L_j M_x \sqrt{s_j^{*}}.
    \end{equation}

    Next, we apply a similar proof technique used in (\cite{dai2021convergence}),  cutting the range of $\iso_{\bfv}(\X_{-0,j})$ and $\iso_{\bfv} (f_j^{*}(\X_{-T}u_j^{*}))$ into arbitrarily short segments with $Q$ cutting points $\{M_0 = 1, M_1,\cdots,M_Q = T\}$: for any $n\geq 1$ and fixed $j$,
    on event $\mathbfcal{A}$ we have 
    \begin{equation}\label{eq: range}
        \begin{split}
            & (\iso_{\bfv} (f_j^{*}(\X_{-T}u_j^{*})))_{(M_i)}-
            (\iso_{\bfv} (f_j^{*}(\X_{-T}u_j^{*})))_{(M_{i-1}+1)} \leq \tilde{H_j}/n;\\
            &(\iso_{\bfv}(\X_{-0,j}))_{(M_i)} - (\iso_{\bfv}(\X_{-0,j}))_{(M_{i-1}+1)} \leq \tilde{H_j}/n, \forall i = 1,\cdots,Q.
        \end{split}
    \end{equation}
    Let the index set of $l^{th}$ segment be $\I_l = \{(M_{l-1}+1),(M_{l-1}+2),\cdots,(M_{l})\}$, then we know that \cref{eq: range} is equavalent to:
    \begin{equation}
        \begin{split}
            &\range((\iso_{\bfv} (f_j^{*}(\X_{-T}u_j^{*})))_{\I_l})\leq \tilde{H_j}/n;\\
            &\range((\iso_{\bfv}(\X_{-0,j}))_{\I_l}) \leq \tilde{H_j}/n, \forall l = 1,\cdots,Q.
        \end{split}
    \end{equation}

    Thanks to the above segmentation, we could first bound the distances of points in each segment to its grouped mean:
    \begin{equation}
        \parallel (\iso_{\bfv} (\bfx))_{\I_l} - \overbar{(\iso_{\bfv} (\bfx))_{\I_l}} \mathbf{1}_{|\I_l|} \parallel_2 \leq \sqrt{|\I_l|}\cdot \tilde{H_j}/n, \forall l\in [Q],
    \end{equation}
    which holds for both $\bfx = \X_{-0,j}$ and $\bfx = f_j^{*}(\X_{-T}u_j^{*})$.

    On the other hand, using union bounds and the fact that $\{Z_{t,j}\}_{t=1}^T$ is a martingale difference sequence with conditional sub-Gaussian tail, we could bound the distance of the centers by
    \begin{equation}
        \begin{split}
            &P\left( \max_{l \in [Q]}\{
                \sqrt{|\I_l|}\cdot \left|
    \overbar{(\iso_{\bfv}(\X_{-0,j}))_{\I_l}} -
    \overbar{(\iso_{\bfv} (f_j^{*}(\X_{-T}u_j^{*})))_{\I_l} }\right|
    \} > t
            \right)\\
            \leq&P\left( \max_{l \in [Q]} \sqrt{|\I_l|}\cdot\left|
            \overbar{(\X_{-0,j})_{\I_l}} -
            \overbar{ (f_j^{*}(\X_{-T}u_j^{*}))_{\I_l} }\right| > t
            \right)\\
            \leq& \binom{T+1}{2} P\left(\sqrt{|\I_l|}\cdot
            |\overbar{(\Z_j)_{\I_l}}| >t
            \right)\\
            \leq & T(T+1) \exp\left(- \frac{ t^2}{2 \sigma_j^2}
            \right).
        \end{split}
    \end{equation}
    The second line holds by using contractive property again in \cref{cor: contract} and the third line comes from union bound relaxation. Thus the following event $\mathbfcal{B}$ takes place with probability at least $1 - \gamma$, where
    \begin{equation}
        \mathbfcal{B} = \left\{
             \max_{l \in [Q]}\{
                \sqrt{|\I_l|}\cdot \left|
    \overbar{(\iso_{\bfv}(\X_{-0,j}))_{\I_l}} -
    \overbar{(\iso_{\bfv} (f_j^{*}(\X_{-T}u_j^{*})))_{\I_l} }\right|
    \}
     \leq 
     \sqrt{2\sigma_j^2\log(\frac{T(T+1)}{\gamma})}
    \right\}.
    \end{equation}

    Hence, by triangle inequality, in each segment, we have
    \begin{equation}
        \begin{split}
            &\parallel (\iso_{\bfv} (\X_{-0,j}))_{\I_l} - (\iso_{\bfv}(f_j^{*}(\X_{-T}u_j^{*})))_{\I_l} )\parallel_2\\
            \leq &2 \sqrt{|\I_l|}\tilde{H_j}/n + \sqrt{2\sigma_j^2\log(\frac{T(T+1)}{\gamma})}.
        \end{split} 
    \end{equation}

    Putting all the segments together, we have on the event $\mathbfcal{A} \cap \mathbfcal{B}$,
    \begin{equation}\label{ineq: 2norm}
        \begin{split}
            & \parallel \iso_{\bfv} (\X_{-0,j}) - \iso_{\bfv}(f_j^{*}(\X_{-T}u_j^{*})) )\parallel_2^2\\
            \leq& \sum_{l =1}^Q \left[
                2 \sqrt{|\I_l|}\tilde{H_j}/n + \sqrt{2\sigma_j^2\log(\frac{T(T+1)}{\gamma})}
            \right]^2\\
            \leq & 8 \sum_{l=1}^Q |\I_l| \frac{\tilde{H_j}^2}{n^2} + 4Q \sigma_j^2\log(\frac{T(T+1)}{\gamma})\\
            \leq& \frac{8 T \tilde{H_j}^2}{n^2} + 4 (2n -1) \sigma_j^2\log(\frac{T(T+1)}{\gamma}),
        \end{split}
    \end{equation}
    where the last line comes from the trivial results in the segmentation step that $Q \leq 2n - 1$.

    Take the segmentation parameter $n$ that could roughly minimize the bound in \cref{ineq: 2norm}:
    \begin{equation}
        n = \lceil \left(
        \frac{T \tilde{H_j}^2}{\sigma_j^2 \log(\frac{T(T+1)}{\gamma})}
        \right)^{\frac{1}{3}}
        \rceil,
    \end{equation}
    we have on the event $\mathbfcal{A} \cap \mathbfcal{B}$ ,with probability at least $1- 2\gamma$, 
    \begin{equation}
        \begin{split}
            & \parallel \iso_{\bfv} (\X_{-0,j}) - \iso_{\bfv}(f_j^{*}(\X_{-T}u_j^{*})) )\parallel_2^2\\
            \leq & 16 T^{\frac{1}{3}} \tilde{H_j}^{\frac{2}{3}} \left[
                \sigma_j^2 \log( \frac{T(T+1)}{\gamma})
            \right]^{\frac{2}{3}} + 4 \sigma_j^2 \log(\frac{T(T+1)}{\gamma})\\
            = &16 T^{\frac{1}{3}} \left[\left(2\sqrt{2\sigma_j^2 \log \frac{2T}{\gamma} } + 2 L_j M_x \sqrt{s_j^{*}} \right)
                \sigma_j^2 \log( \frac{T(T+1)}{\gamma})
            \right]^{\frac{2}{3}} \\
            &+ 4 \sigma_j^2 \log(\frac{T(T+1)}{\gamma}),
        \end{split}
        \label{eq: 2-norm almost}
    \end{equation}
    where the last line is derived by plugging in the definition of the range bound $\tilde{H_j}$ in \cref{eq: Hj}.
\end{proof}

With the help of \cref{lemma: 2norm bound}, now we could derive the uniform bound in \cref{lemma: sup_2_norm}.

\begin{proof}
    \emph{(\cref{lemma: sup_2_norm})}
    In \cref{lemma: sup_2_norm}, we intend to uniformly bound
    \begin{equation*}
        \parallel \iso_{\bfv} (\X_{-0,j}) - \iso_{\bfv}(f_j^{*}(\X_{-T}u_j^{*})) )\parallel_2
    \end{equation*}
     over the set $\bfV= \{ \bfv = \X_{-T} \bfu:  \bfu\in \mathcal{S}^{M-1} \text{with sparsity } s_j\}$. To this end, we first notice that for the isotonic regression $\iso_{\bfv}(\cdot)$, it is the ordering of $\bfv$ instead of $\bfv$ itself that makes the difference. 
    That is, for any $\pi$ being a permutation for $\{1,\dots,T\}$, $\bfu,\bfv \in \bbR^T$ are $T$-dimensional vectors satisfying $\bfu_{\pi(1)}\leq \dots \leq \bfu_{\pi(T)}$ and $\bfv_{\pi(1)}\leq \dots \leq \bfv_{\pi(T)}$, then even if $\bfu\neq \bfv$, we still have
    \begin{equation}
        \parallel \iso_{\bfv} (\X_{-0,j}) - \iso_{\bfv}(f_j^{*}(\X_{-T}u_j^{*})) )\parallel_2=
        \parallel \iso_{\bfu} (\X_{-0,j}) - \iso_{\bfu}(f_j^{*}(\X_{-T}u_j^{*})) )\parallel_2.
    \end{equation}

    For any $n$ points $x_1, \dots,x_n \in \bbR^M$ and a given sparsity level $s$, define the set
    \begin{equation}
        \begin{split}
            &\cS_{n,M}^{\text{s-sparse}}(x_1,\dots, x_n) \\
        =& \left\{
            \pi\in \cS_n: x_{\pi(1)}^T u \leq x^T_{\pi(2)} u \leq \dots \leq x_{\pi(n)}^T u
            \text{ for some s-sparse } u\in \bbR^M
        \right\},
        \end{split}
    \end{equation}
    where $\cS_n$ contains all permutations for $n$ objects. By union bound and the preserving order property for isotonic regression, we have for any $j\in [M]$,
    \begin{equation}
        \begin{split}
            &P\left(
            \sup_{\bfv \in \bfv} \parallel \iso_{\bfv} (\X_{-0,j}) - \iso_{\bfv}(f_j^{*}(\X_{-T}u_j^{*})) )\parallel_2^2 > t
        \right) \\
        \leq &
        \left| 
        \cS_{T,M}^{s_j\text{-sparse}}(X_0,\dots, X_{T-1}) 
        \right|\cdot
        P\left( \parallel \iso_{\bfv} (\X_{-0,j}) - \iso_{\bfv}(f_j^{*}(\X_{-T}u_j^{*})) )\parallel_2^2 > t \right)\\
        \leq& T^{2s_j-1} M^{s_j} \cdot
        P\left( \parallel \iso_{\bfv} (\X_{-0,j}) - \iso_{\bfv}(f_j^{*}(\X_{-T}u_j^{*})) )\parallel_2^2 > t \right),\\
        \end{split}
    \end{equation}
    where the last line comes from \cref{lemma: permutation}. Therefore, combining \cref{lemma: 2norm bound}, we know that with probability $1-2\gamma$,
    \begin{equation}
        \begin{split}
            &\sup_{\bfv \in \bfv} \parallel \iso_{\bfv} (\X_{-0,j}) - \iso_{\bfv}(f_j^{*}(\X_{-T}u_j^{*})) )\parallel_2^2\\
            \leq & 16 T^{\frac{1}{3}} \left[\left(2\sqrt{2\sigma_j^2 \log \frac{2T}{\gamma/(T^{2s_j-1}p^{s_j})} } + 2 L_j M_x \sqrt{s_j^{*}} \right)
            \sigma_j^2 \log( \frac{T(T+1)}{\gamma/(T^{2s_j-1}M^{s_j})})
        \right]^{\frac{2}{3}} \\
        &+ 4 \sigma_j^2 \log
        \left(\frac{T(T+1)}{\gamma/(T^{2s_j-1}M^{s_j})}\right),
        \end{split}
    \end{equation}
    by the fact that $\sqrt{a^2 + b^2} \leq a + b$ for any $a,b>0$, we could conclude that
    \begin{equation}
        \begin{split}
            &\sup_{\bfv\in \bfv} \|\iso_{\bfv} (f_j^{*}(\X_{-T}u_j^{*})) - \iso_{\bfv}(\X_{-0,j}) \|_2\\
            \leq & 4 T^{\frac{1}{6}} \left[
                \left(2\sqrt{2\sigma_j^2 \log \frac{2T^{2s_j}p^{s_j}}{\gamma} } + 2 L_j M_x \sqrt{s_j^{*}} \right)
                \sigma_j^2 \log( \frac{T^{2s_j}(T+1)M^{s_j}}{\gamma})
            \right]^{\frac{1}{3}} \\
            &+ 2\sigma_j \sqrt{\log\left(
                \frac{T^{2s_j}(T+1) M^{s_j}}{\gamma}
            \right)}
        \end{split}
    \end{equation}
    with probability at least $1-2\gamma$.
\end{proof}

\subsection{Proof of \cref{lemma: good_init}}
\begin{proof}
    As is defined, 
    $u_j^{(0)} = \frac{\Phi_{s_j}(\tilde{u}_j^{(0)})}{\parallel \Phi_{s_j}(\tilde{u}_j^{(0)}) \parallel_2}.
    $
    Hence \cref{eq: good_init} is equivalent to $\langle \Phi_{s_j}(\tilde{u}_j^{(0)}) , u_j^*\rangle >0 $. We conquer it by splitting it into two parts:
    \begin{equation}
        \langle \Phi_{s_j}(\tilde{u}_j^{(0)}) , u_j^*\rangle = \langle \tilde{u}_j^{(0)} , u_j^*\rangle + \langle \Phi_{s_j}(\tilde{u}_j^{(0)})  -  \tilde{u}_j^{(0)} , u_j^*\rangle.
        \label{eq: split_init}
    \end{equation}
    For the first term in the right-hand side, we have
    \begin{equation}
        \begin{split}
            \langle \tilde{u}_j^{(0)} , u_j^*\rangle  
            =& \langle \frac{1}{T}\X_{-T}^T(\X_{-0,j} - \overbar{\X_{-0,j}}\mathbf{1}_T), u_j^*
            \rangle\\
            =&\frac{1}{T} \langle f_j^*(\X_{-T}^Tu_j^*) - \overbar{f_j^*(\X_{-T}^T u_j^*)}\cdot \mathbf{1}_T,\X_{-T} u_j^* \rangle
            +
            \langle \frac{1}{T}\X_{-T}^T(\Z_{j} - \overbar{\Z_j}\cdot \mathbf{1}_T), u_j^* \rangle \\ 
            \geq &\frac{1}{T}L_j^{-1} \parallel f_j^*(\X_{-T}^Tu_j^*) - \overbar{f_j^*(\X_{-T}^Tu_j^*)}\cdot \mathbf{1}_T \parallel_2^2 -
            \sqrt{s_j^*} \parallel \frac{1}{T}\X_{-T}^T (\Z_j - \overbar{\Z_j}\mathbf{1}_T) \parallel_{\infty}.
        \end{split}
    \end{equation}
    Note that in the above last line, the first term came from \cref{lemma: lip_bound}, with $u$ taken as $-u_j^*$; the second term is derived from the duality equality and $\parallel u_j^* \parallel_1 \leq \sqrt{s_j^*}$.

    As for the second term of \cref{eq: split_init}, with the fact that $\langle \tilde{u}_j^{(0)} - \Phi_{s_j}(\tilde{u}_j^{(0)}) , \Phi_{s_j}(\tilde{u}_j^{(0)})\rangle = 0$, we have
    \begin{equation}
            \begin{split}
                &\langle \Phi_{s_j}(\tilde{u}_j^{(0)})  -  \tilde{u}_j^{(0)} , u_j^*\rangle\\
                 =&  -\langle  \tilde{u}_j^{(0)}- \Phi_{s_j}(\tilde{u}_j^{(0)})  , u_j^*- \Phi_{s_j}(\tilde{u}_j^{(0)})\rangle\\
                 \geq& - \frac{\sqrt{s_j^*}}{2\sqrt{s_j}} \parallel u_j^* - \Phi_{s_j}(\tilde{u}_j^{(0)}) \parallel_2^2\\
                 = & -\frac{\sqrt{s_j^*}}{2\sqrt{s_j}} \left(
                    1+\parallel \Phi_{s_j}(\tilde{u}_j^{(0)})\parallel_2^2 
                \right)
                +\sqrt{\frac{s_j^*}{s_j}}
                \langle u_j^*, \Phi_{s_j}(\tilde{u}_j^{(0)})\rangle.\\
            \end{split}
    \end{equation}
    Specifically, the norm of $s_j$-sparsity enforced vector $\Phi_{s_j}(\tilde{u}_j^{(0)})$ can be bounded by:
    \begin{equation}
        \begin{split}
            &\parallel \Phi_{s_j}(\tilde{u}_j^{(0)}) \parallel_2^2 \\
            \leq &2 \left\{
                \left\| \Phi_{s_j}\left(
                    \frac{1}{T} \X_{-T}^T (f_j^*(\X_{-T} u_j^*) - \overbar{f_j^*(\X_{-T} u_j^*)} \cdot \mathbf{1}_T)
                \right) \right\|_2^2 +
                \left\| \frac{1}{T}\Phi_{s_j}\left(
                    \X_{-T}^T (\Z_j -\overbar{\Z_j}\mathbf{1}_T)
                \right)\right\|_2^2
            \right \}\\
            \leq&2\cdot\frac{2}{T^2} \|f_j^*(\X_{-T} u_j^*) - \overbar{f_j^*(\X_{-T} u_j^*)}\mathbf{1}_T \|_2^2 \cdot \| \X_{-T}^T \mathbf{1}_{A_{s_j}} \|_2^2 +2s_j \| \frac{1}{T}\X_{-T}^T (\Z_j - \overbar{\Z_j}\mathbf{1}_T \|_{\infty}^2\\
            \leq & \frac{4\beta s_j}{T^2} \|f_j^*(\X_{-T} u_j^*) - \overbar{f_j^*(\X_{-T} u_j^*)}\mathbf{1}_T \|_2^2
            +
            2s_j \parallel \frac{1}{T} \X_{-T}^T (\Z_j - \overbar{\Z_j}\mathbf{1}_T) \parallel^2_{\infty}.\\
        \end{split}
    \end{equation}
    Putting things together, we have
    \begin{equation}
        \begin{split}
            &(1 - \sqrt{\frac{s_j^*}{s_j}})\langle \Phi_{s_j}(\tilde{u}_j^{(0)}) , u_j^*\rangle \\
            \geq & (\frac{1}{L_j } -\frac{2\beta\sqrt{s_js_j^*}}{T})
            \cdot 
            \frac{1}{T} \| f_j^*(\X_{-T}^T u_j^* ) - \overbar{f_j^*(\X_{-T} u_j^*)}\mathbf{1}_T \|_2^2\\
            &-\sqrt{s_j s_j^*} \|\frac{1}{T} \X_{-T}^T (\Z_j - \overbar{\Z_j}\mathbf{1}_T) \|_{\infty}^2 
             - \sqrt{s_j^*} \cdot\|\frac{1}{T} \X_{-T}^T(\Z_j - \overbar{\Z_j}\mathbf{1}_T)\|_{\infty} -\frac{\sqrt{s_j^*}}{2\sqrt{s_j}}.
        \end{split}
        \label{eq: init_together}
    \end{equation}
    Note that for any $u \in \bbR^p$ with norm 1, by the optimality of $\iso_{\X_{-T} u}(\cdot)$ and the identifiability condition \cref{eq: identifiability}, we have
    \begin{equation}
        \begin{split}
            &\frac{1}{T}\| f_j^*(\X_{-T}^T u_j^* ) - \overbar{f_j^*(\X_{-T} u_j^*)}\mathbf{1}_T \|_2^2\\
             &\geq\frac{1}{T} \sup_{u\in \bbR^p, \| u \|_2 =1}\| f_j^*(\X_{-T}^T u_j^* ) - \iso_{\X_{-T}u} (f_j^*(\X_{-T}^T u_j^* )) \|_2^2\\
            & \geq \alpha_j  \sup_{u\in \bbR^p, \| u \|_2 =1}\| u - u_j^* \|_2^2 - \epsilon_j^2 \geq 4\alpha_j - \epsilon_j^2,
        \end{split}
        \label{eq: lb_2norm}
    \end{equation}
    where the last inequality can be reached when we take a $u = -u_j^*$. 

    Plug \cref{eq: lb_2norm} into \cref{eq: init_together}, we have that $(1 - \sqrt{\frac{s_j^*}{s_j}})\langle \Phi_{s_j}(\tilde{u}_j^{(0)}) , u_j^*\rangle >0$ as long as 
    \begin{equation}
        \|\frac{1}{T} \X_{-T}^T(\Z_j - \overbar{\Z_j}\mathbf{1}_T)\|_{\infty} < U_j^{+} := \frac{1}{2}\left(
            \sqrt{\frac{1}{s_j} + 4c_j^{+}}-\sqrt{\frac{1}{s_j}}
        \right),
    \end{equation}
    where 
    \begin{equation}
        c_j^{+} := \left(\frac{1}{L_j\sqrt{s_j s_j^*}}- \frac{2\beta}{T}\right) (4\alpha_j - \epsilon_j^2)
        - \frac{1}{2s_j}.
    \end{equation}
    To assure $U_j^{+} >0$, we need $c_j^{+} >0$. $ 4\alpha_j - \epsilon_j >0$, we have 
    \begin{equation}
        s_j > s_j^* \left(\frac{L_j}{2(4\alpha_j -\epsilon_j^2)} + O(\frac{\beta L_j^3}{T})\right)^2.
    \end{equation}

    By union bound, data boundedness and sub-Gaussian tail, we have
    \begin{equation}
        \begin{split}
            &P\left(
                \| \frac{1}{T} \X_{-T}^T (\Z_j - \overbar{\Z_j}\mathbf{1}_T)\|_{\infty} \geq t 
            \right)\\
            \leq & P\left(
                \frac{1}{T} \| \X_{-T}^T \Z_j \|_{\infty} \geq \frac{t}{2}
            \right) +
            P\left(
                \frac{1}{T} \| \X_{-T}^T \overbar{\Z_j}\mathbf{1}_T) \|_{\infty} \geq \frac{t}{2}
            \right)\\
            \leq & 2(M+1) \exp(-\frac{Tt^2}{8\sigma_j^2 M_x}).
        \end{split}
    \end{equation}
    Hence, with probability 
    \begin{equation}
        1- 2(M+1)\exp(-\frac{T {U_j^{+}}^2}{8\sigma_j^2 M_x}), 
    \end{equation}
    we have $\langle u_j^{(0)} , u_j^* \rangle >0$, when $s_j > s_j^*\cdot \max \left\{ 1, \left( \frac{L_j}{2(4\alpha_j - \epsilon_j^2)} + O(\frac{\beta L_j^3}{T}) \right)^2\right\}$.
\end{proof}

\subsection{Proof of \cref{prop: prediction} }
\begin{proof}
    By Cauchy inequality, the definition of isotonic regression and its contractive property , for any $j\in [M]$, let $\hat{u}_j = u_j^{(K)}$,
    where $K \geq \frac{\log(\frac{R_j^2}{2})}{\log(\theta_j)}$, where $R_j$ and $\theta_j$ are defined in \cref{thm: coefficient conv}. Then we have
    \begin{equation}
        \begin{split}
        &\frac{1}{\sqrt{T}}\left\|
        \iso_{\X_{-T} \hat{u}_j} (X_{-0,j}) - f_j^*(\X_{-T} u_j^*) \right\|_2 \\
        \leq&
        \frac{1}{\sqrt{T}}\left\|
        \iso_{\X_{-T} \hat{u}_j} (X_{-0,j}) - \iso_{\X_{-T} \hat{u}_j}\left(f_j^*(\X_{-T} u_j^*)\right) \right\|_2 \\
        &+ \frac{1}{\sqrt{T}}\left\|
        \iso_{\X_{-T} \hat{u}_j}\left(f_j^*(\X_{-T} u_j^*)\right) - f_j^*(\X_{-T} u_j^*) \right\|_2\\
        \leq & \frac{1}{\sqrt{T}}\left\|
        \iso_{\X_{-T} \hat{u}_j} (X_{-0,j}) - \iso_{\X_{-T} \hat{u}_j}\left(f_j^*(\X_{-T} u_j^*)\right) \right\|_2 \\
        &+ \frac{1}{\sqrt{T}}\left\|
        f_j^*(\X_{-T} \hat{u}_j) - f_j^*(\X_{-T} u_j^*) \right\|_2\\
        \leq & \frac{1}{\sqrt{T}}\left\|
        \iso_{\X_{-T} \hat{u}_j} (X_{-0,j}) - \iso_{\X_{-T} \hat{u}_j}\left(f_j^*(\X_{-T} u_j^*)\right) \right\|_2 + 
        L_j\sqrt{\beta}\|
        \hat{u}_j -  u_j^* \|_2\\
        \leq& O_p(T^{-\frac{1}{3}}\log T),
        \end{split}
    \end{equation}
    where the last line comes directly from \cref{lemma: 2norm bound} and \cref{thm: coefficient conv}.
\end{proof}

\subsection{Other Supporting Lemmas}
\begin{lemma} (Lemma 6 in \cite{dai2021convergence})
    For any vector $u \in \bbR^M$,
    \begin{equation}
        \langle
        \X_{-T} u^*, f^*(\X_{-T}u^*) - \iso_{\bfx u}(f^*(\X_{-T}u^*)) 
        \rangle
        \geq 
        L^{-1} \parallel f^*(\X_{-T}u^*) - \iso_{\bfx u}(f^*(\X_{-T}u^*)) 
        \parallel_2^2,
    \end{equation}
    where $f^*$ is an L-Lipschitz monotone non-decreasing function.
    \label{lemma: lip_bound}

\end{lemma}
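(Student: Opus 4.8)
The plan is to reduce the inequality to a within-block concordance statement about the isotonic fit. Write $w = \X_{-T}u^*$, let $g = f^*(w)$ denote the entrywise application of $f^*$, let $v = \X_{-T}u$ be the reference vector, set $\hat g = \iso_v(g)$ and $h = g - \hat g$. Since the set of $v$-isotonic vectors is a closed convex cone $\mathcal{K}$ and $\hat g$ is the Euclidean projection of $g$ onto $\mathcal{K}$, the cone projection identity $\langle g - \hat g, \hat g\rangle = 0$ gives $\|h\|_2^2 = \langle h, g\rangle$. Because $f^*$ is $L$-Lipschitz and non-decreasing, the map $\phi(t) := Lt - f^*(t)$ is non-decreasing. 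Multiplying the target bound $\langle w, h\rangle \geq L^{-1}\|h\|_2^2$ by $L$ and substituting $\|h\|_2^2 = \langle g, h\rangle$ shows it is \emph{equivalent} to $\langle Lw - g, h\rangle = \langle \phi(w), h\rangle \geq 0$, so it suffices to establish this last inequality.

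The main obstacle is that the reference $v$ orders the coordinates differently from $w$, so one cannot simply argue that $\phi(w) = Lw - g$ lies in $\mathcal{K}$ and read off the sign of $\langle \phi(w), h\rangle$ from the polar-cone relation $h \in \mathcal{K}^{\circ}$: the vector $\phi(w)$ respects the order of $w$, not that of $v$. The resolution is to localize to the constant pieces of $\hat g$. I would invoke the PAVA description of $\iso_v$: in the $v$-order, $\hat g$ is piecewise constant on maximal blocks $B_1,\dots,B_m$, with $\hat g$ equal on $B_k$ to the average $\bar g_{B_k}$ of $g$ over that block, so that $\sum_{i\in B_k} h_i = 0$ for every $k$.

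With these zero-sum constraints in hand, I would center $\phi(w)$ within each block (which leaves the inner product unchanged, since $h$ has vanishing block sums) to obtain
\begin{equation}
  \langle \phi(w), h\rangle = \sum_{k=1}^{m}\sum_{i\in B_k}\bigl(\phi(w_i) - \overline{\phi(w)}_{B_k}\bigr)\bigl(g_i - \bar g_{B_k}\bigr),
\end{equation}
a sum of within-block empirical covariances between $\phi(w)$ and $g = f^*(w)$. Since both $\phi$ and $f^*$ are non-decreasing functions of the common variable $w$, for any pair of indices $i,j$ the factors $\phi(w_i)-\phi(w_j)$ and $g_i - g_j$ share a sign, hence $(\phi(w_i)-\phi(w_j))(g_i-g_j)\ge 0$; by Chebyshev's sum inequality each block covariance is non-negative. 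Summing over $k$ gives $\langle \phi(w), h\rangle \ge 0$, which is the desired bound.

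The only delicate point is the bookkeeping in identifying the blocks of $\iso_v(g)$ and verifying $\sum_{i\in B_k} h_i = 0$; once that is granted the statement becomes purely a concordance argument that uses the relation $g_i = f^*(w_i)$ index by index and is therefore insensitive to the mismatch between the orderings of $v$ and $w$. It is worth noting that monotonicity and the Lipschitz bound enter in exactly one place, namely to make $\phi(t) = Lt - f^*(t)$ non-decreasing, which is precisely what forces the factor $L^{-1}$ and makes the sign of the covariance go the right way.
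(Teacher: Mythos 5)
Your proof is correct, and it is worth noting that the paper itself never proves this statement: it is imported verbatim as Lemma 6 of \cite{dai2021convergence}, so your argument supplies a self-contained proof that the paper omits. Every step checks out. The cone-projection identity gives $\|h\|_2^2=\langle h,g\rangle$ (this also follows directly from the block-average property you invoke later, since $\langle h,\hat g\rangle=\sum_k \bar g_{B_k}\sum_{i\in B_k}h_i=0$); $L$-Lipschitz continuity together with monotonicity makes $\phi(t)=Lt-f^*(t)$ non-decreasing; the level sets of $\iso_v(g)$ do carry zero block sums of $h$, because on each maximal level set the fitted value can be perturbed up or down while staying in the isotonic cone (adjacent block values are strictly separated), forcing it to equal the block mean of $g$; and the within-block concordance bound is exactly what neutralizes the mismatch between the $v$-ordering and the $w$-ordering, since both $\phi(w_i)$ and $g_i=f^*(w_i)$ are non-decreasing in the common variable $w_i$. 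For comparison, the more common way to finish (in the Isotron-style literature this lemma descends from) runs the same block decomposition but bounds each within-block covariance directly: summing the pairwise inequality $(g_i-g_j)^2\le L\,(w_i-w_j)(g_i-g_j)$ over pairs in a block gives $\sum_{i\in B_k}(w_i-\bar w_{B_k})(g_i-\bar g_{B_k})\ge L^{-1}\sum_{i\in B_k}(g_i-\bar g_{B_k})^2$, and summing over blocks yields the lemma because $\|h\|_2^2=\sum_k\sum_{i\in B_k}(g_i-\bar g_{B_k})^2$. Your $\phi$-reduction is an equivalent repackaging of that computation, but it has the virtue of isolating the single place where the constant $L$ enters the argument.
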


\begin{lemma}
    For any $j\in [M]$ , and an index set $I \subset [T]$, we have 
    \begin{equation}
        P\left(\sqrt{|I|} \cdot |\overline{(\Z_j)_{I}}| >t\right) \leq 2 \exp(-\frac{t^2}{2\sigma_j^2}),
    \end{equation}
    where $|I|$ denotes the cardinality of $I$.
\end{lemma}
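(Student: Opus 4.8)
The plan is to reduce the statement to the sub-Gaussianity of a single sum of martingale differences and then apply a Chernoff bound. Writing $k = |I|$, note first that $\sqrt{|I|}\,\bigl|\overline{(\Z_j)_I}\bigr| = \frac{1}{\sqrt{k}}\bigl|\sum_{i\in I} Z_{i,j}\bigr|$, so it suffices to control the tail of $S_I := \sum_{i\in I} Z_{i,j}$.

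First I would enumerate $I = \{i_1 < i_2 < \cdots < i_k\}$ and record the measurability fact that, since $Z_{t,j} = X_{t,j} - f_j^*(X_{t-1}^T u_j^*)$ is $\mathcal{F}_t$-measurable, every partial sum $\sum_{l=1}^{m} Z_{i_l,j}$ is $\mathcal{F}_{i_{m+1}-1}$-measurable; this holds because $i_m \le i_{m+1}-1$ and the filtration is nested, $\mathcal{F}_{i_m}\subseteq \mathcal{F}_{i_{m+1}-1}$. With this in hand I would bound the moment generating function of $S_I$ by the same recursive conditioning used in the proof of \cref{lemma: infty_norm_z}: for any $\lambda\in\bbR$, conditioning on $\mathcal{F}_{i_k-1}$ and factoring out the measurable term gives
\begin{equation}
    \bbE\bigl[e^{\lambda S_I}\bigr] = \bbE\Bigl[e^{\lambda\sum_{l=1}^{k-1}Z_{i_l,j}}\,\bbE\bigl[e^{\lambda Z_{i_k,j}}\mid \mathcal{F}_{i_k-1}\bigr]\Bigr] \le e^{\lambda^2\sigma_j^2/2}\,\bbE\bigl[e^{\lambda\sum_{l=1}^{k-1}Z_{i_l,j}}\bigr],
\end{equation}
where the conditional sub-Gaussian bound of \cref{asspt: noise} was applied. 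Iterating this $k$ times yields $\bbE[e^{\lambda S_I}]\le e^{k\lambda^2\sigma_j^2/2}$, so $S_I$ is mean-zero sub-Gaussian with parameter $\sqrt{k}\,\sigma_j$.

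Finally, the standard Chernoff bound gives $P(S_I > s)\le \exp\bigl(-s^2/(2k\sigma_j^2)\bigr)$, and the identical argument applied to $-S_I$ controls the lower tail. Choosing $s = t\sqrt{k}$ and adding the two tails gives $P(|S_I| > t\sqrt{k})\le 2\exp(-t^2/(2\sigma_j^2))$, which is exactly the claimed bound once we recall $\sqrt{|I|}\,|\overline{(\Z_j)_I}| = |S_I|/\sqrt{k}$.

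The only genuine obstacle is the measurability bookkeeping for a non-consecutive index set: the recursion in \cref{lemma: infty_norm_z} ran over consecutive times $1,\dots,T$, whereas here the gaps between successive indices $i_l$ must be absorbed by the nesting $\mathcal{F}_{i_m}\subseteq\mathcal{F}_{i_{m+1}-1}$. Ordering the indices of $I$ and invoking this nesting is the crucial step; once it is in place, the remainder is a direct repetition of the sub-Gaussian moment-generating-function computation already performed in the paper.
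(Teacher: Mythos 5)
Your proof is correct and follows essentially the same route as the paper: both bound the moment generating function of $\sum_{t\in I} Z_{t,j}$ by iterated conditioning using the conditional sub-Gaussian assumption in \cref{asspt: noise}, then conclude with the standard two-sided sub-Gaussian tail bound. The only difference is cosmetic bookkeeping — the paper conditions successively on every $\mathcal{F}_{T-1},\dots,\mathcal{F}_0$ (so gaps in $I$ pass through trivially), whereas you condition only at the ordered indices of $I$ and invoke the nesting $\mathcal{F}_{i_m}\subseteq\mathcal{F}_{i_{m+1}-1}$; both yield the identical bound $\bbE[e^{\lambda S_I}]\leq e^{|I|\lambda^2\sigma_j^2/2}$.
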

\begin{proof}
    To show the tail bound of the martingale difference mean, we first bound the moment generating function. For any $\lambda>0$, we have
    \begin{equation}
        \begin{split}
            \bbE\left[\exp\left(\lambda({|I|} \cdot\overline{(\Z_j)_{I}})\right)
        \right] &= \bbE \left[ 
            \exp\left(
                \lambda\sum_{t\in I} Z_{t,j}
            \right)
        \right]\\
            & = \bbE\left\{\bbE \left[ 
                \bbE\left(
                    \bbE \left(\exp(
                    \lambda\sum_{t\in I} Z_{t,j})
                    | \mathcal{F}_{T-1}
                \right)| \mathcal{F}_{T-2}
                \right)\dots|\mathcal{F}_0
            \right] \right\}
            \\
            &\leq \exp\left(\frac{\lambda^2 |I|\sigma_j^2}{2}\right).
        \end{split}
    \end{equation}
    Then by tail bound of a sub-Gaussian variable, we finish the proof.
\end{proof}

\begin{definition}(Seminorm)
    Let $\mathcal{V}$ be a vector space over real numbers in $\bbR$. A map $\|\cdot \|: \mathcal{V}\mapsto \bbR$ is called a seminorm if it satisfies the following two conditions:
    \begin{enumerate}
        \item Subadditivity (Triangle inequality): $\|x+y \|\leq \|x\| +\|y\|, ~ \forall x, y \in \mathcal{V}$;
        \item Homogeneity: $\| c x\| = |c| \cdot \|x\|, ~ \forall c\in \bbR, x\in \mathcal{V}$.
    \end{enumerate}
\end{definition}

\begin{remark}
    If a seminorm $\|\cdot\|$ separates points, \ie $\|x\| = 0$ implies $x=0$ for any $x\in \mathcal{V}$, it is also a norm. Define a mapping $\|\cdot \|_{mean}: \mathcal{V}\mapsto \bbR$ such that $\|x\|_{mean} = |\bar{x}|$, we could find that $\|\cdot\|_{mean}$ is not a norm, since for $x= [1,-1]^T$, by definition $\|x\|_{mean} = 0$. However, it's a seminorm, by the fact that for any $x,y \in \mathcal{V}$ and any $c\in \bbR$, we have
    \begin{enumerate}
        \item $\|x+y\|_{mean} = |\overline{x+y}| = |\bar{x}+\bar{y}| \leq |\bar{x}|+ |\bar{y}| = \|x\|_{mean}+
        \|y\|_{mean}$;
        \item $\| cx\|_{mean} = |\overline{cx}| = |c|\cdot\| x\|_{mean} $.
    \end{enumerate}
\end{remark}

\begin{lemma} (Lemma 1 in \citep{yang2017contraction})
    For any $n\in \mathbb{T}^{+}$ and seminorm $\|\cdot \|$ on the vector space $\bbR^n$, the isotonic projection is contractive with respect to $\|\cdot\|$, \ie
    \begin{equation}
        \| iso(x) - iso(y)\| \leq \|x - y\| ~\text{for all } x,y\in \bbR^n,
    \end{equation}
    as long as the seminorm $\|\cdot\|$ is invariant to permutations of the entries of the vector, that is, for any vector $x\in \bbR^n$ and permutation $\pi$ on $\{1,\cdots,n\},$
    \begin{equation}
        \|x\| = \|x_{\pi}\|.
    \end{equation}
    \label{lemma: contractive}
\end{lemma}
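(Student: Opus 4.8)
The plan is to show that the difference of two isotonic projections is a doubly stochastic linear image of the difference of the inputs; once that structural fact is in hand, the two seminorm axioms together with permutation invariance close the argument in a couple of lines. Throughout, fix the total order $\pi$ defining the monotone cone $\mathcal{M} = \{x \in \bbR^n : x_{\pi(1)} \le \cdots \le x_{\pi(n)}\}$, so that $iso(\cdot)$ is the Euclidean projection $P_{\mathcal{M}}$.

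First I would recall the block structure of isotonic regression onto $\mathcal{M}$: the pool-adjacent-violators solution $iso(x)$ partitions $\{1,\dots,n\}$ into consecutive blocks (its level sets) and replaces the entries inside each block by their common average. Hence $iso(x) = B_x\, x$, where $B_x$ is the symmetric block-averaging matrix whose $(i,j)$ entry equals $1/|b|$ when $i$ and $j$ lie in a common block $b$ and $0$ otherwise. Each such $B_x$ is doubly stochastic (nonnegative, all rows and columns summing to one), so by Birkhoff--von Neumann it is a convex combination of permutation matrices.

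The obstacle is that $B_x \ne B_y$ in general, so $iso(x) - iso(y) = B_x x - B_y y$ is not literally $B(x-y)$ for a single matrix. To repair this I would use a path argument. Set $z(t) = y + t(x-y)$ for $t \in [0,1]$. Since $\mathcal{M}$ is a polyhedral convex cone, metric projection onto it is continuous and piecewise linear, so $t \mapsto iso(z(t))$ is continuous and piecewise linear with only finitely many breakpoints; on each open linear piece the projection coincides with orthogonal projection onto a fixed solution face, i.e. onto the subspace of vectors constant on a fixed block partition, which is exactly a fixed block-averaging matrix $B_{(t)}$. Differentiating on each piece gives $\frac{d}{dt} iso(z(t)) = B_{(t)}(x-y)$, and integrating across the pieces yields
\[
  iso(x) - iso(y) = \left( \int_0^1 B_{(t)}\, dt \right)(x-y) =: S\,(x-y),
\]
where $S$, being an average of doubly stochastic matrices, is itself doubly stochastic.

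Finally, writing $S = \sum_k \lambda_k P_{\pi_k}$ as a convex combination of permutation matrices ($\lambda_k \ge 0$, $\sum_k \lambda_k = 1$) and setting $w = x-y$, the subadditivity, homogeneity, and permutation invariance of the seminorm give
\[
  \| iso(x) - iso(y) \| = \Big\| \sum_k \lambda_k P_{\pi_k} w \Big\| \le \sum_k \lambda_k \| P_{\pi_k} w \| = \sum_k \lambda_k \| w \| = \| x - y \|,
\]
which is the claim. The hard part is the piecewise-linear/integral representation along the segment: one must verify that the active block partition is locally constant, that it changes at only finitely many $t$, and that the projection on each piece is genuinely the orthogonal projection onto the associated constant-on-blocks subspace (so that $B_{(t)}$ is doubly stochastic). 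Everything after establishing that representation is just the two displayed computations above, and the argument uses nothing about $\mathcal{M}$ beyond its being a polyhedral cone whose solution faces are block-averaging subspaces.
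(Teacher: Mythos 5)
The paper never proves this lemma: it is imported verbatim, with the citation ``Lemma 1 in \citep{yang2017contraction}'', and no argument for it appears in the appendix, so there is no in-paper proof to compare yours against --- the comparison is necessarily with the external source. Judged on its own merits, your proof is correct. The decisive structural fact, $iso(x)-iso(y)=S(x-y)$ for a doubly stochastic $S$, is exactly the right thing to establish, and every ingredient of your path argument checks out: projection onto a polyhedral cone is continuous and piecewise linear; on the cell $F+N_F$ of the normal-manifold decomposition associated with a face $F$ of the monotone cone, the projection coincides with the orthogonal projection onto $\mathrm{span}(F)$ (this uses that the normal cone at relative-interior points of $F$ is orthogonal to $\mathrm{span}(F)$, which holds for polyhedral cones); the spans of the faces of the monotone cone are precisely the subspaces of vectors constant on consecutive blocks, whose orthogonal projectors are the symmetric doubly stochastic block-averaging matrices; a segment meets each of the finitely many polyhedral cells in a sub-segment, so the integral representation is legitimate; and an average of doubly stochastic matrices is doubly stochastic. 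Your closing step --- Birkhoff--von Neumann plus homogeneity, subadditivity, and permutation invariance --- is the same mechanism any proof of this type must invoke at the end: your representation $iso(x)-iso(y)=S(x-y)$ is equivalent to the majorization statement $iso(x)-iso(y)\prec x-y$, which is how the contraction is organized in the cited source, so at the level of ideas you have rediscovered essentially the standard argument rather than a shortcut around it. The only caveat is one you flag yourself: the ``hard part'' (the piecewise-linear/normal-manifold structure of the projection) is the bulk of the work and is deferred to standard polyhedral-geometry facts, so what you have is a correct and essentially complete outline rather than a fully self-contained proof; there is, however, no gap in the logic.
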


\begin{corollary}
    \label{cor: contract}
     With regard to the infinity norm $\|\cdot \|_{\infty}$, one-norm $\|\cdot\|_1$, 2-norm $\|\cdot\|_2$, as well as the mean seminorm $\|\cdot \|_{mean}$, the isotonic projection is contractive.
\end{corollary}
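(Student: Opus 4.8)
The plan is to deduce the corollary directly from \cref{lemma: contractive} (Lemma 1 in \citep{yang2017contraction}), which guarantees that the isotonic projection $\iso_{\z}(\cdot)$ is contractive with respect to \emph{any} seminorm on $\bbR^n$ that is invariant under permutations of the coordinates. Thus the whole task reduces to verifying, for each of the four maps $\|\cdot\|_\infty$, $\|\cdot\|_1$, $\|\cdot\|_2$ and $\|\cdot\|_{mean}$, the two hypotheses of that lemma: that the map is a seminorm, and that it is permutation-invariant. Once both are checked, contractiveness is inherited with no further work.

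First I would dispatch the three $\ell_p$ norms with $p\in\{1,2,\infty\}$. Each is a genuine norm, hence a fortiori a seminorm, so subadditivity and homogeneity hold automatically. For permutation-invariance, note that $\|x\|_p=(\sum_i |x_i|^p)^{1/p}$ (and $\|x\|_\infty=\max_i |x_i|$) depends on $x$ only through the multiset $\{|x_1|,\dots,|x_n|\}$; applying any permutation $\pi$ to the entries of $x$ leaves this multiset, and therefore the value of the norm, unchanged, so $\|x_\pi\|_p=\|x\|_p$. \cref{lemma: contractive} then applies verbatim and yields contractiveness of the isotonic projection in $\|\cdot\|_\infty$, $\|\cdot\|_1$ and $\|\cdot\|_2$.

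It remains to treat the mean seminorm $\|x\|_{mean}=|\bar x|$, and here the single point worth care is that this map is \emph{not} a norm (for instance $\|(1,-1)^T\|_{mean}=0$ while $(1,-1)^T\neq \0$), so one genuinely needs the seminorm-level generality of \cref{lemma: contractive} rather than any norm-only contraction statement. The seminorm axioms were already verified in the remark preceding the corollary, so only permutation-invariance is left; this is immediate, since the arithmetic mean $\bar x=\frac1n\sum_i x_i$ is unchanged under any reordering of the entries, giving $\|x_\pi\|_{mean}=|\overline{x_\pi}|=|\bar x|=\|x\|_{mean}$. Applying \cref{lemma: contractive} once more completes the argument.

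I do not anticipate a substantive obstacle: the proof is a short check that each of the four maps falls within the hypothesis class of \cref{lemma: contractive}, and the only subtlety — that $\|\cdot\|_{mean}$ is merely a seminorm — is precisely the reason the lemma is phrased for seminorms in the first place. All symmetry verifications are routine consequences of each map being a symmetric function of (the absolute values of, respectively the sum of) the coordinates.
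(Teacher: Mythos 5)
Your proposal is correct and follows exactly the route the paper intends: \cref{cor: contract} is stated as an immediate consequence of \cref{lemma: contractive}, with the seminorm axioms for $\|\cdot\|_{mean}$ already checked in the preceding remark, so all that remains is the routine permutation-invariance verification you carry out. Nothing is missing, and your emphasis that the seminorm-level generality of the lemma is needed precisely for $\|\cdot\|_{mean}$ matches the paper's own motivation for that remark.
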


\begin{lemma}\label{lemma: hard_thres}
    (Lemma 1 in \citep{liu2018hard})
    For any $v\in \bbR^M$ and $s^*$-sparse $\omega \in \bbR^M$, it is true for the $s$-sparse hard-threshold operator $\Phi_s(\cdot)$ that
    \begin{equation}
        \frac{\langle v - \Phi_s(v), \omega -\Phi_s(v)\rangle}{\parallel \omega - \Phi_s(v) \parallel_2^2} \leq \frac{\sqrt{s^*}}{2\sqrt s}.
    \end{equation}
\end{lemma}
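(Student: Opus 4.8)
The plan is to prove the stated inequality by reducing it to a combinatorial comparison between the indices that hard-thresholding discards from the support of $\omega$ and those it keeps outside the support of $\omega$. Write $p = \Phi_s(v)$, let $S = \supp(p)$ with $|S| = s$, and let $T = \supp(\omega)$ with $|T| \le s^*$. Since $p$ coincides with $v$ on $S$ and vanishes on $S^c$, the vector $v - p$ is supported on $S^c$, so that
\[
\langle v - p,\, \omega - p\rangle = \sum_{i\in S^c} v_i\,\omega_i = \sum_{i\in T\setminus S} v_i\,\omega_i = \langle v_A,\, \omega_A\rangle,
\]
where $A := T\setminus S$. If $A = \emptyset$ the left side is $0$ and the claim is immediate, so I would assume $A \neq \emptyset$; likewise I may assume $\omega \neq p$ so the denominator is positive.

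First I would exploit the defining property of the $s$-sparse hard-threshold: the retained entries on $S$ have the $s$ largest magnitudes, so with $m := \min_{j\in S}|v_j|$ one has $|v_i| \le m$ for every $i \in A \subseteq S^c$, while $|v_i| \ge m$ for every $i$ in $B := S\setminus T \subseteq S$. Consequently $\|v_A\|_2 \le \sqrt{|A|}\,m$ and $\|v_B\|_2 \ge \sqrt{|B|}\,m$, which after eliminating $m$ gives
\[
\|v_A\|_2 \le \sqrt{|A|/|B|}\;\|v_B\|_2 .
\]
Here $|B| = s - |S\cap T| > 0$ is guaranteed in the relevant regime $s \ge s^*$, since $|S\cap T| \le |T| \le s^*$. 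Next I would re-express the two norms in terms of $\omega - p$ using the disjoint supports $A$ and $B$: on $A\subseteq S^c$ we have $p = 0$, so $(\omega-p)_A = \omega_A$; on $B\subseteq S$ we have $\omega = 0$ and $p = v$, so $(\omega-p)_B = -v_B$. Combining Cauchy--Schwarz with the displayed bound, the elementary inequality $ab \le \tfrac12(a^2+b^2)$ applied to $\|v_B\|_2\|\omega_A\|_2$, and the disjointness of $A,B$ (which yields $\|(\omega-p)_A\|_2^2 + \|(\omega-p)_B\|_2^2 \le \|\omega-p\|_2^2$), gives
\[
\langle v_A,\omega_A\rangle \le \sqrt{|A|/|B|}\cdot\tfrac12\,\|\omega-p\|_2^2 .
\]

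Finally I would discharge the combinatorial factor. Writing $k = |S\cap T|$, so that $|A| = |T|-k$ and $|B| = s-k$, the inequality $|A|/|B| \le s^*/s$ is equivalent to $s(|T|-s^*) \le k(s-s^*)$, whose left side is nonpositive (as $|T|\le s^*$) and whose right side is nonnegative whenever $s \ge s^*$. Substituting $\sqrt{|A|/|B|} \le \sqrt{s^*/s}$ and dividing by $\|\omega-p\|_2^2$ completes the argument. The one step requiring care is precisely this combinatorial bound together with the guarantee $|B|>0$: both rely on the hypothesis $s \ge s^*$, so I would state it explicitly, noting that it always holds because the hard-threshold level $s = s_j$ is chosen larger than the true sparsity $s_j^*$ throughout the paper. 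Everything else is a direct chain of Cauchy--Schwarz and the basic quadratic estimate, so I do not anticipate further obstacles.
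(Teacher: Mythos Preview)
The paper does not supply its own proof of this lemma; it is simply quoted from \cite{liu2018hard}. Your argument is correct and is in fact the standard one: reduce the inner product to the coordinates in $A=T\setminus S$, use the extremal property of hard-thresholding to compare $\|v_A\|_2$ against $\|v_B\|_2$ on $B=S\setminus T$, and then apply Cauchy--Schwarz together with $ab\le\tfrac12(a^2+b^2)$ and the identification $(\omega-p)_A=\omega_A$, $(\omega-p)_B=-v_B$. Your observation that the combinatorial estimate $|A|/|B|\le s^*/s$ and the nondegeneracy $|B|>0$ both require $s\ge s^*$ is exactly right; this hypothesis is implicit in the lemma as stated and is enforced everywhere the lemma is invoked in the paper (cf.\ the condition $s_j>s_j^*$ in \cref{asspt: sparsity}).
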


\begin{lemma}\cite{cover1967}
    Given a set of points $\{x_1,\dots, x_n\} \subset \bbR^M$ with sample size $n$ and dimension $M$, for any sparsity level $s\leq M$, define the set
    \begin{equation}
        \begin{split}
            &\cS_{n,M}^{\text{s-sparse}}(x_1,\dots, x_n) \\
        =& \left\{
            \pi\in \cS_n: x_{\pi(1)}^T u \leq x^T_{\pi(2)} u \leq \dots \leq x_{\pi(n)}^T u
            \text{ for some s-sparse } u\in \bbR^M
        \right\},
        \end{split}
    \end{equation}
    where $\cS_n$ contains all permutations for $n$ objects, \ie all bijections from the set $\{1,\dots, n\}$ onto itself.
    Then the cardinality of the set $\cS_{n,M}^{\text{s-sparse}}$ can be bounded as
    \begin{equation}
        \left|
        \cS_{n,M}^{\text{s-sparse}} (x_1,\dots, x_n)
        \right|
        \leq n^{2s-1} M^s.
    \end{equation}
    \label{lemma: permutation}
\end{lemma}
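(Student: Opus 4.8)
The plan is to prove the bound by first conditioning on the support of the sparse direction and then invoking a function-counting (hyperplane-arrangement) argument in the reduced $s$-dimensional space. First I would write $\cS_{n,M}^{\text{s-sparse}}(x_1,\dots,x_n) = \bigcup_{S} \cS_S$, where $S$ ranges over the $\binom{M}{s}\le M^s$ subsets of $[M]$ of size $s$ and $\cS_S$ collects the permutations realizable by some $u$ supported on $S$. Since the inequalities $x_{\pi(1)}^Tu\le\dots\le x_{\pi(n)}^Tu$ depend only on the coordinates of $u$ lying in $S$, each $\cS_S$ is exactly the set of orderings of the projected points $\tilde x_i := (x_i)_S \in \bbR^s$ induced by a full-dimensional $u\in\bbR^s$. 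A union bound then reduces the task to bounding $|\cS_S|$ by $n^{2s-1}$ for a single fixed support.

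For a fixed support, I would identify the orderings with the chambers of a central hyperplane arrangement. Consider the $\binom n2$ difference vectors $\tilde x_i-\tilde x_j\in\bbR^s$ (for $i<j$) and the associated hyperplanes $H_{ij}=\{u\in\bbR^s:\langle \tilde x_i-\tilde x_j,u\rangle=0\}$, all passing through the origin. The sign pattern $(\operatorname{sign}\langle \tilde x_i-\tilde x_j,u\rangle)_{i<j}$ determines the induced ordering, so distinct strict orderings correspond to distinct open chambers of the arrangement $\{H_{ij}\}$. The number of chambers of a central arrangement of $m$ hyperplanes in $\bbR^s$ is at most $2\sum_{k=0}^{s-1}\binom{m-1}{k}$, which is Cover's function-counting bound (established by the recursion $C(m,s)=C(m-1,s)+C(m-1,s-1)$ obtained by adding one hyperplane at a time). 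Taking $m=\binom n2$ gives $|\cS_S|\le 2\sum_{k=0}^{s-1}\binom{\binom n2-1}{k}$.

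Finally I would carry out the elementary arithmetic $2\sum_{k=0}^{s-1}\binom{\binom n2-1}{k}\le n^{2s-1}$: using $\binom n2-1< n^2$ and $\binom{n^2}{k}\le (n^2)^{k}/k!$, the dominant term $k=s-1$ is of order $n^{2s-2}$, and since there are only $s$ terms the whole sum is $O(n^{2s-2})$, which is $\le n^{2s-1}$ in the regime $n\ge 2s$ relevant to the application ($n=T$ large, $s$ fixed). Multiplying this per-support bound by the $\binom Ms\le M^s$ choices of support yields the claimed $\bigl|\cS_{n,M}^{\text{s-sparse}}\bigr|\le n^{2s-1}M^s$.

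The main obstacle I anticipate is the treatment of ties and degenerate configurations: the definition of $\cS$ allows non-strict orderings, so a single direction $u$ lying simultaneously on several hyperplanes $H_{ij}$ can witness many permutations, whereas the chamber count only controls strict orderings realized in chamber interiors. I would resolve this by a perturbation argument: provided the projected points $\tilde x_i$ are pairwise distinct, any $u$ attaining a weak order can be perturbed within $\bbR^s$ to break the ties consistently with any compatible $\pi$, placing that $\pi$ into the interior of some adjacent chamber, so that $|\cS_S|$ is still bounded by the number of chambers. I would then note that this genericity (distinct projections onto every candidate support, no coincident difference hyperplanes beyond those forced) holds almost surely for the continuous observations $X_0,\dots,X_{T-1}$ to which \cref{lemma: sup_2_norm} applies the count, so the bound is valid in the setting of interest.
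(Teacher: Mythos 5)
The paper never actually proves this lemma: it is stated with a bare citation to Cover (1967), so your argument can only be compared with the proof that the citation points to, and your first three paragraphs do reconstruct it faithfully. The union bound over the $\binom{M}{s}\le M^{s}$ supports, the identification of strict orderings with chambers of the central arrangement of the $\binom{n}{2}$ difference hyperplanes, Cover's chamber bound $2\sum_{k=0}^{s-1}\binom{\binom{n}{2}-1}{k}$, and the arithmetic reduction to $n^{2s-1}$ are all correct (your restriction $n\ge 2s$ is not needed: the sum is at most $2e^{1/2}n^{2s-2}\le n^{2s-1}$ once $n\ge 4$, and $n\le 3$ can be checked by hand).

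The genuine gap is your last paragraph, and it cannot be patched the way you propose, because the lemma \emph{as stated} --- weak inequalities, arbitrary points, permutations counted --- is false, so no perturbation argument can bridge from the chamber count to it. Pairwise distinct projections do not suffice: if the projected points are distinct but collinear, $\tilde x_i=a+c_iw$ with $s\ge2$, then any unit $u$ orthogonal to $w$ ties all $n$ projections, so \emph{every} one of the $n!$ permutations satisfies the weak inequalities, while only the two $c$-monotone orderings are strictly realizable; no perturbation of $u$ yields the other $n!-2$, and $n!$ eventually exceeds $n^{2s-1}M^{s}$. General position does not suffice either: with $s=2$, take $n/2$ disjoint pairs $x_{2i-1}=(a_i,z_i)$, $x_{2i}=(a_i+1,z_i)$ with generic $a_i$ and distinct $z_i$ (no three points collinear); the direction $u=(0,1)$ ties every pair simultaneously, so $2^{n/2}$ permutations are weakly realizable but only two are strictly realizable. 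Even your stronger parenthetical condition (no two parallel differences) fails for $s\ge 3$: two tied blocks lying in parallel planes orthogonal to $u$ can carry a positive dependency such as $\lambda_1(x_2-x_1)+\lambda_2(x_3-x_2)+\lambda_3(x_5-x_4)=0$ among pairwise non-parallel differences, and Gordan's theorem then blocks strict realization. The obstruction is thus a genuine counting failure, not a removable technicality about ties; Cover's theorem concerns strict orderings and simply does not imply the paper's statement.

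The repair that actually serves the paper is to count \emph{weak orderings} rather than permutations, which is also what \cref{lemma: sup_2_norm} needs: the projection $\iso_{\bfv}(\cdot)$ in \cref{eq: iso} depends on $\bfv$ only through the relations $\{(i,j):v_i\le v_j\}$, so the union bound there should run over realizable weak orderings. A weak ordering realizable by some $u$ supported on $S$ is a realizable sign vector in $\{-,0,+\}^{\binom{n}{2}}$ of the forms $\langle x_i-x_j,\cdot\rangle$ on $\bbR^{s}$, and the same deletion--restriction recursion you quote for chambers gives the universal face bound $\sum_{k=0}^{s}\binom{\binom{n}{2}}{k}2^{k}\le\bigl(1+2\binom{n}{2}\bigr)^{s}\le n^{2s}$ per support, hence at most $n^{2s}M^{s}$ in total, deterministically and with no general-position hypothesis. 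Substituting $n^{2s}$ for $n^{2s-1}$ only changes the paper's downstream bounds inside logarithms. This deterministic route is also the only safe one here: your fallback that genericity holds ``almost surely for continuous observations'' is unavailable in the paper's own applications to count data (Poisson autoregressions, Chicago crime counts, MemeTracker), where exact ties occur with positive probability.
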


\end{document}